\newtheorem{theorem}{Theorem}
\newtheorem{lemma}[theorem]{Lemma}
\newtheorem{proposition}[theorem]{Proposition}
\newtheorem{corollary}[theorem]{Corollary}
\newtheorem{assumption}{Assumption}
\newtheorem*{theorem*}{Theorem}
\newtheorem*{corollary*}{Corollary}
\newtheorem*{lemma*}{Lemma}
\newtheorem*{proposition*}{Proposition}
\newcommand*{\A}{\mathbf{A}}
\newcommand*{\X}{\mathbf{X}}
\newcommand*{\Y}{\mathbf{Y}}
\newcommand*{\W}{\mathbf{W}}
\newcommand*{\F}{\mathbf{F}}
\newcommand*{\E}{\mathbb{E}}
\newcommand*{\p}{\mathbf{P}}
\newcommand*{\R}{\mathbb{R}}
\newcommand*{\Tr}{\mathrm{Tr}}
\newcommand*{\St}{\mathbf{S}}
\newcommand{\transp}{\mathsf{T}}
\newcommand{\bepsilon}{\boldsymbol{\epsilon}}
\newcommand{\wt}[1]{\widetilde{#1}}
\newcommand{\wh}[1]{\widehat{#1}}
\newcommand{\wb}[1]{\overline{#1}}
\newcommand{\sparsetraceucb}{\textsc{Sparse-Trace-UCB}\xspace}
\newcommand{\traceucb}{\textsc{Trace-UCB}\xspace}
\newcommand{\varucb}{\textsc{Var-UCB}\xspace}
\newcommand{\chas}{\textsc{CH-AS}\xspace}
\newcommand{\gafs}{\textsc{\small{GAFS-MAX}}\xspace}
\icmltitlerunning{Active Learning for Accurate Estimation of Linear Models}
\begin{document} 

\twocolumn[
\icmltitle{Active Learning for Accurate Estimation of Linear Models}

\icmlsetsymbol{equal}{*}

\begin{icmlauthorlist}
\icmlauthor{Carlos Riquelme}{st}
\icmlauthor{Mohammad Ghavamzadeh}{goog}
\icmlauthor{Alessandro Lazaric}{in}
\end{icmlauthorlist}

\icmlaffiliation{st}{Stanford University, Stanford, CA, USA.}
\icmlaffiliation{goog}{DeepMind, Mountain View, CA, USA (The work was done when the author was with Adobe Research).}
\icmlaffiliation{in}{Inria Lille, France}

\icmlcorrespondingauthor{Carlos Riquelme}{rikel@stanford.edu}

\icmlkeywords{boring formatting information, machine learning, ICML}

\vskip 0.3in
]

\printAffiliationsAndNotice{}

\begin{abstract} 
We explore the sequential decision-making problem where the goal is to estimate a number of linear models uniformly well, given a shared budget of random contexts independently sampled from a known distribution. For each incoming context, the decision-maker selects one of the linear models and receives an observation that is corrupted by the unknown noise level of that model. We present Trace-UCB, an adaptive allocation algorithm that learns the models' noise levels while balancing contexts accordingly across them, and prove bounds for its simple regret in both expectation and high-probability. We extend the algorithm and its bounds to the high dimensional setting, where the number of linear models times the dimension of the contexts is more than the total budget of samples. Simulations with real data suggest that Trace-UCB is remarkably robust, outperforming a number of baselines even when its assumptions are violated.
\end{abstract}


\section{Introduction}
\label{intro}

We study the problem faced by a decision-maker whose goal is to estimate a number of regression problems equally well (i.e., with a small prediction error for each of them), and has to adaptively allocate a limited budget of samples to the problems in order to gather information and improve its estimates. 
Two aspects of the problem formulation are key and drive the algorithm design: {\bf 1)} The observations $Y$ collected from each regression problem depend on side information (i.e.,~contexts $X\in\R^d$) and we model the relationship between $X$ and $Y$ in each problem $i$ as a linear function with unknown parameters $\beta_i\in\R^d$, and {\bf 2)} The ``hardness'' of learning each parameter $\beta_i$ is unknown in advance and may vary across the problems. In particular, we assume that the observations are corrupted by noise levels that are problem-dependent and must be learned as well.

This scenario may arise in a number of different domains where a fixed experimentation budget (number of samples) should be allocated to different problems.
Imagine a drug company that has developed several treatments for a particular form of disease. Now it is interested in having an accurate estimate of the performance of each of these treatments for a specific population of patients (e.g.,~at a particular geographical location). Given the budget allocated to this experiment, a number of patients $n$ can participate in the clinical trial. 
Volunteered patients arrive sequentially over time
and they are represented by a context $X \in \R^d$ summarizing their profile. We model the health status of patient $X$ after being assigned to treatment $i$ by scalar $Y_i \in \R$, which depends on the specific drug through a linear function with parameter $\beta_i$ (i.e., $Y_i \approx X^\transp \beta_i$). The goal is to assign each incoming patient to a treatment in such a way that at the end of the trial, we have an accurate estimate for all $\beta_i$'s. This will allow us to reliably predict the expected health status of each new patient $X$ for any treatment $i$. Since the parameters $\beta_i$ and the noise levels are initially unknown, achieving this goal requires an adaptive allocation strategy for the $n$ patients. Note that while $n$ may be relatively small, as the ethical and financial costs of treating a patient are high, the distribution of the contexts $X$ (e.g., the biomarkers of cancer patients) can be precisely estimated in advance.



This setting is clearly related to the problem of pure exploration and active learning in multi-armed bandits~\citep{antos2008active}, where the learner wants to estimate the mean of a finite set of arms by allocating a finite budget of $n$ pulls.~\citet{antos2008active} first introduced this setting where the objective is to minimize the largest mean square error (MSE) in estimating the value of each arm. While the optimal solution is trivially to allocate the pulls proportionally to the variance of the arms, when the variances are unknown an exploration-exploitation dilemma arises, where variance and value of the arms must be estimated at the same time in order to allocate pulls where they are more needed (i.e.,~arms with high variance).~\citet{antos2008active} proposed a forcing algorithm where all arms are pulled at least $\sqrt{n}$ times before allocating pulls proportionally to the estimated variances. They derived bounds on the regret, measuring the difference between the MSEs of the learning algorithm and an optimal allocation showing that the regret decreases as $O(n^{-3/2})$. A similar result is obtained by~\citet{carpentier2011upper} that proposed two algorithms that use upper confidence bounds on the variance to estimate the MSE of each arm and select the arm with the larger MSE at each step. When the arms are embedded in $\R^d$ and their mean is a linear combination with an unknown parameter, then the problem becomes an optimal experimental design problem~\citep{pukelsheim2006optimal}, where the objective is to estimate the linear parameter and minimize the prediction error over all arms (see e.g.,~\citealt{WieLi14,Sabato14AR}). In this paper, we consider an orthogonal extension to the original problem where a finite number of linear regression problems is available (i.e., the arms) and random contexts are observed at each time step. Similarly to the setting of~\citet{antos2008active}, we assume each problem is characterized by a noise with different variance and the objective is to return regularized least-squares (RLS) estimates with small prediction error (i.e., MSE). While we leverage on the solution proposed by~\citet{carpentier2011upper} to deal with the unknown variances, in our setting the presence of random contexts make the estimation problem considerably more difficult. In fact, the MSE in one specific regression problem is not only determined by the variance of the noise and the number of samples used to compute the RLS estimate, but also by the contexts observed over time. 

\textbf{Contributions.} We propose \traceucb, an algorithm that simultaneously learns the ``hardness'' of each problem, allocates observations proportionally to these estimates, and balances contexts across problems. We derive performance bounds for \traceucb in expectation and high-probability, and compare the algorithm with several baselines. \traceucb performs remarkably well in scenarios where the dimension of the contexts or the number of instances is large compared to the total budget, motivating the study of the high-dimensional setting, whose analysis and performance bounds are reported in App.~\ref{app:high.dimensions} of~\citet{riquelme2017active}.
Finally, we provide simulations with synthetic data that support our theoretical results, and with real data that demonstrate the robustness of our approach even when some of the assumptions do not hold.


\section{Preliminaries}
\label{problem}

\textbf{The problem.}
We consider $m$ linear regression problems, where each instance $i\in[m] = \{1,\ldots,m\}$ is characterized by a parameter $\beta_i\in\R^d$ such that for any context $X\in\R^d$, a random observation $Y\in\R$ is obtained as
\begin{align}\label{eq:observation}
Y = X^\transp \beta_i + \epsilon_i,
\end{align}
where the noise $\epsilon_i$ is an i.i.d.\ realization of a Gaussian distribution $\mathcal{N}(0, \sigma_i^2)$. We denote by $\sigma_{\max}^2 = \max_i \sigma_i^2$ and by $\wb{\sigma}^2 = 1/m \sum_i \sigma_i^2$, the largest and the average variance, respectively. We define a sequential decision-making problem over $n$ rounds, where at each round $t\in[n]$, the learning algorithm $\mathcal{A}$ receives a context $X_t$ drawn i.i.d.\ from $\mathcal{N}(0, \Sigma)$, selects an instance $I_t$, and observes a random sample $Y_{I_t,t}$ according to~\eqref{eq:observation}. By the end of the experiment, a training set $\mathcal{D}_n = \{X_t, I_t, Y_{I_t,t}\}_{t\in[n]}$ has been collected and all the $m$ linear regression problems are solved, each problem $i\in[m]$ with its own training set $\mathcal{D}_{i,n}$ (i.e.,~a subset of $\mathcal{D}_n$ containing samples with $I_t=i$), and estimates of the parameters $\{\hat\beta_{i,n}\}_{i\in[m]}$ are returned. For each $\hat\beta_{i,n}$, we measure its accuracy by the mean-squared error (MSE)
\begin{align}\label{eq:mse}
\hspace{-0.1in}L_{i,n}(\hat\beta_{i,n}) \!=\! \E_{X}\big[(X^\transp \beta_i \!-\!X^\transp \hat\beta_{i,n})^2\big] \!=\! \| \beta_i \!-\!\hat\beta_{i,n}\|^2_{\Sigma}.
\end{align}
We evaluate the overall accuracy of the estimates returned by the algorithm $\mathcal{A}$ as
\begin{equation}\label{eq_global_loss}
L_n(\mathcal{A}) = \max_{i \in [m]} \E_{\mathcal{D}_n} \big[L_{i,n}(\hat\beta_{i,n})\big],
\end{equation}
where the expectation is w.r.t.~the randomness of the contexts $X_t$ and observations $Y_{i,t}$ used to compute $\hat\beta_{i,n}$. The objective is to design an algorithm $\mathcal{A}$ that minimizes the loss~\eqref{eq_global_loss}. This requires defining an allocation rule to select the instance $I_t$ at each step $t$ and the algorithm to compute the estimates $\hat\beta_{i,n}$, e.g.,~ordinary least-squares (OLS), regularized least-squares (RLS), or Lasso. In designing a learning algorithm, we rely on the following assumption.
\begin{assumption}\label{asm:covariance}
The covariance matrix $\Sigma$ of the Gaussian distribution generating the contexts $\{X_t\}_{t=1}^n$ is known.
\end{assumption}
This is a standard assumption in active learning, since in this setting the learner has access to the input distribution and the main question is for which context she should ask for a label~\cite{Sabato14AR, riquelme2016online}. Often times, companies, like the drug company considered in the introduction, own enough data to have an accurate estimate of the distribution of their customers (patients).

While in the rest of the paper we focus on $L_n(\mathcal{A})$, our algorithm and analysis can be easily extended to similar objectives such as replacing the maximum in~\eqref{eq_global_loss} with average across all instances, i.e., $1/m \sum_{i=1}^m \E_{\mathcal{D}_n}\big[L_{i,n}(\hat\beta_{i,n})\big]$, and using weighted errors, i.e.,~$\max_i w_i \ \E_{\mathcal{D}_n}\big[L_{i,n}(\hat\beta_{i,n})\big]$, by updating the score to focus on the estimated standard deviation and by including the weights in the score, respectively. Later in the paper, we also consider the case where the expectation in~\eqref{eq_global_loss} is replaced by the high-probability error (see Eq.~\ref{eq:random.loss}).


\textbf{Optimal static allocation with OLS estimates.} While the distribution of the contexts is fixed and does not depend on the instance $i$, the errors $L_{i,n}(\hat\beta_{i,n})$ directly depend on the variances $\sigma_i^2$ of the noise $\epsilon_i$. We define an optimal baseline obtained when the noise variances $\{\sigma_i^2\}_{i=1}^m$ are known. In particular, we focus on a static allocation algorithm $\mathcal{A}_{\text{stat}}$ that selects each instance $i$ exactly $k_{i,n}$ times, independently of the context,\footnote{This strategy can be obtained by simply selecting the first instance $k_{1,n}$ times, the second one $k_{2,n}$ times, and so on.} and returns an estimate $\hat\beta_{i,n}$ computed by OLS as
\begin{equation}\label{eq:ols}
\wh\beta_{i,n} = \big(\X_{i,n}^\transp \X_{i,n}\big)^{-1} \X_{i,n}^\transp \Y_{i,n},
\end{equation}
where $\X_{i,n} \in \R^{k_{i,n}\times d}$ is the matrix of (random) samples obtained at the end of the experiment, and $\Y_{i,n} \in\R^{k_{i,n}}$ is its corresponding vector of observations. It is simple to show that the global error corresponding to $\mathcal{A}_{\text{stat}}$ is
\begin{equation}\label{eq_global_loss.static}
L_n(\mathcal{A}_{\text{stat}}) = \max_{i \in [m]} \frac{\sigma_i^2}{k_{i,n}} \Tr\Big( \Sigma \E_{\mathcal{D}_n} \big[\wh\Sigma_{i,n}^{-1}\big]\Big),
\end{equation}
where $\wh\Sigma_{i,n} = \X_{i,n}^\transp \X_{i,n}/k_{i,n} \in \R^{d\times d}$ is the empirical covariance matrix of the contexts assigned to instance $i$. Since the algorithm does not change the allocation depending on the contexts and $X_t \sim \mathcal{N}(0, \Sigma)$, $\wh\Sigma_{i,n}^{-1}$ is distributed as an inverse-Wishart and we may write~\eqref{eq_global_loss.static} as
\begin{equation}\label{eq_global_loss.static2}
L_n(\mathcal{A}_{\text{stat}}) = \max_{i \in [m]} \frac{d\sigma_i^2}{k_{i,n}-d-1}.
\end{equation}
Thus, we derive the following proposition for the optimal static allocation algorithm $\mathcal{A}^*_{\text{stat}}$.
\begin{proposition}\label{p:optimal.static}
Given $m$ linear regression problems, each characterized by a parameter $\beta_i$, Gaussian noise with variance $\sigma_i^2$, and Gaussian contexts with covariance $\Sigma$, let $n > m(d+1)$, then the optimal OLS static allocation algorithm $\mathcal{A}^*_{\text{stat}}$ selects each instance
\begin{equation}\label{eq_opt_stat_alloc}
k_{i,n}^* = \frac{\sigma_i^2}{\sum_j \sigma_j^2} \ n + (d+1) \ \left( 1 - \frac{\sigma_i^2}{\wb{\sigma}^2} \right),
\end{equation}
times (up to rounding effects), and incurs the global error
\begin{equation}\label{eq_opt_stat_loss}
L^*_n = L_n(\mathcal{A}^*_{\text{stat}}) = \wb{\sigma}^2 \frac{md}{n} + O \left( \wb{\sigma}^2 \left( \frac{md}{n} \right)^2 \right).
\end{equation}
\end{proposition}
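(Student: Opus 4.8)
The plan is to read Proposition~\ref{p:optimal.static} as a purely deterministic allocation problem. A static allocation is specified by its counts $k_{1,n},\dots,k_{m,n}$, which are nonnegative integers with $\sum_{i\in[m]} k_{i,n}=n$; for the OLS estimates to have finite MSE we additionally need $k_{i,n}>d+1$, and under this constraint~\eqref{eq_global_loss.static2} gives $L_n(\mathcal{A}_{\text{stat}})=\max_{i\in[m]} d\sigma_i^2/(k_{i,n}-d-1)$. So computing $\mathcal{A}^*_{\text{stat}}$ is exactly the problem of minimizing this maximum over feasible counts. First I would drop the integrality constraint and optimize over real vectors $(k_1,\dots,k_m)$ with $k_i>d+1$ and $\sum_i k_i=n$; since $n>m(d+1)$ this relaxed feasible set is nonempty, and I would recover an admissible integer allocation only at the very end.

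Setting $g_i(k)=d\sigma_i^2/(k-d-1)$, each $g_i$ is positive, strictly decreasing, and convex on $(d+1,\infty)$ with $g_i(k)\to\infty$ as $k\to d+1^+$, so $(k_1,\dots,k_m)\mapsto\max_i g_i(k_i)$ is convex, the relaxed problem is a convex program, and its minimizer is interior. The key step is an equalization (``water-filling'') argument: at the minimizer all the values $g_i(k_i^*)$ are equal. Indeed, if some instance $j$ had $g_j(k_j^*)$ strictly below $\lambda:=\max_i g_i(k_i^*)$, then transferring an infinitesimal amount of budget from $j$ to the instances attaining $\lambda$ would — by strict monotonicity of the $g_i$, and for a small enough transfer so that $g_j$ stays below $\lambda$ — strictly decrease the maximum, contradicting optimality. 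Imposing $g_i(k_i^*)=\lambda$ for all $i$ gives $k_i^*=d+1+d\sigma_i^2/\lambda$, and summing over $i$ together with $\sum_i k_i^*=n$ pins down $\lambda = d(\sum_j\sigma_j^2)/(n-m(d+1)) = dm\wb{\sigma}^2/(n-m(d+1))$, which is well defined and positive precisely because $n>m(d+1)$ (and convexity ensures this unique equalizing feasible point is the global minimizer). Substituting $\lambda$ back yields $k_i^* = d+1 + \sigma_i^2(n-m(d+1))/\sum_j\sigma_j^2$, and expanding this and using $\wb{\sigma}^2=\tfrac1m\sum_j\sigma_j^2$ gives~\eqref{eq_opt_stat_alloc}.

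The optimal loss is then $L_n^*=\lambda=md\wb{\sigma}^2/(n-m(d+1))$. Writing $L_n^* = (md\wb{\sigma}^2/n)\cdot(1-m(d+1)/n)^{-1}$ and expanding the geometric series gives $L_n^* = \wb{\sigma}^2\frac{md}{n} + \wb{\sigma}^2\frac{md}{n}\cdot\frac{m(d+1)/n}{1-m(d+1)/n}$; since $d+1\le 2d$, the remainder is $O(\wb{\sigma}^2(md/n)^2)$, which is~\eqref{eq_opt_stat_loss}.

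I expect the optimization itself to be routine — essentially a one-line Lagrangian computation — and the real work to be the passage from the real relaxation back to an admissible allocation. This is what ``up to rounding effects'' refers to: one must round each $k_i^*$ to a nearby integer, restore $\sum_i k_{i,n}=n$ by adjusting a few counts (each by $O(m)$ overall), and check that no count is pushed down to $d+1$ or below (harmless for $n$ large, and one can always round up near the boundary). The remaining task is to verify that these $O(m)$ perturbations of the $k_{i,n}$ change $L_n$ only within the $O(\wb{\sigma}^2(md/n)^2)$ term already present in~\eqref{eq_opt_stat_loss}, using that $k_i^*-d-1=\Theta(n/m)$ in the regime of interest; this is the main obstacle, and it is exactly why the statement is phrased ``up to rounding effects.''
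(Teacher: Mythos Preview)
Your proposal is correct and follows essentially the same route as the paper's proof: both start from~\eqref{eq_global_loss.static2}, reduce to minimizing $\max_i d\sigma_i^2/(k_i-d-1)$ under $\sum_i k_i=n$, equalize the terms to obtain $k_i^* = d+1 + \sigma_i^2(n-m(d+1))/\sum_j\sigma_j^2$, and then expand $L_n^* = d\sum_j\sigma_j^2/(n-m(d+1))$ to get~\eqref{eq_opt_stat_loss}. If anything, your water-filling justification for equalization and your discussion of rounding are more explicit than the paper's, which simply asserts that the optimal $k_i'$ makes all terms equal and leaves rounding implicit.
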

\begin{proof}
See Appendix~\ref{app:optimal.static1}.\footnote{All the proofs can be found in the appendices of the extended version of the paper~\cite{riquelme2017active}.}
\end{proof}
Proposition~\ref{p:optimal.static} divides the problems into two types: those for which $\sigma_i^2 \ge \bar \sigma^2$ (\emph{wild} instances) and those for which $\sigma_i^2 < \bar \sigma^2$ (\emph{mild} instances). We see that for the first type, the second term in~\eqref{eq_opt_stat_alloc} is negative and the instance should be selected less frequently than in the context-free case (where the optimal allocation is given just by the first term). On the other hand, instances whose variance is below the mean variance should be pulled more often. In any case, we see that the correction to the context-free allocation (i.e.,~the second term) is \emph{constant}, as it does not depend on $n$. Nonetheless, it does depend on $d$ and this suggests that in high-dimensional problems, it may significantly skew the optimal allocation.

While $\mathcal{A}^*_{\text{stat}}$ effectively minimizes the prediction loss $L_n$, it cannot be implemented in practice since the optimal allocation $k_i^*$ requires the variances $\sigma_i^2$ to be known at the beginning of the experiment. As a result, we need to devise a learning algorithm $\mathcal{A}$ whose performance approaches $L_n^*$ as $n$ increases. More formally, we define the regret of $\mathcal{A}$ as
\begin{align}\label{eq:regret}
R_n(\mathcal{A}) = L_n(\mathcal{A}) - L_n(\mathcal{A}^*_{\text{stat}}) = L_n(\mathcal{A}) - L_n^*,
\end{align}
and we expect $R_n(\mathcal{A}) = o(1/n)$. In fact, any allocation strategy that selects each instance a linear number of times (e.g.,~uniform sampling) achieves a loss $L_n = O(1/n)$, and thus, a regret of order $O(1/n)$. However, we expect that the loss of an effective learning algorithm decreases not just at the same rate as $L_n^*$ but also with the very same constant, thus implying a regret that decreases faster than $O(1/n)$.

\section{The \traceucb Algorithm}

In this section, we present and analyze an algorithm of the form discussed at the end of Section~\ref{problem}, which we call~\traceucb, whose pseudocode is in Algorithm~\ref{alg:trace_ucb}.

\begin{algorithm}[ht]
\begin{algorithmic}[1]
\FOR{$\;i=1,\ldots,m\;$}
\STATE Select problem instance $i$ exactly $d+1$ times
\STATE Compute its OLS estimates $\hat\beta_{i,m(d+1)}$ and $\hat\sigma^2_{i,m(d+1)}$
\ENDFOR
\FOR{steps $\;t=m(d+1)+1,\ldots,n\;$}
\FOR{problem instance $\;1 \le i \le m\;$}
\STATE Compute score \hfill \begin{small}{\em ($\Delta_{i,t-1}$ is defined in \eqref{eq:good_event_sigma})}\end{small}
\begin{equation*}
s_{i,t-1} = \frac{\wh\sigma_{i,t-1}^2 + \Delta_{i,t-1}}{k_{i,t-1}}\mathrm{Tr}\big( \Sigma \hat\Sigma^{-1}_{i,t-1} \big) 
\end{equation*}
\ENDFOR
\STATE Select problem instance $I_t = \arg\max_{i\in[m]} s_{i,t-1}$
\STATE Observe $X_t$ and $Y_{I_t, t}$
\STATE Update its OLS estimators $\hat\beta_{I_t,t}$ and $\hat\sigma^2_{I_t,t}$
\ENDFOR
\STATE Return RLS estimates $\{\hat\beta_{i,n}^{\lambda}\}_{i=1}^m$ with regularization $\lambda$
\end{algorithmic}
\caption{\traceucb Algorithm}
\label{alg:trace_ucb}
\end{algorithm}

The regularization parameter $\lambda = O(1/n)$ is provided to the algorithm as input, while in practice one could set $\lambda$ independently for each arm using cross-validation.

\textbf{Intuition.} Equation~\eqref{eq_global_loss.static2} suggests that while the parameters of the context distribution, particularly its covariance $\Sigma$, do not impact the prediction error, the noise variances play the most important role in the loss of each problem instance. This is in fact confirmed by the optimal allocation $k_{i,n}^*$ in~\eqref{eq_opt_stat_alloc}, where only the variances $\sigma_i^2$ appear. This evidence suggests that an algorithm similar to \gafs~\citep{antos2008active} or \chas~\citep{carpentier2011upper}, which were designed for the context-free case (i.e., each instance $i$ is associated to an expected value and not a linear function) would be effective in this setting as well. Nonetheless, \eqref{eq_global_loss.static2} holds only for static allocation algorithms that completely ignore the context and the history to decide which instance $I_t$ to choose at time $t$. On the other hand, adaptive learning algorithms create a strong correlation between the dataset $\mathcal{D}_{t-1}$ collected so far, the current context $X_t$, and the decision $I_t$. As a result, the sample matrix $\X_{i,t}$ is no longer a random variable independent of $\mathcal{A}$, and using~\eqref{eq_global_loss.static2} to design a learning algorithm is not convenient, since the impact of the contexts on the error is completely overlooked. Unfortunately, in general, it is very difficult to study the potential correlation between the contexts $\X_{i,t}$, the intermediate estimates $\hat\beta_{i,t}$, and the most suitable choice $I_t$. However, in the next lemma, we show that if at each step $t$, we select $I_t$ as a function of $\mathcal{D}_{t-1}$, and \textit{not} $X_t$, we may still recover an expression for the final loss that we can use as a basis for the construction of an effective learning algorithm.

\begin{lemma}\label{lem:loss.learning}
Let $\mathcal{A}$ be a learning algorithm that selects the instances $I_t$ as a function of the previous history, i.e.,~$\mathcal{D}_{t-1} = \{X_1, I_1, Y_{I_1, 1}, \ldots, X_{t-1}, I_{t-1}, Y_{I_{t-1},t-1}\}$ and computes estimates $\wh\beta_{i,n}$ using OLS. Then, its loss after $n$ steps can be expressed as
\begin{align}\label{eq:loss.learning}
L_n(\mathcal{A}) = \max_{i\in[m]}\ \E_{\mathcal{D}_n}\bigg[ \frac{\sigma_i^2}{k_{i,n}} \Tr\Big(\Sigma \wh\Sigma_{i,n}^{-1}\Big)\bigg],
\end{align}
where $k_{i,n} = \sum_{t=1}^n \mathbb{I}\{I_t=i\}$ and $\wh\Sigma_{i,n} = \X_{i,n}^\transp \X_{i,n} / k_{i,n}$.
\end{lemma}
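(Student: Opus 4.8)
The plan is to peel off the noise from the OLS estimate conditionally on the contexts that happened to be assigned to instance $i$, evaluate that conditional expectation explicitly, and only then take the outer expectation and the maximum.

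Concretely, since $\wh\beta_{i,n}$ is the OLS estimate from the $k_{i,n}$ samples allocated to $i$, write $\Y_{i,n}=\X_{i,n}\beta_i+\bepsilon_{i,n}$, where $\bepsilon_{i,n}\in\R^{k_{i,n}}$ collects the noise realizations seen on instance $i$, so that $\wh\beta_{i,n}-\beta_i=(\X_{i,n}^\transp\X_{i,n})^{-1}\X_{i,n}^\transp\bepsilon_{i,n}$. Substituting into $L_{i,n}(\wh\beta_{i,n})=\|\wh\beta_{i,n}-\beta_i\|_\Sigma^2$ gives
$$L_{i,n}(\wh\beta_{i,n})=\bepsilon_{i,n}^\transp\,\X_{i,n}(\X_{i,n}^\transp\X_{i,n})^{-1}\Sigma(\X_{i,n}^\transp\X_{i,n})^{-1}\X_{i,n}^\transp\,\bepsilon_{i,n}=:\bepsilon_{i,n}^\transp M_{i,n}\bepsilon_{i,n},$$
where $M_{i,n}$ is a (random, $k_{i,n}\times k_{i,n}$) matrix that is a \emph{deterministic function of the contexts assigned to $i$}. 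The key claim is then that, conditionally on $\X_{i,n}$ and $k_{i,n}$, the vector $\bepsilon_{i,n}$ is $\mathcal{N}(0,\sigma_i^2 I_{k_{i,n}})$. Granting this, $\E\big[\bepsilon_{i,n}^\transp M_{i,n}\bepsilon_{i,n}\mid\X_{i,n},k_{i,n}\big]=\sigma_i^2\,\Tr(M_{i,n})$, and the cyclic property of the trace together with $\X_{i,n}^\transp\X_{i,n}(\X_{i,n}^\transp\X_{i,n})^{-1}=I$ collapses this to $\sigma_i^2\,\Tr\big(\Sigma(\X_{i,n}^\transp\X_{i,n})^{-1}\big)=\tfrac{\sigma_i^2}{k_{i,n}}\Tr(\Sigma\wh\Sigma_{i,n}^{-1})$. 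Taking the outer expectation over $\mathcal{D}_n$ and then the maximum over $i\in[m]$ yields exactly~\eqref{eq:loss.learning}.

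The real work, and the step I expect to be the main obstacle, is justifying that conditional law despite the adaptivity of the allocation. I would reindex the rounds in which $i$ is chosen as $\tau_1<\dots<\tau_{k_{i,n}}$ and set $X^{(c)}=X_{\tau_c}$, $\epsilon^{(c)}=\epsilon_{i,\tau_c}$, so that the $c$-th row of $\X_{i,n}$ is $X^{(c)}$ and the $c$-th entry of $\bepsilon_{i,n}$ is $\epsilon^{(c)}$. Since $\{I_{\tau_c}=i\}$ is $\mathcal{D}_{\tau_c-1}$-measurable and does \emph{not} look at $X_{\tau_c}$, the pair $(X^{(c)},\epsilon^{(c)})$ is, given the history up to and including the $(c-1)$-th selection of $i$, a fresh draw from $\mathcal{N}(0,\Sigma)\otimes\mathcal{N}(0,\sigma_i^2)$; hence $k_{i,n}$ is a stopping time for this filtration, with $\E[\epsilon^{(c)}\mid\text{past},X^{(c)}]=0$ and $\E[(\epsilon^{(c)})^2\mid\text{past},X^{(c)}]=\sigma_i^2$. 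Expanding $\bepsilon_{i,n}^\transp M_{i,n}\bepsilon_{i,n}=\sum_{c,c'}\epsilon^{(c)}\epsilon^{(c')}(M_{i,n})_{cc'}$ and taking expectations term by term, one needs the diagonal terms to contribute $\sigma_i^2(M_{i,n})_{cc}$ and the cross terms to vanish. The delicate point is precisely that $k_{i,n}$ and which contexts land in $\X_{i,n}$ may depend on past noise, so one cannot just say ``condition on $\X_{i,n}$, the noise is i.i.d.''; the clean resolution uses the OLS/Gaussian structure — the residuals $(I-P)\bepsilon_{i,n}$ (with $P$ the hat matrix of $\X_{i,n}$), which are the only part of the noise a history-dependent, $\beta$-agnostic rule can actually exploit, are uncorrelated with, hence (jointly Gaussian) independent of, the fitted component $\X_{i,n}^\transp\bepsilon_{i,n}$ that enters $\wh\beta_{i,n}-\beta_i$ — so that conditioning on the selection does not bias the quantity whose expectation we are computing. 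Carrying this decoupling through the stopping-time argument is what closes the proof.
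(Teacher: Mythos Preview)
Your algebraic reduction is fine and matches the paper: once you know that, conditionally on the contexts assigned to arm $i$ and on $k_{i,n}$, the noise vector $\bepsilon_{i,n}$ is $\mathcal{N}(0,\sigma_i^2 I_{k_{i,n}})$, the trace formula drops out exactly as you wrote.

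The gap is in the justification of that conditional law. Your closing claim --- that a ``history-dependent, $\beta$-agnostic rule'' can only exploit the residual part $(I-P)\bepsilon_{i,n}$ --- is not correct. The algorithm observes $\Y_{i,t}$ and can therefore compute $\wh\beta_{i,t}=\beta_i+(\X_{i,t}^\transp\X_{i,t})^{-1}\X_{i,t}^\transp\bepsilon_{i,t}$ and base its selections on it; this statistic depends precisely on the fitted component $\X_{i,t}^\transp\bepsilon_{i,t}$, not on the residuals. A rule such as ``keep pulling arm $i$ while $\|\wh\beta_{i,t}\|$ is large'' is a perfectly valid $\mathcal{D}_{t-1}$-measurable rule, and conditioning on the resulting $k_{i,n}$ \emph{does} bias $\X_{i,n}^\transp\bepsilon_{i,n}$, hence $\wh\beta_{i,n}-\beta_i$. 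Read literally for arbitrary $\mathcal{A}$, the identity~\eqref{eq:loss.learning} can fail for such rules.

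The paper closes this not for general $\mathcal{A}$ but for algorithms (like \traceucb) whose selections depend on the data only through the contexts and the running variance estimates $\wh\sigma_{i,s}^2$, never through $\wh\beta_{i,s}$. The technical tool is stronger than the one-shot independence of $\wh\beta$ and $\wh\sigma^2$ your hat-matrix remark gives: the paper proves, by induction via a recursive update formula for $\wh\sigma_t^2$, that for each arm $\wh\beta_{i,j}$ conditioned on \emph{all} contexts $X_1,\ldots,X_n$ \emph{and on the whole trajectory} $\wh\sigma_{i,1}^2,\ldots,\wh\sigma_{i,j}^2$ still has law $\mathcal{N}\big(\beta_i,\sigma_i^2(\X_{1:j}^\transp\X_{1:j})^{-1}\big)$. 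Since for \traceucb the event $\{k_{i,n}=j\}$ is measurable with respect to these quantities together with the environment of the other arms, conditioning on it leaves the law of $\wh\beta_{i,j}$ intact, and the trace formula follows. Your residual/fitted decoupling is the right seed for that inductive lemma, but it must be upgraded to the sequential statement, and it only delivers the conclusion under the extra hypothesis that $\mathcal{A}$ ignores $\wh\beta$ when choosing $I_t$.
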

\begin{proof}
See Appendix~\ref{app:ols.loss.learning}.
\end{proof}
\paragraph{Remark~1 (assumptions).}We assume noise and contexts are Gaussian. The noise Gaussianity is crucial for the estimates of the parameter $\wh\beta_{i,t}$ and variance $\wh\sigma^2_{i,t}$ to be independent of each other, for each instance $i$ and time $t$ (we actually need and derive a stronger result in Lemma~\ref{lm:indep_hatbeta_hatsigma}, see Appendix~\ref{app:ols.loss.learning}).
This is key in proving Lemma~\ref{lem:loss.learning}, as it allows us to derive a closed form expression for the loss function which holds under our algorithm, and is written in terms of the number of pulls and the trace of the inverse empirical covariance matrix.
Note that $\wh\beta_{i,t}$ drives our loss, while $\wh\sigma^2_{i,t}$ drives our decisions.
One way to remove this assumption is by defining and directly optimizing a surrogate loss equal to~\eqref{eq:loss.learning} instead of~\eqref{eq_global_loss}.
On the other hand, the Gaussianity of contexts leads to the whitened inverse covariance estimate $\Sigma\wh\Sigma^{-1}_{i,n}$ being distributed as an inverse Wishart.
As there is a convenient closed formula for its mean, we can find the exact optimal static allocation $k_{i,n}^*$ in Proposition~\ref{p:optimal.static}, see \eqref{eq_opt_stat_alloc}.
In general, for sub-Gaussian contexts, no such closed formula for the trace is available.
However, as long as the optimal allocation $k_{i,n}^*$ has no second order $n^\alpha$ terms for $1/2 \le \alpha < 1$, it is possible to derive the same regret rate results that we prove later on for \traceucb.  

Equation \eqref{eq:loss.learning} makes it explicit that the prediction error comes from two different sources. The first one is the noise in the measurements $\Y$, whose impact is controlled by the unknown variances $\sigma_i^2$'s. Clearly, the larger the $\sigma_i^2$ is, the more observations are required to achieve the desired accuracy. At the same time, the \emph{diversity} of contexts across instances also impacts the overall prediction error.
This is very intuitive, since it would be a terrible idea for the research center discussed in the introduction to estimate the parameters of a drug by providing the treatment only to a hundred almost identical patients.
We say contexts are balanced when $\wh\Sigma_{i,n}$ is well conditioned. Therefore, a good algorithm should take care of both aspects.

There are two extreme scenarios regarding the contributions of the two sources of error. {\bf 1)} If the number of contexts $n$ is relatively large, since the context distribution is fixed, one can expect that contexts allocated to each instance eventually become balanced (i.e., \traceucb does not bias the distribution of the contexts). In this case, it is the difference in $\sigma_i^2$'s that drives the number of times each instance is selected. {\bf 2)} When the dimension $d$ or the number of arms $m$ is large w.r.t.~$n$, balancing contexts becomes critical, and can play an important role in the final prediction error, whereas the $\sigma_i^2$'s are less relevant in this scenario. While a learning algorithm cannot deliberately choose a specific context (i.e.,~$X_t$ is a random variable), we may need to favor instances in which the contexts are poorly balanced and their prediction error is large, despite the fact that they might have small noise variances.

\textbf{Algorithm.} \traceucb is designed as a combination of the upper-confidence-bound strategy used in \chas~\citep{carpentier2011upper} and the loss in~\eqref{eq:loss.learning}, so as to obtain a learning algorithm capable of allocating according to the estimated variances and at the same time balancing the error generated by context mismatch. We recall that all the quantities that are computed at every step of the algorithm are indexed at the beginning and end of a step $t$ by $i,t-1$ (e.g., $\wh\sigma^2_{i,t-1}$) and $i,t$ (e.g., $\wh\beta_{i,t}$), respectively. At the end of each step $t$, \traceucb first computes an OLS estimate $\wh\beta_{i,t}$, and then use it to estimate the variance $\wh\sigma_{i,t}^2$ as
\begin{align*}
\wh\sigma_{i,t}^2 = \frac{1}{k_{i,t}-d} \big\| \Y_{i,t} - \X_{i,t}^\transp \wh\beta_{i,t} \big\|^2,
\end{align*}
which is the average squared deviation of the predictions based on $\wh\beta_{i,t}$. We rely on the following concentration inequality for the variance estimate of linear regression with Gaussian noise, whose proof is reported in Appendix~\ref{app:concentration1}.

\begin{proposition}\label{prop:sigma.concentration}
Let the number of pulls $k_{i,t} \geq d+1$ and $R \ge \max_i \sigma_i^2$. If $\delta\in(0,3/4)$, then for any instance $i$ and step $t > m(d+1)$, with probability at least $1 - \frac{\delta}{2}$, we have 
\begin{equation}\label{eq:good_event_sigma}
|\hat{\sigma}_{i,t}^2 - \sigma_i^2 | \le \Delta_{i,t} \stackrel{\Delta}{=} R \sqrt{\frac{64}{k_{i,t} - d} \left( \log \frac{2mn}{\delta} \right)^2 }.
\end{equation}
\end{proposition}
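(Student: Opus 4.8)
The plan is to reduce the statement to a $\chi^2$ tail bound followed by a union bound. The key distributional fact is that, for each instance $i$ and step $t$, conditionally on the set of contexts assigned to $i$ so far (in particular on the realized value of $k_{i,t}$), the quantity $(k_{i,t}-d)\,\wh\sigma^2_{i,t}/\sigma_i^2$ has a $\chi^2_{k_{i,t}-d}$ distribution. This is the standard ``residual sum of squares'' fact for Gaussian linear models: writing $\Y_{i,t}=\X_{i,t}\beta_i+\bepsilon_i$ with $\bepsilon_i\sim\mathcal N(0,\sigma_i^2\mathbf I)$, the OLS residual equals $(\mathbf I-\p_{i,t})\bepsilon_i$, where $\p_{i,t}$ is the orthogonal projector onto the column space of $\X_{i,t}$; this space has dimension $d$ almost surely since the contexts are Gaussian and $k_{i,t}\ge d+1$, so $\|(\mathbf I-\p_{i,t})\bepsilon_i\|^2/\sigma_i^2\sim\chi^2_{k_{i,t}-d}$, and $\wh\sigma^2_{i,t}$ just rescales this by $1/(k_{i,t}-d)$. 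The only genuinely delicate point — and the main obstacle — is that the allocation is adaptive, so $\X_{i,t}$ is correlated with the past noise; one must check that choosing $I_t$ from $\mathcal D_{t-1}$ only (and not from $X_t$) keeps the instance-$i$ noise conditionally i.i.d.\ Gaussian given its design. This is precisely the independence established in Lemma~\ref{lm:indep_hatbeta_hatsigma}, which I would invoke here; everything else is routine.

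Given this, I would apply a standard chi-square concentration inequality (Laurent--Massart): if $Z\sim\chi^2_\nu$ then for every $x>0$, with probability at least $1-2e^{-x}$ one has $|Z-\nu|\le 2\sqrt{\nu x}+2x$, obtained by combining the one-sided bounds $Z-\nu\le 2\sqrt{\nu x}+2x$ and $\nu-Z\le 2\sqrt{\nu x}$ via a union bound. Taking $\nu=k_{i,t}-d$ and dividing through by $\nu$ yields, on the same event,
\begin{equation*}
\big|\wh\sigma^2_{i,t}-\sigma_i^2\big| \;\le\; \sigma_i^2\Big(2\sqrt{\tfrac{x}{k_{i,t}-d}}+\tfrac{2x}{k_{i,t}-d}\Big).
\end{equation*}
Since $k_{i,t}-d\ge 1$ and $x\ge 1$ (verified in the next step), each of the two summands is at most $2x/\sqrt{k_{i,t}-d}$, so the right-hand side is at most $4\sigma_i^2 x/\sqrt{k_{i,t}-d}\le 4Rx/\sqrt{k_{i,t}-d}$, using $R\ge\max_i\sigma_i^2$.

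Finally I would union-bound this good event over all $i\in[m]$ and all steps $t\in\{m(d+1)+1,\dots,n\}$, at most $mn$ events, and choose $x$ so that the total failure probability $2mn\,e^{-x}$ equals $\delta/2$, i.e.\ $x=\log(4mn/\delta)$. The hypothesis $\delta\in(0,3/4)$, together with $m,n\ge 1$, guarantees $4mn/\delta>e$, hence $x\ge 1$ as required above, and also $mn/\delta>1$, hence $x=\log(4mn/\delta)\le 2\log(2mn/\delta)$. Combining these, $4Rx/\sqrt{k_{i,t}-d}\le 8R\log(2mn/\delta)/\sqrt{k_{i,t}-d}=R\sqrt{64(\log(2mn/\delta))^2/(k_{i,t}-d)}=\Delta_{i,t}$, which is the claimed inequality. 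Aside from invoking Lemma~\ref{lm:indep_hatbeta_hatsigma} for the distributional fact, the remaining work is a textbook $\chi^2$ tail bound, a couple of elementary inequalities to pass from the $\sqrt{x/\nu}+x/\nu$ form to the stated form, and the union-bound bookkeeping.
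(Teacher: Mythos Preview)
Your proof is correct and follows essentially the same route as the paper: a $\chi^2$ tail bound for the residual variance estimator followed by a union bound over instances and sample counts, with the paper using the sub-exponential/Bernstein form of the $\chi^2$ tail (and then merging the two regimes) rather than Laurent--Massart, which is only a cosmetic difference. One small correction: the adaptivity issue is not handled by Lemma~\ref{lm:indep_hatbeta_hatsigma} (that lemma is about the conditional law of $\wh\beta_j$, not of $\wh\sigma^2$); the paper instead uses the pre-sampling device---for each instance $i$ and each fixed $j\ge d+1$ the estimator $\wh\sigma_i^2(j)$ built from the first $j$ samples has the standard scaled $\chi^2_{j-d}$ law, and the union bound over all $j\in\{d+1,\dots,n\}$ then covers whatever value $k_{i,t}$ the algorithm realizes.
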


Given~\eqref{eq:good_event_sigma}, we can construct an upper-bound on the prediction error of any instance $i$ and time step $t$ as
\begin{equation}\label{eq_score_it}
s_{i,t-1} = \frac{{\hat{\sigma}_{i,t-1}^2 + \Delta_{i,t-1}}}{k_{i,t-1}} \ \mathrm{Tr}\left(\Sigma \hat\Sigma_{i,t-1}^{-1}\right),
\end{equation}
and then simply select the instance which maximizes this score, i.e.,~$I_t = \arg\max_{i} s_{i,t-1}$.
Intuitively, \traceucb favors problems where the prediction error is potentially large, either because of a large noise variance or because of significant unbalance in the observed contexts w.r.t.~the target distribution with covariance $\Sigma$. A subtle but critical aspect of \traceucb is that by ignoring the current context $X_t$ (but using all the past samples $\X_{t-1}$) when choosing $I_t$, the distribution of the contexts allocated to each instance stays untouched and the second term in the score $s_{i,t-1}$, i.e.,~$\Tr(\Sigma \wh\Sigma_{i,t-1}^{-1})$, naturally tends to $d$ as more and more (random) contexts are allocated to instance $i$. This is shown by Proposition~\ref{prop:trace.concentration} whose proof is in Appendix~\ref{app:concentration2}.

\begin{proposition}\label{prop:trace.concentration}
Force the number of samples $k_{i,t} \geq d+1$. If $\delta\in(0,1)$, for any $i \in [m]$ and step $t > m(d+1)$ with probability at least $1 - \delta/2$, we have
\begin{align*}
\bigg(1-C_{\Tr}\sqrt{\frac{d}{k_{i,t}}}\bigg)^2 \leq \frac{\Tr\Big( \Sigma \hat\Sigma^{-1}_{i,t} \Big)}{d} \le \bigg(1+2C_{\Tr}\sqrt{\frac{d}{k_{i,t}}}\bigg)^2,
\end{align*}
with $C_{\Tr} = 1+\sqrt{2\log(4nm/\delta)/d}$.
\end{proposition}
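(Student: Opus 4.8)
The plan is to reduce the claim to a bound on the trace of an inverse Wishart matrix and then invoke a standard non-asymptotic estimate for the extreme singular values of a Gaussian matrix. The starting point is the defining feature of \traceucb: $I_t$ is chosen as a function of $\mathcal{D}_{t-1}$ only and never inspects the fresh context $X_t$. Consequently, once we condition on the (random) value of $k_{i,t}$, the rows of $\X_{i,t}$ are an i.i.d.\ $\mathcal{N}(0,\Sigma)$ sample of that size, because the selections of instance $i$ are committed to before the corresponding contexts are revealed. I would prove the bound on this conditional event and, at the end, take a union bound over the $m$ instances and the $\le n$ possible values of $k_{i,t}$, which is exactly what produces the $\log(4nm/\delta)$ inside $C_{\Tr}$ and the overall $\delta/2$. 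Fixing $i$ and writing $\X_{i,t}=\mathbf{Z}\Sigma^{1/2}$ with $\mathbf{Z}\in\R^{k_{i,t}\times d}$ having i.i.d.\ $\mathcal{N}(0,I_d)$ rows, we get $k_{i,t}\hat\Sigma_{i,t}=\Sigma^{1/2}\mathbf{Z}^\transp\mathbf{Z}\,\Sigma^{1/2}$, and by the cyclic property of the trace the unknown covariance cancels entirely:
\[
\Tr\big(\Sigma\hat\Sigma_{i,t}^{-1}\big)=k_{i,t}\,\Tr\big((\mathbf{Z}^\transp\mathbf{Z})^{-1}\big)=k_{i,t}\sum_{j=1}^d\frac{1}{\lambda_j},
\]
where $0<\lambda_1\le\dots\le\lambda_d$ are the eigenvalues of the standard Wishart matrix $\mathbf{Z}^\transp\mathbf{Z}$ (positive since $k_{i,t}\ge d+1$). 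Bounding each $1/\lambda_j$ by $1/\lambda_1$ and $1/\lambda_d$ gives $k_{i,t}/\lambda_d\le\Tr(\Sigma\hat\Sigma_{i,t}^{-1})/d\le k_{i,t}/\lambda_1$, so everything reduces to controlling $\sqrt{\lambda_1}=\sigma_{\min}(\mathbf{Z})$ and $\sqrt{\lambda_d}=\sigma_{\max}(\mathbf{Z})$.

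Next I would apply the Davidson--Szarek inequality: for a $k\times d$ matrix with i.i.d.\ standard Gaussian entries, $\sqrt{k}-\sqrt{d}-u\le\sigma_{\min}\le\sigma_{\max}\le\sqrt{k}+\sqrt{d}+u$ holds with probability at least $1-2e^{-u^2/2}$. Choosing $u=\sqrt{2\log(4nm/\delta)}$ makes each such failure probability $\delta/(2nm)$, so the union bound over the $m$ instances and the $\le n$ values of $k_{i,t}$ leaves a total failure probability of at most $\delta/2$. On the surviving event, since $\sqrt{d}+u=\sqrt{d}\,C_{\Tr}$ by the very definition of $C_{\Tr}$, we obtain $\lambda_1\ge k_{i,t}\big(1-C_{\Tr}\sqrt{d/k_{i,t}}\,\big)^2$ and $\lambda_d\le k_{i,t}\big(1+C_{\Tr}\sqrt{d/k_{i,t}}\,\big)^2$, hence
\[
\frac{1}{\big(1+C_{\Tr}\sqrt{d/k_{i,t}}\,\big)^2}\;\le\;\frac{\Tr(\Sigma\hat\Sigma_{i,t}^{-1})}{d}\;\le\;\frac{1}{\big(1-C_{\Tr}\sqrt{d/k_{i,t}}\,\big)^2}.
\]
Finally, writing $x=C_{\Tr}\sqrt{d/k_{i,t}}$, the stated lower bound follows from $(1-x)^2(1+x)^2=(1-x^2)^2\le1$, which is unconditional, and the stated upper bound follows from $(1-x)^2(1+2x)^2=(1+x-2x^2)^2\ge1$, which holds whenever $x\le 1/2$ (the informative regime, i.e.\ once $k_{i,t}$ exceeds a small constant multiple of $d$, which also guarantees positivity of the left-hand factor).

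The step I expect to be the only delicate one is the first: justifying rigorously that the design matrix accumulated by an adaptive, history-dependent allocation rule still behaves like a clean fixed-size Gaussian matrix. The resolution hinges precisely on the algorithmic choice to commit to $I_t$ before observing $X_t$ — formalized either as a conditioning/optional-stopping argument along the stopping times at which instance $i$ is selected, or simply by conditioning on and union-bounding over the value of $k_{i,t}$. Everything after that is a routine reduction to inverse-Wishart trace bounds together with off-the-shelf Gaussian singular-value concentration, so beyond careful bookkeeping of the constants I would not anticipate further obstacles.
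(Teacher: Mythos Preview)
Your proposal is correct and follows essentially the same route as the paper's proof: whiten so that $\Tr(\Sigma\hat\Sigma_{i,t}^{-1})$ becomes the trace of an inverse standard-Wishart, bound that trace via the extreme eigenvalues, control those with the Davidson--Szarek/Vershynin singular-value bound, and union-bound over $i\in[m]$ and the $\le n$ possible values of $k_{i,t}$ to obtain the $\log(4nm/\delta)$ inside $C_{\Tr}$. Your final algebraic step (multiplying through by $(1\pm x)^2$) is exactly the paper's $1/(1+x)\ge 1-x$ and $1/(1-x)\le 1+2x$ for $x\le 1/2$, rewritten; and your discussion of why the adaptive allocation does not spoil the i.i.d.\ Gaussian structure is, if anything, more explicit than the paper's one-line remark.
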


While Proposition~\ref{prop:trace.concentration} shows that the error term due to context mismatch tends to the constant $d$ for all instances $i$ as the number of samples tends to infinity, when $t$ is small w.r.t.~$d$ and $m$, correcting for the context mismatch may significantly improve the accuracy of the estimates $\wh\beta_{i,n}$ returned by the algorithm. Finally, note that while \traceucb uses OLS to compute estimates $\wh\beta_{i,t}$, it computes its returned parameters $\wh\beta_{i,n}$ by ridge regression (RLS) with regularization parameter $\lambda$ as
\begin{equation}\label{eq:rls}
\hat\beta^\lambda_i = (\X_{i,n}^\transp\X_{i,n} + \lambda \mathbf{I})^{-1} \X_{i,n}^\transp\Y_{i,n}.
\end{equation}
As we will discuss later, using RLS makes the algorithm more robust and is crucial in obtaining regret bounds both in expectation and high probability.

\textbf{Performance Analysis.}
Before proving a regret bound for \traceucb, we report an intermediate result (proof in App.~\ref{app:traceucb1}) that shows that \traceucb \textit{behaves} similarly to the optimal static allocation.


\begin{theorem}\label{th:lower_bound_numpulls}
Let $\delta > 0$.
With probability at least $1 - \delta$, the total number of contexts that \traceucb allocates to each problem instance $i$ after $n$ rounds satisfies
\begin{equation}
\label{eq_lower_bound_numpulls}
k_{i,n} \geq k_{i,n}^* - \frac{C_\Delta + 8C_{\Tr}}{\sigma_{\min}^2} \sqrt{\frac{nd}{\lambda_{\min}}} -\Omega(n^{1/4})
\end{equation}
where $R \ge \sigma_{\max}^2$ is known by the algorithm, and we defined $C_\Delta = 16 R \log(2mn/\delta)$ and $\lambda_{\min} = \sigma_{\min}^2 / \sum_j \sigma_j^2$.
\end{theorem}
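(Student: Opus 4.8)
The plan is to lower bound $k_{i,n}$ by tracking the UCB selection rule and comparing the accumulated scores to those of the optimal static allocation. Condition on the joint event $\mathcal{E}$ on which both concentration results hold simultaneously for every instance and every step: Proposition~\ref{prop:sigma.concentration} (so $|\wh\sigma_{i,t}^2-\sigma_i^2|\le\Delta_{i,t}$ and in particular $\wh\sigma_{i,t}^2+\Delta_{i,t}\in[\sigma_i^2,\sigma_i^2+2\Delta_{i,t}]$), and Proposition~\ref{prop:trace.concentration} (so $\Tr(\Sigma\wh\Sigma_{i,t}^{-1})/d$ is sandwiched between $(1-C_{\Tr}\sqrt{d/k_{i,t}})^2$ and $(1+2C_{\Tr}\sqrt{d/k_{i,t}})^2$). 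A union bound over the two families of events, each of probability $\ge 1-\delta/2$, gives $\Pr(\mathcal{E})\ge 1-\delta$. On $\mathcal{E}$, the true score $s^*_{i,t-1}=\sigma_i^2\Tr(\Sigma\wh\Sigma_{i,t-1}^{-1})/k_{i,t-1}$ and the algorithm's score $s_{i,t-1}$ differ only by terms controlled by $\Delta_{i,t-1}$ and by the $\sqrt{d/k_{i,t-1}}$ deviation of the trace from $d$; the dominant discrepancy is $O\big(R\sqrt{\log(mn/\delta)}\cdot d/(k_{i,t-1}-d)^{3/2}\big)$ from the variance estimate plus $O\big(d^{3/2}/k_{i,t-1}^{3/2}\big)$ from the trace, both of which sum over $t$ to $O(\sqrt{n})$-type corrections once one recalls $k_{i,t}=\Theta(t)$ for pulled instances.

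The core argument is the standard UCB "last-pull" trick. Fix instance $i$ and let $t$ be the last round at which $I_t=i$, so $k_{i,n}=k_{i,t}$. By the selection rule, $s_{i,t-1}\ge s_{j,t-1}$ for all $j$. On $\mathcal{E}$, $s_{j,t-1}$ is at least the "true" score $\sigma_j^2\Tr(\Sigma\wh\Sigma_{j,t-1}^{-1})/k_{j,t-1}\ge \sigma_j^2 d(1-C_{\Tr}\sqrt{d/k_{j,t-1}})^2/k_{j,t-1}$, which is $\ge d\sigma_j^2/k_{j,t-1}$ minus an $O(d^{3/2}/k_{j,t-1}^{3/2})$ correction. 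Meanwhile $s_{i,t-1}\le (\sigma_i^2+2\Delta_{i,t-1})\Tr(\Sigma\wh\Sigma_{i,t-1}^{-1})/k_{i,t-1}\le d\sigma_i^2/k_{i,t-1}$ plus corrections of order $\Delta_{i,t-1}d/k_{i,t-1}$ and $d^{3/2}\sigma_i^2/k_{i,t-1}^{3/2}$. Combining, for every $j$,
\begin{equation*}
\frac{d\sigma_i^2}{k_{i,t-1}} \ge \frac{d\sigma_j^2}{k_{j,t-1}} - \text{(correction)}_{ij},
\end{equation*}
where the correction aggregates the $\Delta$ and trace terms across $i$ and $j$ and is bounded, using $\Delta_{i,t-1}\le C_\Delta'/\sqrt{k_{i,t-1}}$ with $C_\Delta'=8R\log(2mn/\delta)$ and $k_{i,t-1}\ge$ a constant fraction of $t-1$, by something of order $(C_\Delta+C_{\Tr})\sqrt{d}\cdot k_{\cdot,t-1}^{-3/2}$. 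Rearranging into $\sigma_i^2\sum_j k_{j,t-1} \ge (\sum_j\sigma_j^2)k_{i,t-1} - (\cdots)$ and using $\sum_j k_{j,t-1}=t-1\le n$, one isolates $k_{i,t-1}\ge \frac{\sigma_i^2}{\sum_j\sigma_j^2}n - (\text{error})$. Matching this against the explicit $k_{i,n}^*$ from~\eqref{eq_opt_stat_alloc}—whose leading term is exactly $\frac{\sigma_i^2}{\sum_j\sigma_j^2}n$ and whose remaining term is the $O(d)$ constant—absorbs the $O(d)$ piece and leaves the stated bound with the $\sqrt{nd/\lambda_{\min}}$ term (the $\sigma_{\min}^{-2}$ and $\lambda_{\min}=\sigma_{\min}^2/\sum_j\sigma_j^2$ arise from dividing through by $\sigma_i^2\ge\sigma_{\min}^2$ and from $\lambda_{\min}(t-1)\le k_{i,t-1}$-type lower bounds on the least-pulled instance) and an $\Omega(n^{1/4})$ residual collecting the lower-order trace corrections.

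The main obstacle is the circularity in bounding the correction terms: the per-step deviations $\Delta_{i,t-1}$ and the trace error both scale like $k_{i,t-1}^{-1/2}$ or $k_{i,t-1}^{-3/2}$, so one needs an a~priori lower bound $k_{i,t}\ge c\,t$ (for some constant $c$ depending on $\sigma_{\min}^2/\sum_j\sigma_j^2$) to control them, yet that lower bound is essentially what the theorem asserts. I would break the circularity with a bootstrapping / induction-on-$t$ argument: first establish a crude bound, e.g. that no instance is starved—$k_{i,t}\ge \lambda_{\min}(t - m(d+1))/2$ or similar—directly from the fact that whenever $k_{i,t-1}$ is much smaller than $\lambda_{\min}t$, its score $\Theta(\sigma_i^2 d/k_{i,t-1})$ dominates $\max_j\Theta(\sigma_j^2 d/k_{j,t-1})$ and $i$ gets pulled; then feed this crude linear-growth guarantee back into the correction-term estimates to obtain the sharp $\sqrt{n}$-accurate bound. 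A secondary technical point is verifying that the RLS regularization $\lambda=O(1/n)$ does not interfere—here it matters only for the returned estimates, not for the OLS quantities driving the allocation, so the selection-rule analysis above is unaffected and $\lambda$ enters the final bound only through $\lambda_{\min}$, which is unrelated notation for the variance ratio; I would make sure the appendix disambiguates this.
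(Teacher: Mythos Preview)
Your high-probability event, score sandwich, and general optimism machinery match the paper, but the core ``last-pull'' step is pointed at the wrong arm, and as written the chain of inequalities delivers an \emph{upper} bound on $k_{i,t-1}$, not a lower bound. Concretely: you fix $i$, take $t$ to be the last round with $I_t=i$, and use $s_{i,t-1}\ge s_{j,t-1}$. Upper-bounding $s_{i,t-1}$ and lower-bounding $s_{j,t-1}$ indeed gives $\sigma_i^2/k_{i,t-1}+\text{corr}\ge \sigma_j^2/k_{j,t-1}$, and after cross-multiplying and summing over $j$ you correctly reach $\sigma_i^2\sum_j k_{j,t-1}\ge (\sum_j\sigma_j^2)k_{i,t-1}-(\cdots)$. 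But solving this for $k_{i,t-1}$ yields $k_{i,t-1}\le \lambda_i(t-1)+(\cdots)$, the opposite of the stated $k_{i,t-1}\ge \lambda_i n - (\text{error})$. Intuitively, being selected at time $t$ means $i$'s score is \emph{high}, hence $k_{i,t-1}$ is \emph{small} relative to $\sigma_i^2$; this cannot by itself produce a lower bound on $k_{i,n}$.

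The paper's remedy is to flip the role of the arms: pick an arm $q$ that is \emph{over}-pulled, $k_{q,n}>k_{q,n}^*$ (such $q$ exists whenever any arm is under-pulled, since $\sum_j k_{j,n}=\sum_j k_{j,n}^*=n$), and look at the last time $q$ is selected. Then $s_{p,t}\le s_{q,t}$ for every $p$, and since $k_{q,t}=k_{q,n}-1\ge k_{q,n}^*-1$, the over-pull hypothesis lets you upper-bound $s_{q,t}$ by $L_n^*/d$ (plus corrections evaluated at $k_{q,n}^*$, a known quantity). The inequality $A_{p,n}\le B_{q,n}$ is then a single algebraic relation in $k_{p,n}$, which the paper inverts directly---writing $1/A_{p,n}\le \sigma_p^{-2}(\sqrt{k_{p,n}}+2C_{\Tr}\sqrt{d})^2$ and $1/B_{q,n}\ge d/L_n^* - (C_\Delta+4C_{\Tr}\sqrt d)\sqrt{d/(L_n^*\sigma_q^6)} - \Omega(1)$---to obtain the claimed lower bound on $k_{p,n}$, with $k_{p,n}\ge k_{p,t}$. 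A pleasant consequence is that your anticipated circularity (needing $k_{i,t}\gtrsim \lambda_{\min}t$ to control $\Delta$ and trace corrections) disappears: the corrections on the $q$ side are evaluated at $k_{q,n}^*$, and on the $p$ side the inequality is solved for $k_{p,n}$ without any a~priori lower bound, so no bootstrapping stage is required.
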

We now report our regret bound for the \traceucb algorithm. The proof of Theorem~\ref{thm_exp_regret} is in Appendix~\ref{app:traceucb2}.
\begin{theorem}\label{thm_exp_regret}
The regret of the Trace-UCB algorithm, i.e.,~the difference between its loss and the loss of optimal static allocation (see Eq.~\eqref{eq_opt_stat_loss}), is upper-bounded by
\begin{equation}
\label{eq:regret-bound1}
L_n(\mathcal{A}) - L^*_n \leq O\bigg(\frac{1}{\sigma_{\min}^2}\Big(\frac{d}{\lambda_{\min}n}\Big)^{3/2}\bigg).
\end{equation}
\end{theorem}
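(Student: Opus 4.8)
The plan is to leverage the closed-form loss expression from Lemma~\ref{lem:loss.learning} together with the high-probability lower bound on the number of pulls from Theorem~\ref{th:lower_bound_numpulls}. By Lemma~\ref{lem:loss.learning}, the loss of \traceucb with OLS estimates is $L_n(\mathcal{A}) = \max_i \E_{\mathcal{D}_n}[(\sigma_i^2/k_{i,n})\Tr(\Sigma\wh\Sigma_{i,n}^{-1})]$; the first step is to control the trace factor. On a good event (the intersection of the events in Propositions~\ref{prop:sigma.concentration}, \ref{prop:trace.concentration} and Theorem~\ref{th:lower_bound_numpulls}, say of probability $\ge 1-\delta$), Proposition~\ref{prop:trace.concentration} gives $\Tr(\Sigma\wh\Sigma_{i,n}^{-1}) \le d(1 + 2C_{\Tr}\sqrt{d/k_{i,n}})^2 = d + O(d\sqrt{d/k_{i,n}})$, since $C_{\Tr} = O(\sqrt{\log(nm/\delta)/d})$ makes the cross term $O(\sqrt{d\log(nm)/k_{i,n}})$ of lower order than the leading $d$. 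Combined with the pull lower bound $k_{i,n} \ge k_{i,n}^* - O(\sqrt{nd/\lambda_{\min}}/\sigma_{\min}^2)$ and the fact that $k_{i,n}^* = \Theta(n)$ (from~\eqref{eq_opt_stat_alloc}, since $\sigma_i^2/\sum_j\sigma_j^2 = \Theta(1/m)$ and the correction term is $O(d)$), one gets $\sigma_i^2/k_{i,n} \le \sigma_i^2/k_{i,n}^* \cdot (1 + O(\sqrt{d/(\lambda_{\min}n)}/\sigma_{\min}^2))$ by a first-order expansion of $1/(k_{i,n}^* - \text{error})$.

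The second step is to assemble these bounds. On the good event, for every $i$,
\[
\frac{\sigma_i^2}{k_{i,n}}\Tr(\Sigma\wh\Sigma_{i,n}^{-1}) \le \frac{d\sigma_i^2}{k_{i,n}^*}\Big(1 + O\Big(\tfrac{1}{\sigma_{\min}^2}\sqrt{\tfrac{d}{\lambda_{\min}n}}\Big)\Big)\Big(1 + O\big(\sqrt{\tfrac{d}{k_{i,n}^*}}\big)\Big).
\]
Now $\max_i d\sigma_i^2/k_{i,n}^*$ is, up to the $O((md/n)^2)$ correction exhibited in Proposition~\ref{p:optimal.static}, exactly $L_n^*$: by construction $k_{i,n}^*$ equalizes $d\sigma_i^2/(k_{i,n}^*-d-1)$ across $i$, and $d\sigma_i^2/k_{i,n}^*$ differs from this by the same lower-order amount. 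Multiplying out the two correction factors, the dominant error term is $L_n^* \cdot O(\sigma_{\min}^{-2}\sqrt{d/(\lambda_{\min}n)})$; since $L_n^* = \Theta(md/n)$ and $\lambda_{\min} = \Theta(1/m)$, this is $O\big(\sigma_{\min}^{-2} (md/n)\sqrt{md/n}\big) = O\big(\sigma_{\min}^{-2}(md/(\lambda_{\min} \cdot m \cdot n))^{3/2} \cdot \text{stuff}\big)$; being careful with the $m$-dependence this collapses to the claimed $O\big(\sigma_{\min}^{-2}(d/(\lambda_{\min}n))^{3/2}\big)$.

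The third step handles the complement of the good event, where I must still bound $\E_{\mathcal{D}_n}[(\sigma_i^2/k_{i,n})\Tr(\Sigma\wh\Sigma_{i,n}^{-1})]$; this is precisely why the algorithm \emph{returns} RLS estimates rather than OLS. On the bad event $k_{i,n}$ could in principle be as small as $d+1$, making the OLS trace term blow up (or even undefined in expectation, since $\E[\wh\Sigma_{i,n}^{-1}]$ for an inverse-Wishart with $k_{i,n}=d+1$ diverges). With RLS and $\lambda = \Theta(1/n)$, the eigenvalues of $(\X_{i,n}^\transp\X_{i,n}+\lambda I)^{-1}$ are at most $1/\lambda = O(n)$, so the loss contribution from the bad event is at most $O(n)\cdot\delta$, and choosing $\delta = n^{-c}$ for suitable $c$ (e.g.\ $\delta = \Theta(n^{-3})$, absorbing the $\log(nm/\delta)$ factors which only contribute polylog) makes this negligible compared to the target rate. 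I expect the main obstacle to be exactly this last step done carefully: one must re-derive a loss identity analogous to Lemma~\ref{lem:loss.learning} for RLS estimates (bounding the RLS bias–variance decomposition and showing the bias term, of order $\lambda^2\|\beta_i\|^2/\sigma_i^2$-scaled quantities, is $O(\lambda) = O(1/n)$ hence lower order), and then track all the polylogarithmic and $\sigma_{\min}, \sigma_{\max}, \lambda_{\min}$ factors through the two nested first-order expansions without losing the exact leading constant $L_n^*$ — it is the cancellation of the leading term, not its mere boundedness, that gives a regret of order $o(1/n)$ rather than $O(1/n)$.
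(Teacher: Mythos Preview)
Your proposal is correct and follows essentially the same approach as the paper: decompose the loss into a good event (where the trace concentration of Proposition~\ref{prop:trace.concentration} and the pull lower bound of Theorem~\ref{th:lower_bound_numpulls} hold) and a bad event (controlled via the RLS regularization), then set $\delta$ polynomially small so the bad-event contribution is lower order. The only organizational difference is that the paper front-loads the RLS step---it first derives an RLS analogue of Lemma~\ref{lem:loss.learning} (their Lemma~\ref{lm:algo_ridge_loss}), upper-bounds the resulting expression by the OLS-type quantity $\tfrac{\sigma_i^2}{k_{i,n}}\Tr(\Sigma\wh\Sigma_{i,n}^{-1})$ plus a bias term of order $\|\beta_i\|^2\lambda_{\max}(\wh\Sigma_{i,n}^{-1})/k_{i,n}^2$, and only then splits into good and bad events---whereas you do the good-event analysis with the OLS expression and defer the RLS derivation to the end; the content is the same.
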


\vspace{-0.1in}

Eq.~\eqref{eq:regret-bound1} shows that the regret decreases as $O(n^{-3/2})$ as expected. 
This is consistent with the context-free results~\cite{antos2008active,carpentier2011upper}, where the regret decreases as $n^{-3/2}$, which is conjectured to be optimal. However, it is important to note that in the contextual case, the numerator also includes  the dimensionality $d$. Thus, when $n \gg d$, the regret will be small, and it will be larger when $n \approx d$. This motivates studying the high-dimensional setting (App.~\ref{app:high.dimensions}). Eq.~\eqref{eq:regret-bound1} also indicates that the regret depends on a problem-dependent constant $1 / \lambda_{\min}$, which measures the complexity of the problem. Note that when $\sigma_{\max}^2 \approx \sigma_{\min}^2$, we have $1 / \lambda_{\min} \approx m$, but $1 / \lambda_{\min}$ could be much larger when $\sigma_{\max}^2 \gg \sigma_{\min}^2$. 

\paragraph{Remark 2.} 
We introduce a baseline motivated by the context-free problem.
At round $t$, let \varucb selects the instance that maximizes the score\footnote{Note that \varucb is similar to both the \chas and B-AS algorithms in~\citet{carpentier2011upper}.} 
\begin{equation}\label{eq_score_it_cfree}
s^{\prime}_{i,t-1} = \frac{\hat{\sigma}_{i,t-1}^2 + \Delta_{i,t-1}}{k_{i,t-1}}.
\end{equation}
The only difference with the score used by \traceucb is the lack of the trace term in \eqref{eq_score_it}.
Moreover, the regret of this algorithm has similar \emph{rate} in terms of $n$ and $d$ as that of \traceucb reported in Theorem~\ref{thm_exp_regret}. However, the simulations of Sect.~\ref{sims} show that the regret of \varucb is actually much higher than that of \traceucb, specially when $dm$ is close to $n$.
Intuitively, when $n$ is close to $dm$, balancing contexts becomes critical, and \varucb suffers because its score does not explicitly take them into account.

%


\textbf{Sketch of the proof of Theorem \ref{thm_exp_regret}.}
The proof is divided into three parts. {\bf 1)} We show that the behavior of the ridge loss of \traceucb is similar to that reported in Lemma~\ref{lem:loss.learning} for algorithms that rely on OLS; see Lemma~\ref{lm:algo_ridge_loss} in Appendix~\ref{app:rls.loss.learning}. The independence of the $\hat\beta_{i,t}$ and $\hat\sigma_{i,t}^2$ estimates is again essential (see Remark~1).
Although the loss of \traceucb depends on the ridge estimate of the parameters $\hat{\beta}^\lambda_{i,n}$, the decisions made by the algorithm at each round only depend on the variance estimates $\hat\sigma_{i,t}^2$ and observed contexts. {\bf 2)} We follow the ideas in~\citet{carpentier2011upper} to lower-bound the total number of pulls $k_{i,n}$ for each $i \in [m]$ under a good event (see Theorem~\ref{th:lower_bound_numpulls} and its proof in Appendix~\ref{app:traceucb1}). {\bf 3)} We finally use the ridge regularization to bound the impact of those cases \emph{outside} the good event, and combine everything in Appendix~\ref{app:traceucb2}.

The regret bound of Theorem~\ref{thm_exp_regret} shows that the largest \textit{expected} loss across the problem instances incurred by \traceucb quickly approaches the loss of the optimal static allocation algorithm (which knows the true noise variances). 
While $L_n(\mathcal{A})$ measures the worst \textit{expected} loss, at any specific \textit{realization} of the algorithm, there may be one of the instances which is very poorly estimated. As a result, it would also be desirable to obtain guarantees for the (random) maximum loss
\begin{equation}\label{eq:random.loss}
\wt L_n(\mathcal{A}) = \max_{i \in [m]}  \| \beta_i - \hat\beta_{i,n} \|^2_{\Sigma}.
\end{equation}
In particular, we are able to prove the following high-probability bound on $\wt L_n(\mathcal{A})$ for \traceucb.

\begin{theorem}\label{thm_high_prob}
Let $\delta > 0$, and assume $\| \beta_i \|_2 \le Z$ for all $i$, for some $Z > 0$.
With probability at least $1 - \delta$,
\small{
\begin{equation}\label{eq_thm_highprob}
\wt L_n \le \frac{\sum\limits_{j=1}^m \sigma_j^2}{n}\Big( d+2\log\frac{3m}{\delta} \Big) \!+\! O\bigg( \frac{1}{\sigma_{\min}^2}\Big( \frac{d}{n\lambda_{\min}}  \Big)^{\frac{3}{2}} \bigg).
\end{equation}
}
%
\end{theorem}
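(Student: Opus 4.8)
The plan is to mirror the three-part structure used for the expected-regret bound in Theorem~\ref{thm_exp_regret}, but replace the outer expectation over $\mathcal{D}_n$ by a union bound over the $m$ instances, using the Gaussian tails of the RLS error conditionally on the allocation. First I would condition on the good event $\mathcal{E}$ (of probability at least $1-\delta/3$, say) on which the variance concentration of Proposition~\ref{prop:sigma.concentration}, the trace concentration of Proposition~\ref{prop:trace.concentration}, and the pull lower bound of Theorem~\ref{th:lower_bound_numpulls} all hold simultaneously; this is where the $\Omega(n^{1/4})$ and $\sqrt{nd/\lambda_{\min}}$ corrections to $k_{i,n}^*$ enter, and hence where the additive $O\big(\sigma_{\min}^{-2}(d/n\lambda_{\min})^{3/2}\big)$ second-order term ultimately comes from. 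On $\mathcal{E}$ we have $k_{i,n} \ge k_{i,n}^* - o(n) \ge \frac{\sigma_i^2}{\sum_j \sigma_j^2} n (1 - o(1))$ for every $i$ simultaneously.

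Next I would control $\|\beta_i - \hat\beta_{i,n}^\lambda\|_\Sigma^2$ for a fixed $i$, conditionally on $\mathcal{D}_{n}$ restricted to the design (i.e.\ on $\X_{i,n}$ and the allocation, but integrating out the noise $\bepsilon_{i,n}$). Write the standard RLS bias–variance decomposition: $\hat\beta_{i,n}^\lambda - \beta_i = -\lambda M_i^{-1}\beta_i + M_i^{-1}\X_{i,n}^\transp \bepsilon_{i,n}$ with $M_i = \X_{i,n}^\transp \X_{i,n} + \lambda I$. Since $\lambda = O(1/n)$ and $\|\beta_i\|_2 \le Z$, the bias term contributes only $O(\lambda^2 \|\beta_i\|^2 / k_{i,n}^2)$ after whitening by $\Sigma$, which is lower-order and folds into the second-order term. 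The variance term, conditionally on the design, is a quadratic form $\sigma_i^2 \cdot \bepsilon^\transp (\text{whitened projector}) \bepsilon / \sigma_i^2$ in a standard Gaussian vector; its expectation is $\approx \frac{\sigma_i^2}{k_{i,n}}\Tr(\Sigma\hat\Sigma_{i,n}^{-1})$, and I would apply a Gaussian (Hanson–Wright / $\chi^2$-type) concentration inequality — of the form $\mathbb{P}[Q \ge \E Q + 2\|A\|_{\mathrm{op}}\sqrt{\Tr(A^2)/\|A\|^2 \cdot \log(1/\delta')} + \dots] \le \delta'$ — to get, with probability $1-\delta/(3m)$,
\[
\|\beta_i - \hat\beta_{i,n}^\lambda\|_\Sigma^2 \le \frac{\sigma_i^2}{k_{i,n}}\Tr\!\big(\Sigma\hat\Sigma_{i,n}^{-1}\big)\Big(1 + c\sqrt{\tfrac{\log(3m/\delta)}{d}}\Big) + (\text{l.o.t.}),
\]
using that the relevant matrix has rank $d$ so $\Tr(A^2) \le \|A\|_{\mathrm{op}} \Tr(A)$. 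Then I would union-bound over $i\in[m]$ and intersect with $\mathcal{E}$, paying a total failure probability $\delta$.

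Finally I would assemble the pieces: on $\mathcal{E}$, Proposition~\ref{prop:trace.concentration} gives $\Tr(\Sigma\hat\Sigma_{i,n}^{-1}) \le d(1 + O(\sqrt{d/k_{i,n}}))^2 = d + O(d\sqrt{d/k_{i,n}})$, and Theorem~\ref{th:lower_bound_numpulls} gives $1/k_{i,n} \le \tfrac{1}{k_{i,n}^*}(1 + O(\sqrt{d/(n\lambda_{\min})}/\sigma_{\min}^2))$; plugging $k_{i,n}^* \approx \tfrac{\sigma_i^2}{\sum_j\sigma_j^2}n$ yields $\tfrac{\sigma_i^2}{k_{i,n}}\Tr(\Sigma\hat\Sigma_{i,n}^{-1}) \le \tfrac{\sum_j\sigma_j^2}{n}\,d + O\big(\sigma_{\min}^{-2}(d/n\lambda_{\min})^{3/2}\big)$. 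The Gaussian-concentration correction $c\sqrt{\log(3m/\delta)/d}$ multiplying the leading $d\sum_j\sigma_j^2/n$ produces exactly the $\tfrac{2\sum_j\sigma_j^2}{n}\log\tfrac{3m}{\delta}$ term in~\eqref{eq_thm_highprob} (after bounding $\sqrt{d}\cdot\sqrt{\log/d} = \sqrt{\log}$ and absorbing constants), while taking a maximum over the $m$ instances only affects the already-accounted union-bound factor inside the logarithm.

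I expect the main obstacle to be step two: getting the concentration of the quadratic form $\bepsilon^\transp M_i^{-1}\X_{i,n}^\transp \Sigma \X_{i,n} M_i^{-1}\bepsilon$ sharp enough that the deviation term is $O(\sqrt{d\log(m/\delta)})$ rather than $O(d)$ relative to the mean — i.e.\ exploiting that the matrix has effective rank $d$, not $k_{i,n}$ — and making sure this is valid \emph{conditionally} on the adaptively-chosen design, which requires the noise–design independence (Lemma~\ref{lm:indep_hatbeta_hatsigma} / Remark~1) so that $\bepsilon_{i,n}\mid\X_{i,n}$ is still an i.i.d.\ Gaussian vector. A secondary technical point is that $\Tr(\Sigma\hat\Sigma_{i,n}^{-1})$ (and hence $\E Q$) is itself random through the design, so the concentration statement must be uniform over the high-probability range of $\Tr(\Sigma\hat\Sigma_{i,n}^{-1})$ guaranteed by Proposition~\ref{prop:trace.concentration}; this is handled by conditioning on $\mathcal{E}$ first and then applying the noise concentration inside.
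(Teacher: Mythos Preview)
Your plan differs substantially from the paper's proof. The paper does \emph{not} control the Gaussian quadratic form $\|\hat\beta_i-\beta_i\|_\Sigma^2$ via a Hanson--Wright/$\chi^2$ tail bound; instead it bounds $\|\hat\beta_{i,t}-\beta_i\|_{\bar V_{i,t}/t}$ through the self-normalized martingale confidence ellipsoid of Abbasi-Yadkori et al.\ (restated as Theorem~\ref{th:martingale_conf_ellip}), which holds uniformly over all $t$ and therefore completely sidesteps the random-stopping-time issue you flag as the main obstacle. The passage to $\|\cdot\|_\Sigma$ is a deterministic norm comparison using the trace event, and the $\log\det(\bar V_t/\lambda I)$ inside the ellipsoid radius is bounded via $\sum_j\|X_j\|_{\bar V_t^{-1}}^2$ together with a separate sub-exponential bound on $\tfrac{1}{t}\sum_j\|\bar X_j\|_2^2\approx d$ (Corollary~\ref{cor:bounded_norm_obs_gauss}). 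Lemma~\ref{lm:indep_hatbeta_hatsigma} is not invoked at all for Theorem~\ref{thm_high_prob}.

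Your route is morally workable---one can push the measurability argument behind Lemma~\ref{lem:loss.learning} through so that $\hat\beta_{i,j}$ is conditionally Gaussian given the filtration that determines $\{k_{i,n}=j\}$, then apply a quadratic-form tail bound and sum over the disjoint events $\{k_{i,n}=j\}$---but the last step of your algebra does not recover the stated leading constant. For a rank-$d$ Gaussian quadratic form with mean $\Tr M$ one gets $Q\le \Tr M\big(1+c\sqrt{\log(1/\delta')/d}+c'\log(1/\delta')/d\big)$. Multiplying by $\Tr(\Sigma\hat\Sigma_{i,n}^{-1})\approx d$, the middle factor becomes $c\sqrt{d\log(3m/\delta)}$, not $2\log(3m/\delta)$: your parenthetical ``$\sqrt{d}\cdot\sqrt{\log/d}=\sqrt{\log}$'' is off by a factor $\sqrt d$ from what is actually needed, namely $d\cdot\sqrt{\log/d}=\sqrt{d\log}$. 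Hence your argument yields $\frac{\sum_j\sigma_j^2}{n}\big(d+O(\sqrt{d\log(m/\delta)})+O(\log(m/\delta))\big)$, the right order but not the constant $d+2\log(3m/\delta)$ in~\eqref{eq_thm_highprob}. The martingale approach attains that exact form because, after squaring the ellipsoid radius, the $d$ (from the norm sum) and the $2\log(3m/\delta)$ (from the confidence level) enter additively with no cross term.
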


Note that the first term in \eqref{eq_thm_highprob} corresponds to the first term of the loss for the optimal static allocation, and the second term is, again, a $n^{-3/2}$ deviation.
However, in this case, the guarantees hold \emph{simultaneously} for all the instances.

\textbf{Sketch of the proof of Theorem~\ref{thm_high_prob}.} In the proof we slightly modify the confidence ellipsoids for the $\hat\beta_{i,t}$'s, based on self-normalized martingales, and derived in \cite{abbasi2011improved}; see Thm.~\ref{th:martingale_conf_ellip} in App.~\ref{app:concentration}. By means of the confidence ellipsoids we control the loss in \eqref{eq:random.loss}.
Their radiuses depend on the number of samples per instance, and we rely on a high-probability events to compute a lower bound on the number of samples.
In addition, we need to make sure the mean norm of the contexts will not be too large (see Corollary~\ref{cor:bounded_norm_obs_gauss} in App.~\ref{app:concentration}).
Finally, we combine the lower bound on $k_{i, n}$ with the confidence ellipsoids to conclude the desired high-probability guarantees in Thm.~\ref{thm_high_prob}.

\textbf{High-Dimensional Setting.} 
High-dimensional linear models are quite common in practice, motivating the study of the $n < dm$ case, where the algorithms discussed so far break down.
We propose \sparsetraceucb in Appendix~\ref{app:high.dimensions}, an extension of \traceucb that assumes and takes advantage of \emph{joint} sparsity across the linear functions.
The algorithm has two-stages: first, an approximate support is recovered, and then, \traceucb is applied to the induced lower dimensional space.
We discuss and extend our high-probability guarantees to \sparsetraceucb under suitable standard assumptions in Appendix~\ref{app:high.dimensions}.


\section{Simulations}
\label{sims}

In this section, we provide empirical evidence to support our theoretical results. We consider both synthetic and real-world problems, and compare the performance (in terms of normalized MSE) of \traceucb to uniform sampling, optimal static allocation (which requires the knowledge of noise variances), and the context-free algorithm \varucb (see Remark~2).
We do not compare to GFSP-MAX and GAFS-MAX~\citep{antos2008active} since they are outperformed by CH-AS~\citet{carpentier2011upper} and \varucb is the same as CH-AS, except for the fact that we use the concentration inequality in Prop.~\ref{prop:sigma.concentration}, since we are estimating the variance from a regression problem using OLS.  

First, we use synthetic data to ensure that all the assumptions of our model are satisfied, namely we deal with linear regression models with Gaussian context and noise. We set the number of problem instances to $m=7$ and consider two scenarios: one in which all the noise variances are equal to $1$ and one where they are {\em not} equal, and $\sigma^2=(0.01,0.02,0.75,1,2,2,3)$.
In the latter case, $\sigma_{\max}^2 / \sigma_{\min}^2 = 300$.
We study the impact of (independently) increasing dimension $d$ and horizon $n$ on the performance, while keeping all other parameters fixed. Second, we consider real-world datasets in which the underlying model is non-linear and the contexts are not Gaussian, to observe how \traceucb behaves (relative to the baselines) in settings where its main underlying assumptions are violated. 

 \begin{figure*}[t!]
  \centering
  \subfigure[$\sigma^2 = (1, 1, 1, 1, 1, 1, 1)$.]{\includegraphics[width=0.66 \columnwidth]{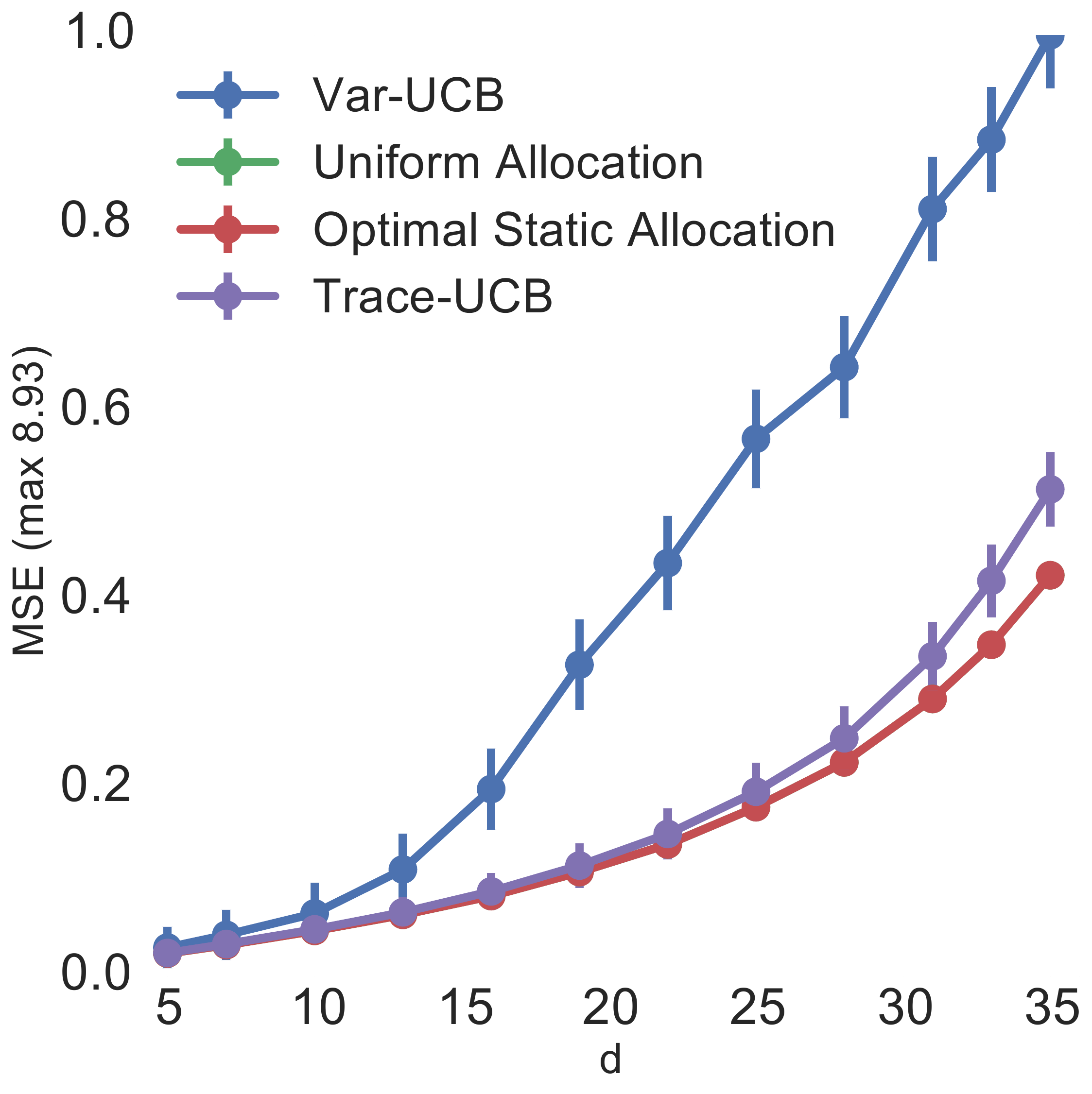}}\quad
 \subfigure[$\sigma^2 = (0.01, 0.02, 0.75, 1, 2, 2, 3)$.]{\includegraphics[width=0.66 \columnwidth]{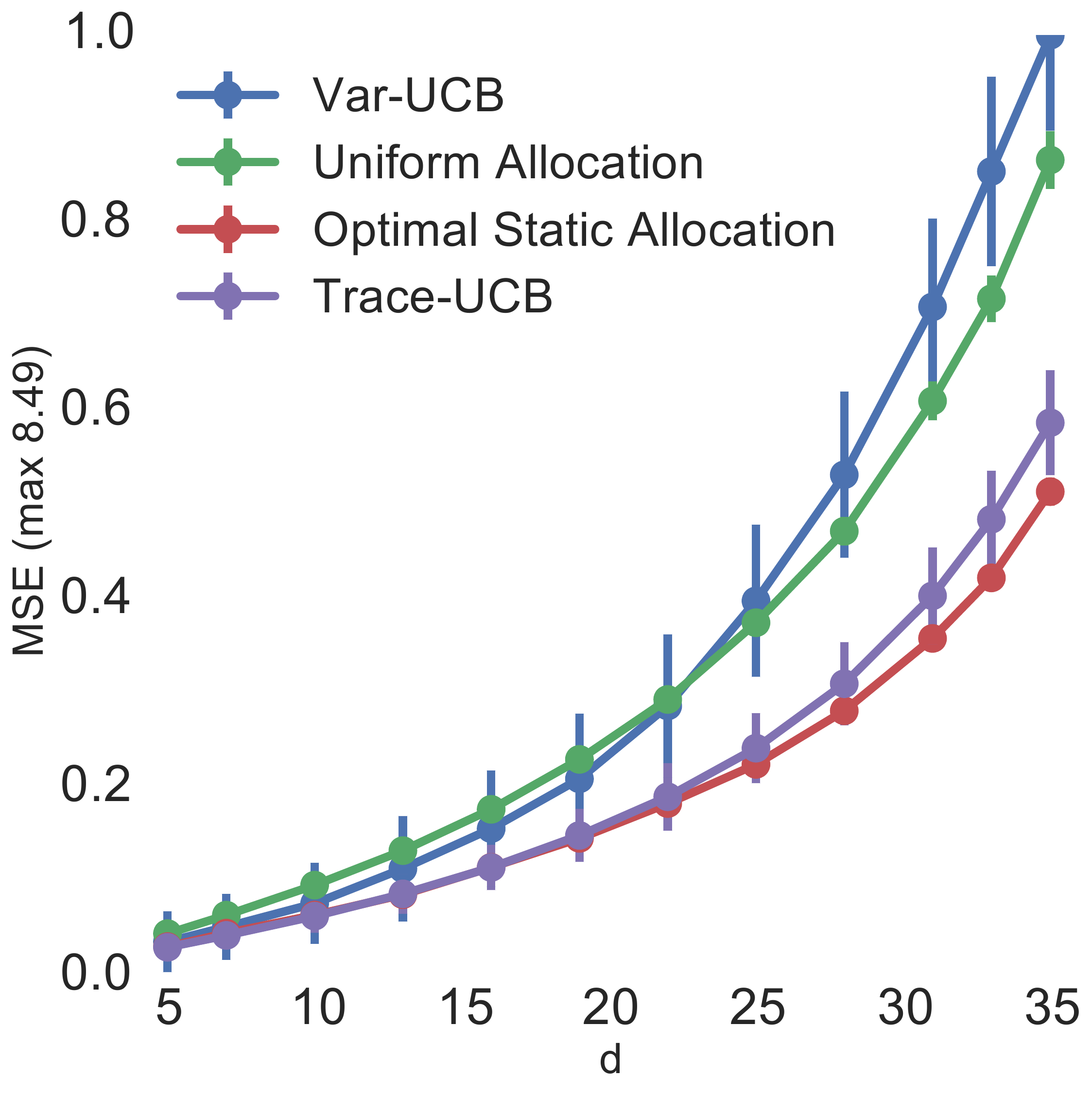}}\quad
   \subfigure[$\sigma^2 = (1, 1, 1, 1, 1, 1, 1)$.]{\includegraphics[width=0.66 \columnwidth]{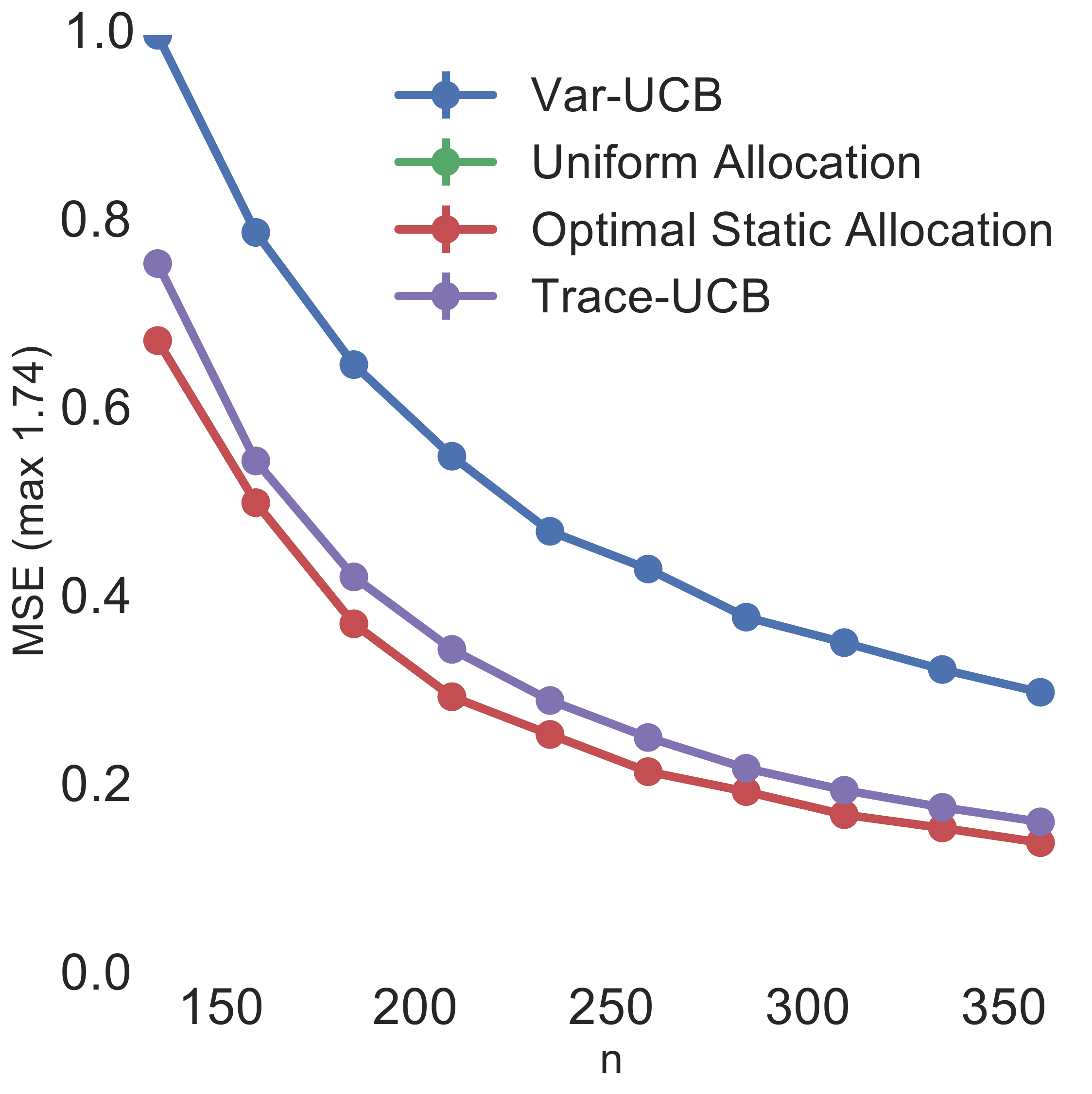}}
  \centering
 \subfigure[$\sigma^2 = (0.01, 0.02, 0.75, 1, 2, 2, 3)$.]{\includegraphics[width=0.66 \columnwidth]{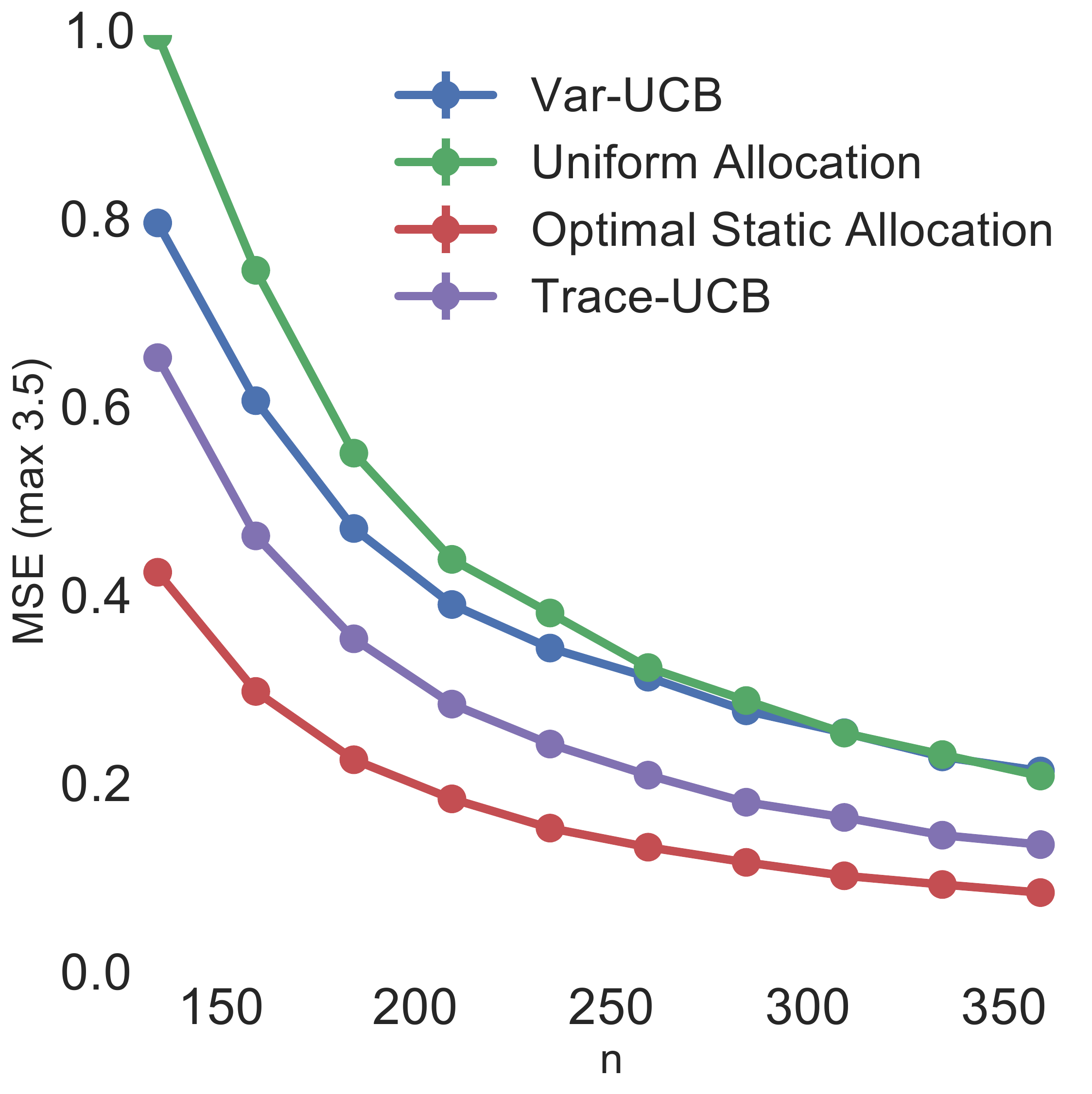}}
 \quad
   \subfigure[$\sigma^2 = (1, 1, 1, 1, 1, 1, 1)$.]{\includegraphics[width=0.66 \columnwidth]{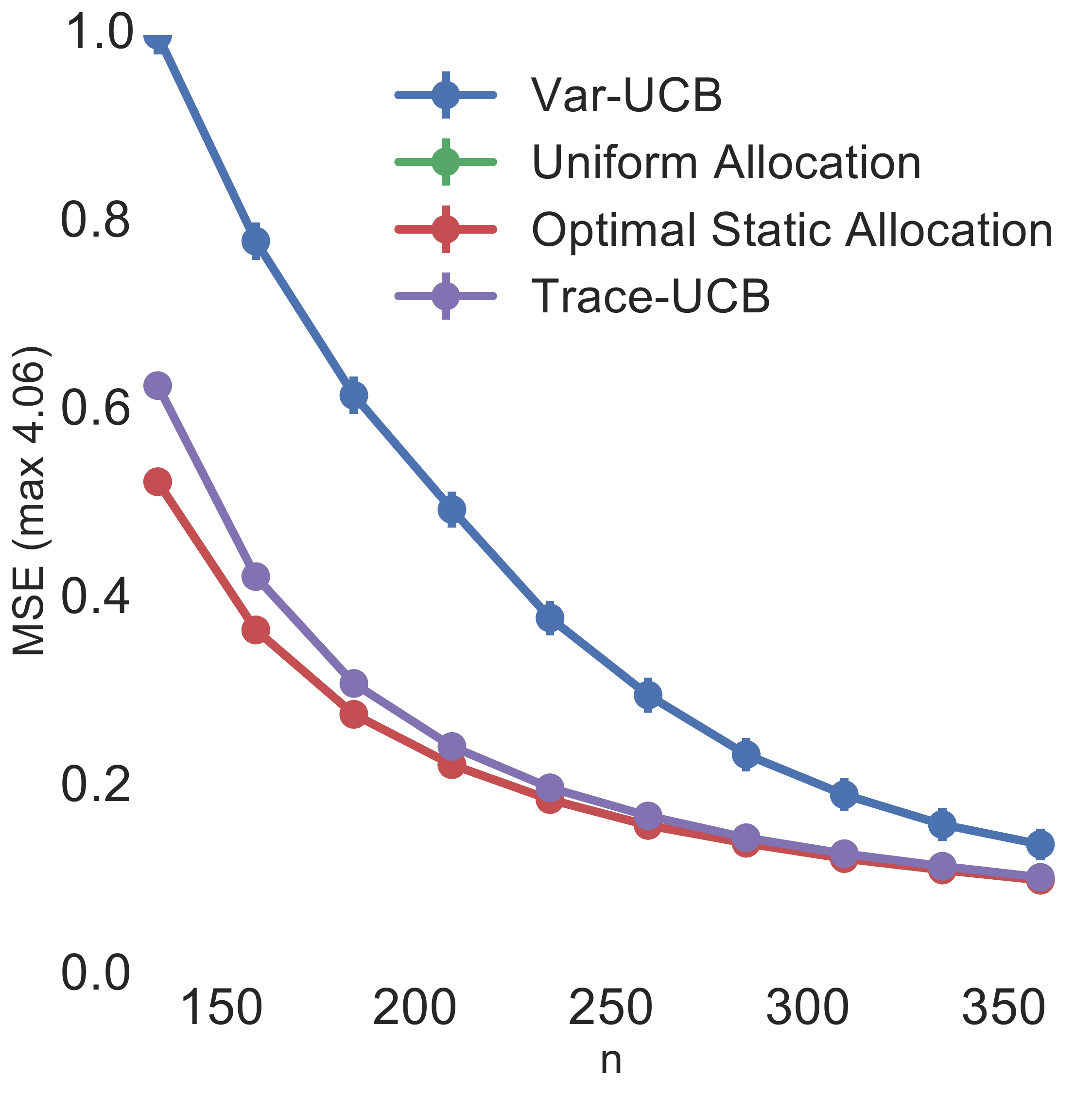}}\quad
 \subfigure[$\sigma^2 = (0.01, 0.02, 0.75, 1, 2, 2, 3)$.]{\includegraphics[width=0.66 \columnwidth]{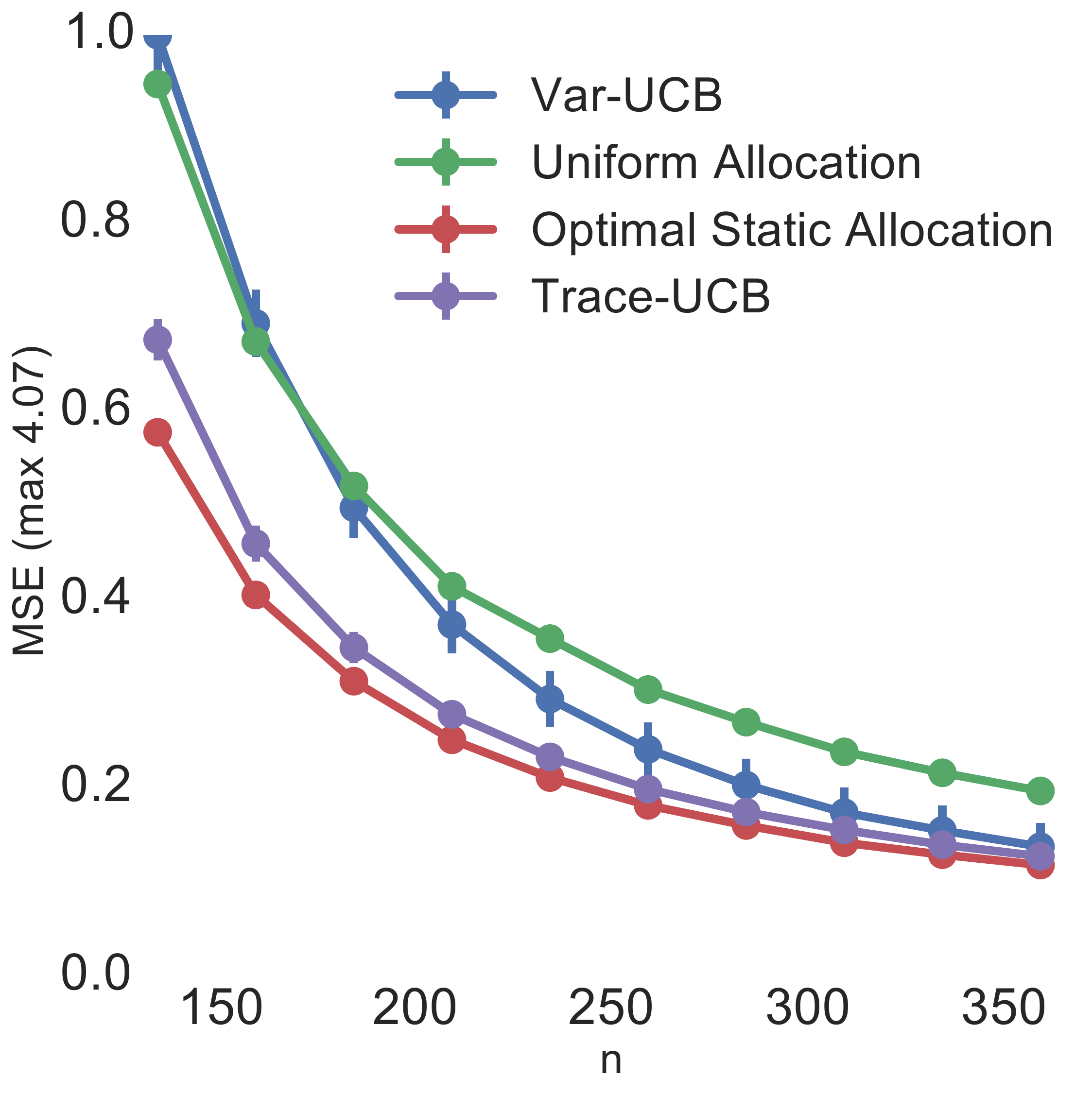}}
 \vspace{-0.15in}
\caption{White Gaussian synthetic data with $m = 7$. In Figures (a,b), we set $n = 350$. In Figures (c,d,e,f), we set $d = 10$.}
 \label{syn}
  \vspace{-0.1in}
\end{figure*}

{\bf{Synthetic Data.}} In Figures~\ref{syn}(a,b), we display the results for fixed horizon $n = 350$ and increasing dimension $d$.
For each value of $d$, we run $10,000$ simulations and report the median of the maximum error across the instances for each simulation. In Fig.~\ref{syn}(a), where $\sigma_i^2$'s are equal, uniform sampling and optimal static allocation execute the same allocation since there is no difference in the expected losses of different instances. Nonetheless we notice that \varucb suffers from poor estimation as soon as $d$ increases, while \traceucb is competitive with the optimal performance. This difference in performance can be explained by the fact that \varucb does not control for contextual balance, which becomes a dominant factor in the loss of a learning strategy for problems of high dimensionality. In Fig.~\ref{syn}(b), in which $\sigma_i^2$'s are different, uniform sampling is no longer optimal but even in this case \varucb performs better than uniform sampling only for small $d < 23$, where it is more important to control for the $\sigma_i^2$'s. For larger dimensions, balancing uniformly the contexts eventually becomes a better strategy, and uniform sampling outperforms \varucb. In this case too, \traceucb is competitive with the optimal static allocation even for large $d$, successfully balancing both noise variance and contextual error. 

\begin{figure*}[t!]
  \centering
  \subfigure{\includegraphics[width=0.66 \columnwidth]{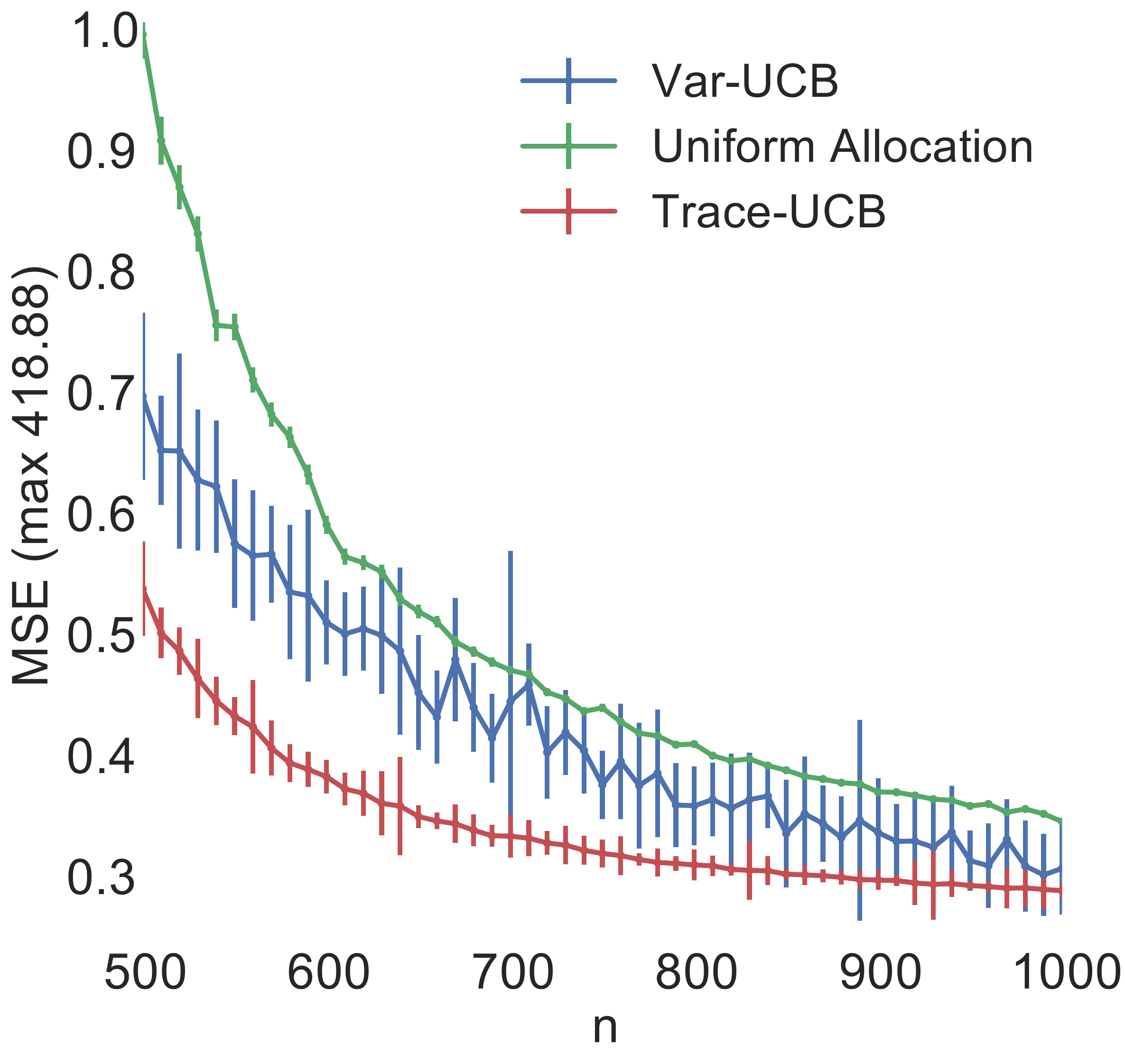}}\qquad\qquad\qquad\qquad
 \subfigure{\includegraphics[width=0.66 \columnwidth]{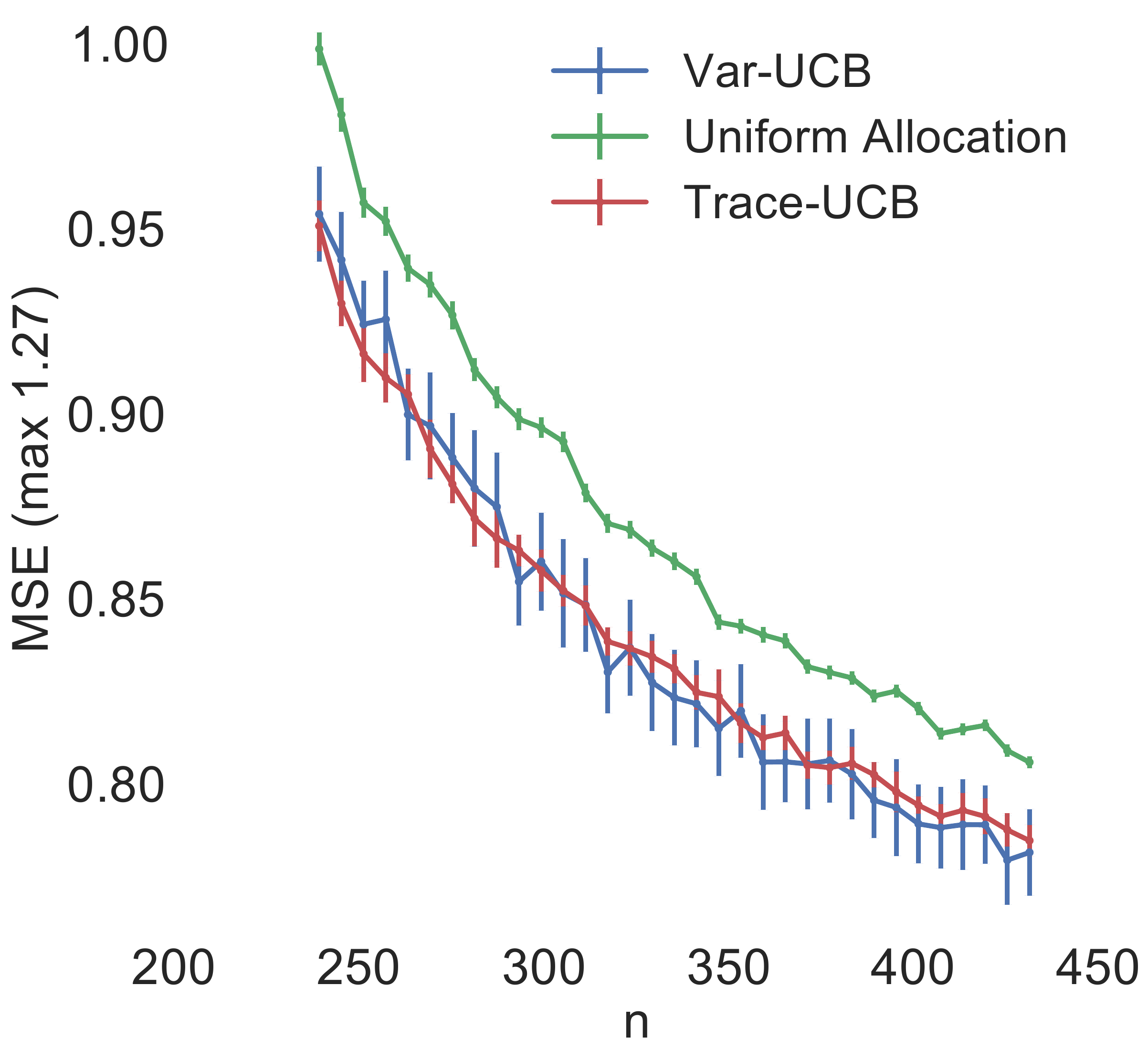}}
 \vspace{-0.15in}
\caption{Results on Jester \textit{(left)} with $d \!=\! 40, m \!=\! 10$ and MovieLens \textit{(right)} with $d \!=\! 25, m \!=\! 5$. Median over 1000 simulations.}
 \label{realdata_sim}
 \vspace{-0.075in}
\end{figure*}

Next, we study the performance of the algorithms w.r.t.\ $n$. We report two different losses, one in expectation~\eqref{eq_global_loss} and one in high probability~\eqref{eq:random.loss}, corresponding to the results we proved in Theorems~\ref{thm_exp_regret} and~\ref{thm_high_prob}, respectively. In order to approximate the loss in \eqref{eq_global_loss} (Figures~\ref{syn}(c,d)) we run $30,000$ simulations, compute the average prediction error for each instance $i \in [m]$, and finally report the maximum mean error across the instances. On the other hand, we estimate the loss in \eqref{eq:random.loss} (Figures~\ref{syn}(e,f)) by running $30,000$ simulations, taking the maximum prediction error across the instances for each simulation, and finally reporting their median.

In Figures~\ref{syn}(c, d), we display the loss for fixed dimension $d = 10$ and horizon from $n = 115$ to $360$. In Figure~\ref{syn}(c), \traceucb performs similarly to the optimal static allocation, whereas \varucb performs significantly worse, ranging from 25\% to 50\% higher errors than \traceucb, due to some catastrophic errors arising from unlucky contextual realizations for an instance.
In Fig.~\ref{syn}(d), as the number of contexts grows, uniform sampling's simple context balancing approach is enough to perform as well as \varucb that again heavily suffers from large mistakes. In both figures, \traceucb smoothly learns the $\sigma_i^2$'s and outperforms uniform sampling and \varucb. Its performance is comparable to that of the optimal static allocation, especially in the case of equal variances in Fig.~\ref{syn}(c). 

In Figure~\ref{syn}(e), \traceucb learns and properly balances observations extremely fast and obtains an almost optimal performance. Similarly to figures~\ref{syn}(a,c), \varucb struggles when variances $\hat\sigma_i^2$ are almost equal, mainly because it gets confused by random deviations in variance estimates $\hat\sigma_i^2$, while overlooking potential and harmful context imbalances. Note that even when $n = 360$ (rightmost point), its median error is still $25\%$ higher than \traceucb's. In Fig.~\ref{syn}(f), as expected, uniform sampling performs poorly, due to mismatch in variances, and only outperforms \varucb for small horizons in which uniform allocation pays off. On the other hand, \traceucb is able to successfully handle the tradeoff between learning and allocating according to variance estimates $\hat{\sigma}_i^2$, while accounting for the contextual trace $\widehat{\Sigma}_i$, even for very low $n$. We observe that for large $n$, \varucb eventually reaches the performance of the optimal static allocation and \traceucb.

In practice the loss in \eqref{eq:random.loss} (figures~\ref{syn}(e,f)) is often more relevant than \eqref{eq_global_loss}, since it is in high probability and not in expectation, and \traceucb shows excellent performance and robustness, regardless of the underlying variances $\sigma_i^2$.

{\bf{Real Data.}} \traceucb is based on assumptions such as linearity, and Gaussianity of noise and context that may not hold in practice, where data may show complex dependencies. Therefore, it is important to evaluate the algorithm with real-world data to see its robustness to the violation of its assumptions. We consider two collaborative filtering datasets in which users provide ratings for items. We choose a dense subset of $k$ users and $p$ items, where every user has rated every item. Thus, each user is represented by a $p$-dimensional vector of ratings. We define the user context by $d$ out of her $p$ ratings, and learn to predict her remaining $m=p - d$ ratings (each one is a problem instance).
All item ratings are first centered, so each item's mean is zero.
In each simulation, $n$ out of the $k$ users are selected at random to be fed to the algorithm, also in random order. Algorithms can select any instance as the dataset contains the ratings of every instance for all the users. At the end of each simulation, we compute the prediction error for each instance by using the $k - n$ users that did not participate in training for that simulation. Finally, we report the median error across all simulations. 


Fig.~\ref{realdata_sim}(a) reports the results using the Jester Dataset by~\cite{goldberg2001eigentaste} that consists of joke ratings in a continuous scale from $-10$ to $10$. We take $d=40$ joke ratings as context and learn the ratings for another $9$ jokes. In addition, we add another function that counts the total number of movies originally rated by the user. The latter is also centered, bounded to the same scale, and has higher variance (without conditioning on $X$). The number of total users is $k=3811$, and $m=10$. When the number of observations is limited, the advantage of \traceucb is quite significant (the improvement w.r.t.\ uniform allocation goes from 45\% to almost 20\% for large $n$, while w.r.t.\ \varucb it goes from almost 30\% to roughly 5\%), even though the model and context distribution are far from linear and Gaussian, respectively.


Fig.~\ref{realdata_sim}(b) shows the results for the MovieLens dataset~\cite{harper2016movielens} that consists of movie ratings between $0$ and $5$ with $0.5$ increments. We select $30$ popular movies rated by $k=1363$ users, and randomly choose $m=5$ of them to learn (so $d = 25$). In this case, all problems have similar variance ($\hat{\sigma}_{\max}^2 / \hat{\sigma}_{\min}^2 \approx 1.3$) so uniform allocation seems appropriate.
Both \traceucb and \varucb modestly improve uniform allocation, while their performance is similar.

\vspace{-0.05in}
\section{Conclusions}
\vspace{-0.05in}
We studied the problem of adaptive allocation of $n$ contextual samples of dimension $d$ to estimate $m$ linear functions equally well, under heterogenous noise levels $\sigma_i^2$ that depend on the linear instance and are unknown to the decision-maker.
We proposed \traceucb, an optimistic algorithm that successfully solves the exploration-exploitation dilemma by simultaneously learning the $\sigma_i^2$'s, allocating samples accordingly to their estimates, and balancing the contextual information across the instances.
We also provide strong theoretical guarantees for two losses of interest: in expectation and high-probability.
Simulations were conducted in several settings, with both synthetic and real data.
The favorable results suggest that \traceucb is reliable, and remarkably robust even in settings that fall outside its assumptions, thus, a useful and simple tool to implement in practice.

{\small
\textbf{\small Acknowledgements.}
\label{sec:Acknowledgements}
A.~Lazaric is supported by French Ministry of
Higher Education and Research, Nord-Pas-de-Calais Regional Council and French National Research Agency projects ExTra-Learn (n.ANR-14-CE24-0010-01).}

%
%
%

\bibliography{pure_exploration_gaussian_bandits}
\bibliographystyle{icml2017}

\onecolumn
\appendix

\newpage

\section{Optimal Static Allocation }
\label{app:optimal.static}

\subsection{Proof of Proposition~\ref{p:optimal.static}}
\label{app:optimal.static1}

\begin{proposition*}
Given $m$ linear regression problems, each characterized by a parameter $\beta_i$, Gaussian noise with variance $\sigma_i^2$, and Gaussian contexts with covariance $\Sigma$, let $n > m(d+1)$, then the optimal OLS static allocation algorithm $\mathcal{A}^*_{\text{stat}}$ selects each instance
\begin{equation}\label{eq_opt_stat_alloc_app}
k_{i,n}^* = \frac{\sigma_i^2}{\sum_j \sigma_j^2} \ n + (d+1) \ \left( 1 - \frac{\sigma_i^2}{\wb{\sigma}^2} \right),
\end{equation}
times (up to rounding effects), and incurs the global error
\begin{equation}\label{eq_opt_stat_loss_app}
L^*_n = L_n(\mathcal{A}^*_{\text{stat}}) = \wb{\sigma}^2 \frac{md}{n} + O \left( \wb{\sigma}^2 \left( \frac{md}{n} \right)^2 \right).
\end{equation}
\end{proposition*}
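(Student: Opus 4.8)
The plan is to reduce the proposition to a small convex optimization over the allocation vector $k=(k_{1,n},\dots,k_{m,n})$ and solve it in closed form. First I would recall how \eqref{eq_global_loss.static2} follows from \eqref{eq_global_loss.static}: since $\mathcal{A}_{\text{stat}}$ never looks at the contexts, for each instance $i$ the Gram matrix $\X_{i,n}^\transp\X_{i,n}$ is Wishart $W_d(\Sigma,k_{i,n})$, so $\wh\Sigma_{i,n}^{-1}=k_{i,n}(\X_{i,n}^\transp\X_{i,n})^{-1}$ has expectation $\tfrac{k_{i,n}}{k_{i,n}-d-1}\Sigma^{-1}$ (finite once $k_{i,n}\ge d+2$), hence $\Tr\big(\Sigma\,\E[\wh\Sigma_{i,n}^{-1}]\big)=\tfrac{k_{i,n}d}{k_{i,n}-d-1}$ and $L_n(\mathcal{A}_{\text{stat}})=\max_i g_i(k_{i,n})$ with $g_i(k)=d\sigma_i^2/(k-d-1)$. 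So $\mathcal{A}^*_{\text{stat}}$ is the minimizer of $\max_{i\in[m]}g_i(k_i)$ subject to $\sum_i k_i=n$, relaxing the $k_i$ to reals (this is the ``up to rounding effects'').

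Next I would solve this by equalization. Each $g_i$ is positive and strictly decreasing on $(d+1,\infty)$, so I define $k^*$ by $g_i(k_i^*)=c$ for a common value $c$, with $c$ fixed by the budget constraint. From $g_i(k_i^*)=c$ we get $k_i^*=d+1+d\sigma_i^2/c$; summing and imposing $\sum_i k_i^*=n$ gives $c=d\sum_j\sigma_j^2/(n-m(d+1))=dm\wb\sigma^2/(n-m(d+1))$, which is positive and makes each $k_i^*>d+1$ precisely because $n>m(d+1)$, so $k^*$ is feasible. It is optimal: any other feasible $k$ has the same total $n$, hence under-allocates some instance, $k_i<k_i^*$, and then $g_i(k_i)>g_i(k_i^*)=c$, so $\max_j g_j(k_j)>c$. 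Substituting $c$ back,
\[
k_i^* = d+1+\sigma_i^2\,\frac{n-m(d+1)}{\sum_j\sigma_j^2}=\frac{\sigma_i^2}{\sum_j\sigma_j^2}\,n+(d+1)\Big(1-\frac{m\sigma_i^2}{\sum_j\sigma_j^2}\Big),
\]
which is \eqref{eq_opt_stat_alloc_app} once $\sum_j\sigma_j^2$ is written as $m\wb\sigma^2$. Finally $L_n^* = c = dm\wb\sigma^2/(n-m(d+1))$, and expanding $\tfrac{1}{1-m(d+1)/n}$ as a geometric series (legitimate exactly when $n>m(d+1)$) gives $L_n^* = \wb\sigma^2\tfrac{md}{n}+O\!\big(\wb\sigma^2(md/n)^2\big)$, i.e. \eqref{eq_opt_stat_loss_app}.

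The algebra here is routine; the only points needing care are the ones hidden in the relaxation. The hypothesis $n>m(d+1)$ is used exactly to guarantee $c>0$ and $k_i^*>d+1$, which is what makes the inverse-Wishart mean formula legitimate and keeps the implicit constraint $k_i\ge d+2$ slack in the regime of interest; rounding the real-valued $k_i^*$ to integers changes each by at most one and perturbs $L_n^*$ only at order $1/n^2$, already inside the stated $O(\wb\sigma^2(md/n)^2)$ error term. Making this last bookkeeping fully precise --- including the degenerate case of an instance with $\sigma_i^2$ so small that its rounded allocation would hit $d+1$ --- is the fiddliest part of the argument, but it leaves the leading-order statement untouched.
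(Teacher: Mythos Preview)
Your proof is correct and follows essentially the same route as the paper's own argument: both reduce \eqref{eq_global_loss.static} to $\max_i d\sigma_i^2/(k_i-d-1)$ via the inverse-Wishart mean, equalize these terms to obtain the optimal real-valued allocation, and then expand the resulting loss in powers of $m(d+1)/n$. Your version is in fact slightly more explicit than the paper's in justifying that the equalization point is the true minimizer (via the under-allocation argument) and in discussing the rounding.
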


\begin{proof}
For the sake of readability in the following we drop the dependency on $n$.

We first derive the equality in Eq.~\ref{eq:mse}
\begin{align*}
L_i(\hat\beta_i) &= \E_{X}\big[(X^\transp \beta_i - X^\transp \hat\beta_i)^2\big] \\
&= \E_X[(\hat\beta_i - \beta_i)^\transp XX^\transp (\hat\beta_i - \beta_i)] \\
&= (\hat\beta_i - \beta_i)^\transp \E[XX^\transp] (\hat\beta_i - \beta_i) \\
&= (\hat\beta_i - \beta_i)^\transp \Sigma (\hat\beta_i - \beta_i) \\
&= \| \beta_i -\hat\beta_i\|^2_{\Sigma}.
\end{align*}
As a result, we can write the global error as
\begin{align*}
L_n(\mathcal{A}_{\text{stat}}) &= \max_{i \in [m]} \ \E_{\mathcal{D}_{i,n}} \Big[ \| \beta_i -\hat\beta_i\|^2_{\Sigma} \Big] \\
&= \max_{i \in [m]} \ \E_{\mathcal{D}_{i,n}} \bigg[ \Tr \Big( (\beta_i -\hat\beta_i)^\transp \Sigma (\beta_i -\hat\beta_i)\Big) \bigg] \\
&= \max_{i \in [m]} \ \E_{\mathcal{D}_{i,n}} \bigg[ \Tr \Big( \Sigma (\beta_i -\hat\beta_i)(\beta_i -\hat\beta_i)^\transp\Big) \bigg] \\
&= \max_{i \in [m]} \ \Tr \bigg( \E_{\mathcal{D}_{i,n}} \Big[  \Sigma(\beta_i -\hat\beta_i)(\beta_i -\hat\beta_i)^\transp\Big] \bigg),
\end{align*}
where $\mathcal{D}_{i,n}$ is the training set extracted from $\mathcal{D}_{n}$ containing the samples for instance $i$. Since contexts and noise are independent random variables, we can decompose $\mathcal{D}_{i,n}$ into the randomness related to the context matrix $\X_i \in \R^{k_i\times d}$ and the noise vector $\boldsymbol{\epsilon}_i \in \R^{k_i}$. We recall that for any fixed realization of $\X_i \in \R^{k_i\times d}$, the OLS estimates $\wh\beta_i$ is distributed as
\begin{equation}\label{eq:ols.distribution}
\hat\beta_i \mid \X_{i} \sim \mathcal{N}(\beta_i, \sigma_i^2 (\X_{i}^\transp \X_{i})^{-1}),
\end{equation}
which means that $\hat\beta_i$ conditioned on $\X_{i}$ is unbiased with covariance matrix given by $\sigma_i^2 (\X_i^\transp \X_i)^{-1}$. Thus, we can further develop $L_n(\mathcal{A}_{\text{stat}})$ as
\begin{align}\label{eq:loss.stat.allocation.step}
L_n(\mathcal{A}_{\text{stat}}) &= \max_{i \in [m]} \ \Tr \bigg( \E_{\X_i} \bigg[ \E_{\boldsymbol{\epsilon}_{i}} \Big[ \Sigma(\beta_i -\hat\beta_i)(\beta_i -\hat\beta_i)^\transp \big| \X_i\Big] \bigg]\bigg),\\
&=\max_{i \in [m]} \ \sigma_i^2 \Tr \bigg( \Sigma \E_{\X_i} \Big[ (\X_{i}^\transp \X_{i})^{-1} \Big] \bigg) \nonumber\\
&= \max_{i \in [m]} \ \sigma_i^2 \Tr \bigg( \E_{\X_i} \Big[ (\wb\X_{i}^\transp \wb\X_{i})^{-1} \Big] \bigg),\nonumber
\end{align}
where $\wb X = \Sigma^{-1/2} X$ is a whitened context and $\wb\X_i$ is its corresponding whitened matrix. Since whitened contexts $\wb X$ are distributed as $\mathcal{N}(0, I)$, we know that $(\wb\X_{i}^\transp \wb\X_{i})^{-1}$ is distributed as an inverse Wishart $\mathcal{W}^{-1}(I_d, k_i)$, whose expectation is $I_d / (k_i-d-1)$, and thus,
\begin{align}\label{eq:loss.static}
L_n(\mathcal{A}_{\text{stat}}) &= \max_{i \in [m]} \ \sigma_i^2 \Tr \left[ \frac{1}{k_i - d - 1} I_d \right] = \max_{i \in [m]} \ \frac{\sigma_i^2 \ d}{k_i - d - 1}.
\end{align}
Note that this final expression requires that $k_i > d+1$, since it is not possible to compute an OLS estimate with less than $d+1$ samples. Therefore, we proceed by minimizing Eq.~\ref{eq:loss.static}, subject to $k_i > d+1$. We write $k_i = k_i^{\prime} + d + 1$ for some $k_i^{\prime} > 0$. Thus, equivalently, we minimize
\begin{equation}\label{eq_static_loss_objective_transf}
L_n(\mathcal{A}_{\text{stat}}) = \max_i \ \frac{\sigma_i^2 \ d}{k_i^{\prime}}.
\end{equation}
Since $\sum_i k_i^\prime = n - m(d+1)$, we may conclude that the optimal $k_i^\prime$ is given by
\begin{equation*}
k_i^\prime = \frac{\sigma_i^2}{\sum_j \sigma_j^2} \ \big(n - m(d+1)\big),
\end{equation*}
so that all the terms in the RHS of Eq.~\ref{eq_static_loss_objective_transf} are equal. This gives us the optimal static allocation 
\begin{align}
k_i^* &= \frac{\sigma_i^2}{\sum_j \sigma_j^2} \left(n - m(d+1)\right) + d + 1 \nonumber \\
&= \frac{\sigma_i^2}{\sum_j \sigma_j^2} n + (d+1) \left(1 - \frac{\sigma_i^2}{\wb\sigma^2} \right), \label{eq:opt_alloc_gaussian}
\end{align}
where $\wb\sigma^2 = (1/m) \sum_i \sigma_i^2$ is the mean variance across the $m$ problem instances.

Thus, for the optimal static allocation, the expected loss is given by
\begin{align*}
L^*_n = L_n(\mathcal{A}^*_{\text{stat}}) &= d \max_i \ \frac{\sigma_i^2}{\frac{\sigma_i^2}{\sum_j \sigma_j^2} \ n - (d+1) \frac{\sigma_i^2}{\bar \sigma^2}} \\
&= \frac{\left( \sum_j \sigma_j^2 \right) d}{n - m (d+1)} \\
&= \frac{\left( \sum_j \sigma_j^2 \right) d}{n} + \frac{\left( \sum_j \sigma_j^2 \right) md(d+1)}{n\big(n - m(d+1)\big)} \\
&= \frac{\left( \sum_j \sigma_j^2 \right) d}{n} + O \left( \frac{\left( \sum_j \sigma_j^2 \right) m d^2}{n^2} \right),
\end{align*}
which concludes the proof. Furthermore the following bounds trivially holds for any $n \geq 2m(d+1)$
\begin{align*}
\frac{md \wb\sigma^2}{n} \leq L^*_n \leq 2\frac{md \wb\sigma^2}{n}.
\end{align*}

\end{proof}


%
%
%
%
%

\newpage

\section{Loss of an OLS-based Learning Algorithm (Proof of Lemma~\ref{lem:loss.learning})}
\label{app:ols.loss.learning}

Unlike in the proof of Proposition~\ref{p:optimal.static}, when the number of pulls is random \emph{and} depends on the value of the previous observations (through $\mathcal{D}_n$), then in general, the OLS estimates $\wh\beta_{i,n}$ are no longer distributed as Eq.~\ref{eq:ols.distribution} and the derivation for $\mathcal{A}_{\text{stat}}$ no longer holds. In fact, for a learning algorithm, the value $k_{i,t}$ itself provides some information about the observations that have been obtained up until time $t$ and were used by the algorithm to determine $k_{i,t}$. In the following, we show that by ignoring the current context $X_t$ when choosing instance $I_t$, we are still able to analyze the loss of \traceucb and obtain a result very similar to the static case.

We first need two auxiliary lemmas (Lemmas~\ref{lm:recurrent_form_hatsigma} and~\ref{lm:indep_hatbeta_hatsigma}), one on the computation of an empirical estimate of the variance of the noise, and an independence result between the variance estimate and the linear regression estimate.


\begin{lemma}\label{lm:recurrent_form_hatsigma}
In any linear regression problem with noise $\epsilon\sim\mathcal{N}(0,\sigma^2)$, after $t \ge d+1$ samples, given an OLS estimator $\wh\beta_{t}$, the noise variance estimator can be computed in a recurrent form as
\begin{equation}\label{eq:recurrent.variance}
\hat{\sigma}_{t+1}^2 = \frac{t-d}{t-d+1} \ \hat{\sigma}_{t}^2 + \frac{1}{t-d+1} \ \frac{(X_{t+1}^\transp \hat{\beta}_t - Y_{t+1})^2}{1 + X_{t+1}^\transp (\X_{t}^\transp \X_{t})^{-1} X_{t+1}},
\end{equation}
where $\X_t\in\R^{t\times d}$ is the sample matrix.
\end{lemma}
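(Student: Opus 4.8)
The identity is purely algebraic (the Gaussianity of the noise is irrelevant here), so the plan is to establish the corresponding rank-one update for the residual sum of squares and then divide by the appropriate degrees of freedom. Write $\mathrm{RSS}_t = \|\Y_t - \X_t\hat\beta_t\|^2$, so that $\hat\sigma_t^2 = \mathrm{RSS}_t/(t-d)$, let $e_{t+1} = Y_{t+1} - X_{t+1}^\transp\hat\beta_t$ be the one-step prediction error on the new point, and $c_{t+1} = 1 + X_{t+1}^\transp(\X_t^\transp\X_t)^{-1}X_{t+1}$. Multiplying~\eqref{eq:recurrent.variance} by $t-d+1$, it suffices to prove
\begin{equation*}
\mathrm{RSS}_{t+1} = \mathrm{RSS}_t + \frac{e_{t+1}^2}{c_{t+1}}.
\end{equation*}

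First I would derive the recursive update of the OLS estimate. Setting $A_t = \X_t^\transp\X_t$, we have $A_{t+1} = A_t + X_{t+1}X_{t+1}^\transp$, and the Sherman--Morrison formula gives $A_{t+1}^{-1} = A_t^{-1} - g_{t+1}g_{t+1}^\transp/c_{t+1}$ with $g_{t+1} = A_t^{-1}X_{t+1}$. Substituting $\X_{t+1}^\transp\Y_{t+1} = \X_t^\transp\Y_t + X_{t+1}Y_{t+1}$ into $\hat\beta_{t+1} = A_{t+1}^{-1}\X_{t+1}^\transp\Y_{t+1}$ and simplifying with the identities $A_t^{-1}\X_t^\transp\Y_t = \hat\beta_t$, $g_{t+1}^\transp\X_t^\transp\Y_t = X_{t+1}^\transp\hat\beta_t$ and $g_{t+1}^\transp X_{t+1} = c_{t+1}-1$, the $Y_{t+1}$-terms collapse and I obtain $\hat\beta_{t+1} = \hat\beta_t + (e_{t+1}/c_{t+1})\,g_{t+1}$.

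Next I would plug this into $\mathrm{RSS}_{t+1} = \|\Y_{t+1} - \X_{t+1}\hat\beta_{t+1}\|^2$, splitting the residual vector into its first $t$ coordinates, $\Y_t - \X_t\hat\beta_t - (e_{t+1}/c_{t+1})\X_t g_{t+1}$, and its last coordinate, which reduces to $e_{t+1}/c_{t+1}$ using $X_{t+1}^\transp g_{t+1} = c_{t+1}-1$. Expanding the squared norm of the first block, the cross term is proportional to $(\Y_t - \X_t\hat\beta_t)^\transp\X_t g_{t+1}$, which vanishes because the time-$t$ OLS residuals are orthogonal to the columns of $\X_t$ (normal equations), while $\|\X_t g_{t+1}\|^2 = g_{t+1}^\transp A_t g_{t+1} = X_{t+1}^\transp A_t^{-1}X_{t+1} = c_{t+1}-1$. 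Adding the three surviving pieces yields $\mathrm{RSS}_{t+1} = \mathrm{RSS}_t + (c_{t+1}-1)e_{t+1}^2/c_{t+1}^2 + e_{t+1}^2/c_{t+1}^2 = \mathrm{RSS}_t + e_{t+1}^2/c_{t+1}$; dividing by $t-d+1$ and using $e_{t+1}^2 = (X_{t+1}^\transp\hat\beta_t - Y_{t+1})^2$ recovers exactly~\eqref{eq:recurrent.variance}.

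There is no genuine obstacle: the argument is a bounded algebraic computation. The only things to watch are carrying the Sherman--Morrison scalar $c_{t+1}$ consistently all the way from the update of $\hat\beta$ through the expansion of $\mathrm{RSS}_{t+1}$, and noticing that the cross term in that expansion is annihilated by the normal equations, which is precisely what makes the update a single rank-one correction.
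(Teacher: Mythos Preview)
Your proposal is correct. Both your argument and the paper's proof hinge on the Sherman--Morrison update of $(\X_t^\transp\X_t)^{-1}$, but the bookkeeping is organized differently. The paper first rewrites $\mathrm{RSS}_t$ in terms of the noise vector as $E_t - V_t$ with $E_t=\bepsilon_t^\transp\bepsilon_t$ and $V_t=\bepsilon_t^\transp\X_t(\X_t^\transp\X_t)^{-1}\X_t^\transp\bepsilon_t$, then derives recursions for $E_{t+1}$ and $V_{t+1}$ separately via a fairly long brute-force expansion before recombining. You instead first obtain the recursive least-squares update $\hat\beta_{t+1}=\hat\beta_t+(e_{t+1}/c_{t+1})g_{t+1}$ and plug it directly into $\mathrm{RSS}_{t+1}$, using the normal equations $\X_t^\transp(\Y_t-\X_t\hat\beta_t)=0$ to annihilate the cross term. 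Your route is shorter and more transparent (the orthogonality step replaces several lines of algebra in the paper), and it has the side benefit of delivering the $\hat\beta_{t+1}$ recursion explicitly, which the paper re-derives independently later in Lemma~\ref{lm:indep_hatbeta_hatsigma}. The paper's decomposition, on the other hand, keeps the noise $\bepsilon$ visible throughout, which foreshadows how it is used in the subsequent independence argument.
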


\begin{proof}
We first recall the ``batch'' definition of the variance estimator
\begin{align*}
\wh\sigma_{t}^2 = \frac{1}{t-d} \sum_{s=1}^t (Y_s - X_s^\transp \wh\beta_{t})^2 = \frac{1}{t-d} \| \Y_t - \X_t^\transp \wh\beta_t\|^2
\end{align*}
Since $\Y_t = \X_t \beta + \bepsilon_t$ and $\wh\beta_t = \beta + (\X_t^\transp \X_t)^{-1} \X_t^\transp \bepsilon_t$, we have
\begin{align*}
\wh\sigma_{t}^2 = \frac{1}{t-d} \| (\X_t^\transp \X_t)^{-1} \X_t^\transp \bepsilon_t - \bepsilon_t \|^2 = \frac{1}{t-d} \Big(\bepsilon_t^\transp \bepsilon_t - \bepsilon_t^\transp \X_t (\X_t^\transp \X_t)^{-1} \X_t^\transp \bepsilon_t\Big) = \frac{1}{t-d} (E_{t+1} - V_{t+1}).
\end{align*}
We now devise a recursive formulation for the two terms in the previous expression. We have
\begin{align*}
E_{t+1} = \bepsilon_{t+1}^\transp \bepsilon_{t+1} =  \bepsilon_{t}^\transp \bepsilon_{t} + \epsilon_{t+1}^2 = E_t + \epsilon_{t+1}^2.
\end{align*}
In order to analyze the second term we first introduce the design matrix $\St_t = \X_t^\transp \X_t$, which has the simple update rule $\St_{t+1} = \St_t + X_{t+1}X_{t+1}^\transp$. Then we have
\begin{align*}
V_{t+1} &= \bepsilon_{t+1}^\transp \X_{t+1} (\X_{t+1}^\transp \X_{t+1})^{-1} \X_{t+1}^\transp \bepsilon_{t+1} \\
&= \big(\bepsilon_{t}^\transp \X_{t} + \epsilon_{t+1}X_{t+1}^\transp\big) \big(\St_t + X_{t+1}X_{t+1}^\transp\big)^{-1} \big(\bepsilon_{t}^\transp \X_{t} + \epsilon_{t+1}X_{t+1}^\transp\big)^\transp\\
&= \big(\bepsilon_{t}^\transp \X_{t} + \epsilon_{t+1}X_{t+1}^\transp\big) \bigg(\St_t^{-1} - \frac{\St_t^{-1} X_{t+1} X_{t+1}^\transp \St_t^{-1} }{1 + X_{t+1}^\transp \St_t^{-1} X_{t+1} }\bigg) \big(\bepsilon_{t}^\transp \X_{t} + \epsilon_{t+1}X_{t+1}^\transp\big)^\transp,
\end{align*}
where we used the Sherman-Morrison formula in the last equality. We further develop the previous expression as
\begin{align*}
V_{t+1} &= V_t + \epsilon_{t+1}X_{t+1}^\transp \St_t^{-1} X_{t+1} \epsilon_{t+1} + 2\epsilon_{t+1}X_{t+1}^\transp \St_t^{-1} \X_{t}^\transp \bepsilon_{t} \\
& - \bepsilon_{t}^\transp \X_{t} \frac{\St_t^{-1} X_{t+1} X_{t+1}^\transp \St_t^{-1} }{1 + X_{t+1}^\transp \St_t^{-1} X_{t+1} }\X_{t}^\transp \bepsilon_{t} - \epsilon_{t+1}X_{t+1}^\transp \frac{\St_t^{-1} X_{t+1} X_{t+1}^\transp \St_t^{-1} }{1 + X_{t+1}^\transp \St_t^{-1} X_{t+1} } X_{t+1} \epsilon_{t+1} - 2 \bepsilon_{t}^\transp \X_{t} \frac{\St_t^{-1} X_{t+1} X_{t+1}^\transp \St_t^{-1} }{1 + X_{t+1}^\transp \St_t^{-1} X_{t+1} } X_{t+1} \epsilon_{t+1}.
\end{align*}
We define $\alpha_{t+1} = X_{t+1}^\transp \St_t^{-1} \X_{t}^\transp \bepsilon_{t} $ and $\psi_{t+1} = X_{t+1}^\transp \St_t^{-1} X_{t+1}$, and then obtain
\begin{align*}
V_{t+1} &= V_t + \epsilon_{t+1}^2 \psi_{t+1} + 2\alpha_{t+1}\epsilon_{t+1} - \frac{\alpha_{t+1}^2}{1+\psi_{t+1}} - \epsilon_{t+1}^2 \frac{\psi_{t+1}^2}{1-\psi_{t+1}} - 2\epsilon_{t+1}\frac{\alpha_{t+1}\psi_{t+1}}{1+\psi_{t+1}} \\
&= V_t + \epsilon_{t+1}^2 \Big(\psi_{t+1} + \frac{\psi_{t+1}^2}{1+\psi_{t+1}}\Big) + 2\epsilon_{t+1}\frac{\alpha_{t+1}}{1+\psi_{t+1}} - \frac{\alpha_{t+1}^2}{1+\psi_{t+1}}.
\end{align*}
Bringing everything together we obtain
\begin{align*}
E_{t+1} - V_{t+1} &= E_t - V_t + \epsilon_{t+1}^2\Big( 1 -\psi_{t+1} + \frac{\psi_{t+1}^2}{1+\psi_{t+1}}\Big) - 2\epsilon_{t+1}\frac{\alpha_{t+1}}{1+\psi_{t+1}} + \frac{\alpha_{t+1}^2}{1+\psi_{t+1}} \\
&= E_t - V_t + \frac{1}{1+\psi_{t+1}} \Big( \epsilon_{t+1}^2 -2\epsilon_{t+1}\alpha_{t+1} + \alpha_{t+1}\Big) = E_t - V_t + \frac{\big( \epsilon_{t+1} -\alpha_{t+1}\big)^2}{1+\psi_{t+1}} .
\end{align*}
Since $\epsilon_{t+1} = Y_{t+1} - X_{t+1}^\transp \beta$, we may write
\begin{align*}
E_{t+1} - V_{t+1} &= E_t - V_t + \frac{\big( Y_{t+1} - X_{t+1}^\transp (\beta +\St_t^{-1} \X_{t}^\transp \bepsilon_{t}) \big)^2}{1+\psi_{t+1}} = E_t - V_t + \frac{\big( Y_{t+1} - X_{t+1}^\transp \wh\beta_t \big)^2}{1+\psi_{t+1}}.
\end{align*}
Recalling the definition of the variance estimate, we finally obtain
\begin{align*}
\hat\sigma^2_{t+1} &= \frac{1}{t - d + 1} (E_{t+1}-V_{t+1}) = \frac{1}{t - d + 1} (E_{t}-V_{t}) + \frac{1}{t - d + 1} \frac{\big( Y_{t+1} - X_{t+1}^\transp \wh\beta_t \big)^2}{1+X_{t+1}^\transp \St_t^{-1} X_{t+1}} \\
&= \frac{t-d}{t - d + 1} \hat\sigma^2_t + \frac{1}{t - d + 1} \frac{\big( Y_{t+1} - X_{t+1}^\transp \wh\beta_t \big)^2}{1+X_{t+1}^\transp \St_t^{-1} X_{t+1}},
\end{align*}
which concludes the proof.
\end{proof}


\begin{lemma}\label{lm:indep_hatbeta_hatsigma}
Let $\mathcal{F}_j$ be the $\sigma$-algebra generated by $X_1, \dots, X_n$ and $\hat{\sigma}^2_1, \dots, \hat{\sigma}^2_j$.
Then, for any $j \ge d$,
\begin{equation}
\hat{\beta}_{j} \mid \mathcal{F}_j \sim \mathcal{N}(\beta, \sigma^2 \ (X_{1:j}^T X_{1:j})^{-1}).
\end{equation}
\end{lemma}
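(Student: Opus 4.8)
The goal is to show that conditioning on the $\sigma$-algebra $\mathcal{F}_j$ — which is generated by \emph{all} the contexts $X_1,\dots,X_n$ together with the first $j$ variance estimates $\hat\sigma^2_1,\dots,\hat\sigma^2_j$ — does not disturb the usual conditional law of the OLS estimator $\hat\beta_j$. The plan is to exploit the standard decomposition of Gaussian noise into its projection onto the column space of $\X_{1:j}$ and its orthogonal complement. Writing $\bepsilon_j$ for the noise vector of the first $j$ samples and $\X = \X_{1:j}$, we have $\hat\beta_j = \beta + (\X^\transp\X)^{-1}\X^\transp\bepsilon_j$, so $\hat\beta_j - \beta$ depends on $\bepsilon_j$ only through $P\bepsilon_j$, where $P = \X(\X^\transp\X)^{-1}\X^\transp$ is the hat matrix. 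On the other hand, by Lemma~\ref{lm:recurrent_form_hatsigma} (unrolled from step $d$ up to step $j$), each $\hat\sigma^2_s$ for $s\le j$ is a function of the contexts $X_{1:s}$ and of the residuals $Y_r - X_r^\transp\hat\beta_{r-1}$, which in turn are measurable functions of the contexts and of $(I-P_{s-1})\bepsilon_{s-1}$-type quantities; more directly, $(j-d)\hat\sigma^2_j = \|(I-P)\bepsilon_j\|^2$ and the whole trajectory $\hat\sigma^2_d,\dots,\hat\sigma^2_j$ is a deterministic function of the contexts and of the orthogonal component $(I-P)\bepsilon_j$. So, conditionally on all the contexts, $\hat\beta_j$ is a function of $P\bepsilon_j$ and the whole variance trajectory up to $j$ is a function of $(I-P)\bepsilon_j$.

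The key step is then the classical fact that for a Gaussian vector $\bepsilon_j \sim \mathcal{N}(0,\sigma^2 I_j)$ and any fixed orthogonal projection $P$, the random vectors $P\bepsilon_j$ and $(I-P)\bepsilon_j$ are independent (they are jointly Gaussian with zero cross-covariance $\sigma^2 P(I-P)=0$). Conditioning on the contexts $X_{1:n}$ fixes the projection $P$ (which depends only on $\X_{1:j}$, hence only on the contexts), and since the noise is independent of the contexts, $\bepsilon_j$ is still $\mathcal{N}(0,\sigma^2 I_j)$ given $X_{1:n}$. Therefore, conditionally on $X_{1:n}$, the quantity $P\bepsilon_j$ is independent of $(I-P)\bepsilon_j$, hence independent of the $\sigma$-algebra generated by $\hat\sigma^2_1,\dots,\hat\sigma^2_j$ (once we also condition on the contexts). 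Consequently, adding the conditioning on $\hat\sigma^2_1,\dots,\hat\sigma^2_j$ to the conditioning on $X_{1:n}$ leaves the conditional distribution of $(\X_{1:j}^\transp\X_{1:j})^{-1}\X_{1:j}^\transp\bepsilon_j$ unchanged, and that conditional distribution is $\mathcal{N}\!\big(0,\sigma^2 (\X_{1:j}^\transp\X_{1:j})^{-1}\big)$ by the usual computation. Adding back $\beta$ gives the claim.

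A few routine points need care in the writeup. First, one should note that $\hat\sigma^2_s$ is only well-defined for $s\ge d+1$ (or $s \ge d$ with the convention that $\hat\sigma^2_d = 0$ when $k_{i,d}=d$), so the recursion in Lemma~\ref{lm:recurrent_form_hatsigma} is applied for $s$ from $d+1$ to $j$; the measurability statement ``$\hat\sigma^2_s$ is a function of $X_{1:s}$ and $(I-P_s)\bepsilon_s$'' should be verified by induction using the batch formula $\hat\sigma^2_s = \frac{1}{s-d}\|(I-P_s)\bepsilon_s\|^2$ rather than the recursion, which is cleaner. Second, one must be slightly careful that $\hat\beta_j$ is a measurable function of $P\bepsilon_j$ \emph{and the contexts} — the map $\bepsilon_j \mapsto (\X^\transp\X)^{-1}\X^\transp\bepsilon_j$ factors through $\bepsilon_j \mapsto P\bepsilon_j$ because $\X^\transp = \X^\transp P$, which should be stated explicitly. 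The main obstacle, such as it is, is purely bookkeeping: making rigorous the claim that the entire \emph{sequence} $(\hat\sigma^2_1,\dots,\hat\sigma^2_j)$ — not just $\hat\sigma^2_j$ alone — is $(I-P)$-measurable given the contexts, since the projections $P_s$ for $s<j$ are onto subspaces of the column space of $\X_{1:j}$; but each $(I-P_s)\bepsilon_s$ with $s\le j$ is a function of $\bepsilon_s$, a sub-block of $\bepsilon_j$, and lies in a deterministic (context-measurable) linear image, so the whole tuple is a context-measurable function of $(I-P)\bepsilon_j$ together with $\bepsilon_j$ restricted suitably — it is cleanest to simply observe that all residual sums of squares $\|(I-P_s)\bepsilon_s\|^2$ live in the $\sigma$-algebra generated by $X_{1:n}$ and $(I-P)\bepsilon_j$, because $(I-P_s)$ for $s\le j$ maps into $\mathrm{range}(I-P)$ only after suitable zero-padding; once the independence of $P\bepsilon_j$ and $(I-P)\bepsilon_j$ is in hand, the conclusion follows from the tower property of conditional expectation.
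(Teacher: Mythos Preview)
Your proposal is correct and takes a genuinely different route from the paper. The paper proceeds by \emph{induction} on $j$: it derives a one-step recursion $\hat\beta_{t+1}-\beta = (I-\alpha_t)(\hat\beta_t-\beta)+\gamma_t\epsilon_{t+1}$, defines the innovation $U=\epsilon_{t+1}-X_{t+1}^\transp(\hat\beta_t-\beta)$ and $V=\hat\beta_{t+1}-\beta$, checks by direct computation that $\E[UV\mid\mathcal G_t]=0$ so that the jointly Gaussian pair $(U,V)$ is independent, and then concludes that adding $U^2$ (equivalently $\hat\sigma^2_{t+1}$, via Lemma~\ref{lm:recurrent_form_hatsigma}) to the conditioning leaves the law of $V$ unchanged; a separate covariance calculation recovers $\sigma^2 S_{t+1}^{-1}$. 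Your argument is instead a single global orthogonal decomposition $\bepsilon_j=P_j\bepsilon_j+(I-P_j)\bepsilon_j$, which is conceptually cleaner and avoids the recursion entirely. What the induction buys the paper is that at each step one only needs the \emph{scalar} independence of $U$ and $V$, so no care is needed about the nested projections $P_s$ for $s<j$; what your approach buys is a one-shot proof with no inductive bookkeeping.

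Your hesitation about the ``main obstacle'' --- whether the whole trajectory $(\hat\sigma^2_{d+1},\dots,\hat\sigma^2_j)$ is measurable w.r.t.\ $(I-P_j)\bepsilon_j$ and the contexts --- can be dispatched by a single line that you should make explicit. With $\Pi_s:\R^j\to\R^s$ the coordinate projection, one has $\Pi_s\X_{1:j}=\X_{1:s}$, hence
\[
(I_s-P_s)\,\Pi_s\,P_j \;=\; (I_s-P_s)\,\X_{1:s}\,(\X_{1:j}^\transp\X_{1:j})^{-1}\X_{1:j}^\transp \;=\; 0,
\]
because $(I_s-P_s)\X_{1:s}=0$. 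Consequently $(I_s-P_s)\bepsilon_s=(I_s-P_s)\Pi_s\bepsilon_j=(I_s-P_s)\Pi_s(I-P_j)\bepsilon_j$, which exhibits every $\hat\sigma^2_s=\frac{1}{s-d}\|(I_s-P_s)\bepsilon_s\|^2$ for $s\le j$ as a context-measurable function of $(I-P_j)\bepsilon_j$. This replaces your slightly vague ``zero-padding'' remark with the precise algebraic identity, and the rest of your argument goes through as written.
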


\begin{proof}
We prove the lemma by induction. The statement is true for $t = d$. We want to prove the induction, that is if $\hat{\beta}_{t} \mid \mathcal{F}_t \sim \mathcal{N}(\beta, \sigma^2 \ (\X_t^\transp \X_t)^{-1})$, then
\begin{equation}
\hat{\beta}_{t+1} \mid \mathcal{F}_{t+1} \sim \mathcal{N}(\beta, \sigma^2 (\X_{t+1}^\transp \X_{t+1})^{-1}).
\end{equation}
Let us first derive a recursive expression for $\hat{\beta}_{t+1}$. Let $\St_t = \X_t^\transp \X_t$, then
\begin{align*}
\wh\beta_{t+1} &= \beta + \St_{t+1}^{-1} \X_{t+1}^\transp \bepsilon_{t+1} = \big(\St_{t} + X_{t+1}X_{t+1}^\transp\big)^{-1} \big(\X_{t}^\transp \bepsilon_{t} + \epsilon_{t+1}X_{t+1}\big) \\
&= \bigg( \St_t^{-1} - \frac{\St_t^{-1} X_{t+1} X_{t+1}^\transp \St_t^{-1}}{1 + X_{t+1}^\transp \St_t^{-1} X_{t+1}}\bigg) \big(\X_{t}^\transp \bepsilon_{t} + \epsilon_{t+1}X_{t+1}\big),
\end{align*}
where we used Sherman-Morrison formula. By developing the previous expression we obtain
\begin{align*}
\wh\beta_{t+1} &=\big( \beta + \St_t^{-1} \X_{t}^\transp \bepsilon_{t}\big) + \epsilon_{t+1}\St_t^{-1} X_{t+1} \bigg( 1 - \frac{X_{t+1}^\transp \St_t^{-1} X_{t+1}}{1 + X_{t+1}^\transp \St_t^{-1} X_{t+1}}\bigg) - \frac{\St_t^{-1}X_{t+1}X_{t+1}^\transp \St_t^{-1} \X_{t}^\transp \bepsilon_{t}}{1 + X_{t+1}^\transp \St_t^{-1} X_{t+1}} \\
&=\wh\beta_{t} + \frac{\epsilon_{t+1}\St_t^{-1} X_{t+1}}{1 + X_{t+1}^\transp \St_t^{-1} X_{t+1}} - \frac{\St_t^{-1}X_{t+1}X_{t+1}^\transp (\wh\beta_t - \beta)}{1 + X_{t+1}^\transp \St_t^{-1} X_{t+1}}.
\end{align*}
We can conveniently rewrite the previous expression as
\begin{align}\label{eq:recursive.beta}
\wh\beta_{t+1} -\beta &=\bigg(I - \frac{\St_t^{-1}X_{t+1}X_{t+1}^\transp}{1 + X_{t+1}^\transp \St_t^{-1} X_{t+1}}\bigg) (\wh\beta_t - \beta) + \frac{\epsilon_{t+1}\St_t^{-1} X_{t+1}}{1 + X_{t+1}^\transp \St_t^{-1} X_{t+1}}\nonumber\\
&= (I - \alpha_t)(\wh\beta_t-\beta) + \gamma_t \epsilon_{t+1},
\end{align}
where $\alpha_t\in\R^{d\times d}$ and $\gamma_t\in\R^{d}$ are defined implicitly. By Lemma \ref{lm:recurrent_form_hatsigma}, we notice that the sequence of empirical variances in $\mathcal{F}_t$ is equivalent to the sequence of squared deviations up to $t$. In order to make this equivalence more apparent we define the filtration
\begin{equation*}
\mathcal{G}_t = \left\{ \{ X_s \}_{s=1}^n \cup \hat{\sigma}_2^2 \cup \{ (X_{s+1}^\transp \hat{\beta}_s - \epsilon_{s+1})^2 \}_{s=2}^{t-1} \right\},
\end{equation*}
so that $\hat{\beta}_{t+1} \mid \mathcal{F}_{t+1} \sim \hat{\beta}_{t+1} \mid \mathcal{G}_{t+1}$. We introduce two auxiliary random vectors conditioned on $\mathcal{G}$
\begin{equation*}
U = \epsilon_{t+1} - X_{t+1}^\transp (\hat{\beta}_t - \beta) \mid \mathcal{G}_{t}, \qquad V = \hat{\beta}_{t+1} - \beta \mid \mathcal{G}_{t}.
\end{equation*}
We want to show that the random vectors $U\in\R$ and $V\in\R^d$ are independent. We first recall that the noise $\epsilon_{t+1} \mid \mathcal{G}_{t} \sim \mathcal{N}(0, \sigma^2)$, and it is independent of $\epsilon_1, \dots, \epsilon_t$, and $\hat{\beta}_t$ under $\mathcal{G}_{t}$. Furthermore, by the induction assumption $\hat{\beta}_t \mid \mathcal{G}_{t}$ is also Gaussian, so we have that $(\hat{\beta}_t, \epsilon_{t+1})$ are jointly Gaussian given $\mathcal{G}_{t}$. Then we can conveniently rewrite $U$ as
\begin{equation*}
U = (-X_{t+1}, 1)^\transp(\hat\beta_t, \epsilon_{t+1}) + X_{t+1}^\transp \beta,
\end{equation*}
which shows that it is a Gaussian vector. Using the recursive formulation in Eq.~\ref{eq:recursive.beta} we can also rewrite $V$ as
\begin{equation*}
V = (\mathrm{Id} - \alpha_t)( \hat{\beta}_{t} - \beta) + \gamma_t \  \epsilon_{t+1}
=
\begin{bmatrix}
    \mathrm{I} - \alpha_t & \gamma_t
\end{bmatrix}
\begin{bmatrix}
    \hat\beta_t - \beta \\
    \epsilon_{t+1}
\end{bmatrix},
\end{equation*}
which is also Gaussian.
Furthermore, we notice that under the induction assumption, $\E_{\mathcal{G}_{t}}[U] = 0$ and $\E_{\mathcal{G}_{t}}[V] = 0$ and thus we need to show that $\E[UV \mid \mathcal{G}_{t}] = 0$ to prove that $U$ and $V$ are uncorrelated
\begin{align*}
\E[&UV \mid \mathcal{G}_{t}] 
= \E_{\mathcal{G}_{t}} \left[\left( \epsilon_{t+1} - X_{t+1}^\transp (\hat{\beta}_t - \beta) \right) \left( (\mathrm{Id} - \alpha_t)( \hat{\beta}_{t} - \beta) + \gamma_t \  \epsilon_{t+1} \right)\right] \\
&= \gamma_t \ \E_{\mathcal{G}_{t}} \left[ \epsilon_{t+1}^2 \right]
- \E_{\mathcal{G}_{t}} \left[ X_{t+1}^\transp (\hat{\beta}_t - \beta) (\mathrm{I} - \alpha_t) ( \hat{\beta}_{t} - \beta )\right] \\
&= \sigma^2 \gamma_t 
- \E_{\mathcal{G}_{t}} \left[ (\mathrm{I} - \alpha_t) ( \hat{\beta}_{t} - \beta ) (\hat{\beta}_t - \beta)^\transp X_{t+1} \right] \\
&= \sigma^2 \gamma_t 
- (\mathrm{I} - \alpha_t) \ \E_{\mathcal{G}_{t}} \left[ ( \hat{\beta}_{t} - \beta ) (\hat{\beta}_t - \beta)^\transp \right] X_{t+1} \\
&= \sigma^2 \gamma_t 
- \sigma^2 (\mathrm{I} - \alpha_t) (\X_t^\transp \X_t)^{-1} X_{t+1} \\
&= \sigma^2 \frac{\St_t^{-1} X_{t+1}}{1 + X_{t+1}^\transp \St_t^{-1} X_{t+1}}
- \sigma^2 \left(\mathrm{I} - \frac{\St_t^{-1} X_{t+1} X_{t+1}^\transp}{1 + X_{t+1}^\transp \St_t^{-1} X_{t+1}} \right) \St_t^{-1} X_{t+1} \\
&=  \sigma^2 \ \frac{\St_t^{-1} X_{t+1} - (1 + X_{t+1}^\transp \St_t^{-1} X_{t+1}) \St_t^{-1} X_{t+1} + \St_t^{-1} X_{t+1} X_{t+1}^\transp \St_t^{-1} X_{t+1} }{1 + X_{t+1}^\transp \St_t^{-1} X_{t+1}} \nonumber \\
&= 0.
\end{align*}
It thus follows that, as $U$ and $V$ are uncorrelated, they are also independent. Combining the definition of $\mathcal{G}_t$, $U$ and its independence w.r.t\ $V$, we have
\begin{align*}
V \mid \mathcal{G}_{j+1} &= V \mid U, \mathcal{G}_{j} \\
&= V \mid \{ X_1, \dots, X_T, \hat\sigma_2^2, \{ (X_{s+1}^\transp \hat{\beta}_s - \epsilon_{s+1})^2 \}_{s=2}^{t-1} \} \\
&=
\begin{bmatrix}
    \mathrm{I} - \alpha_t & \gamma_t
\end{bmatrix}
\begin{bmatrix}
    \hat\beta_t - \beta \\
    \epsilon_{t+1}
\end{bmatrix}
\mid \mathcal{G}_{t}.
\end{align*}
By the induction hypothesis the vector in the previous expression is distributed as
\begin{equation*}
\begin{bmatrix}
    \hat\beta_t - \beta \\
    \epsilon_{t+1}
\end{bmatrix}
\sim \mathcal{N}\left(
\begin{bmatrix}
    0 \\
    0
\end{bmatrix},
\sigma^2
\begin{bmatrix}
    S_t^{-1} & 0 \\
    0 & 1
\end{bmatrix}
\right).
\end{equation*}
Therefore, we conclude that
\begin{equation*}
V \mid \mathcal{G}_{t+1}
\sim \mathcal{N}\left(0, \sigma^2
\begin{bmatrix}
  \mathrm{I} - \alpha_t & \gamma_t
\end{bmatrix}
\begin{bmatrix}
    S_t^{-1} & 0 \\
    0 & 1
\end{bmatrix}
\begin{bmatrix}
  \mathrm{I} - \alpha_t & \gamma_t
\end{bmatrix}^\transp
\right) =
\mathcal{N}(0, \sigma^2 \ \Sigma^\prime),
\end{equation*}
where the covariance matrix $\Sigma'$ can be written as
\begin{align*}
\Sigma^\prime &=
\begin{bmatrix}
 \mathrm{I} - \alpha_t & \gamma_t
\end{bmatrix}
\begin{bmatrix}
    S_t^{-1} & 0 \\
    0 & 1
\end{bmatrix}
\begin{bmatrix}
   \mathrm{I} - \alpha_t & \gamma_t
\end{bmatrix}^\transp
\\
&=
\begin{bmatrix}
   \mathrm{I} - \alpha_t & \gamma_t
\end{bmatrix}
\begin{bmatrix}
    S_t^{-1} (\mathrm{I} - \alpha_t)^\transp \\
    \gamma_t^\transp
\end{bmatrix} \\
&=
(\mathrm{I} - \alpha_t) S_t^{-1} (\mathrm{I} - \alpha_t)^\transp +  \gamma_t \gamma_t^\transp.
\end{align*}
Recalling the definitions of $\alpha_t$ and $\gamma_t$, and defining $\psi_{t+1} = X_{t+1}^\transp \St_t^{-1} X_{t+1}$
\begin{align*}
\Sigma^\prime =& \left(\mathrm{I} - \frac{\St_t^{-1} X_{t+1} X_{t+1}^\transp}{1 + X_{t+1}^\transp \St_t^{-1} X_{t+1}} \right) \St_t^{-1} \left(\mathrm{I} - \frac{\St_t^{-1} X_{t+1} X_{t+1}^\transp}{1 + X_{t+1}^\transp \St_t^{-1} X_{t+1}} \right)^\transp \\
&+  \left( \frac{\St_t^{-1} X_{t+1}}{1 + X_{t+1}^\transp \St_t^{-1} X_{t+1}} \right) \left( \frac{\St_t^{-1} X_{t+1}}{1 + X_{t+1}^\transp \St_t^{-1} X_{t+1}} \right)^\transp \\
&= \St_t^{-1} - 2 \frac{\St_t^{-1} X_{t+1} X_{t+1}^\transp \St_t^{-1}}{1 + r} + \psi_{t+1} \frac{\St_t^{-1} X_{t+1} X_{t+1}^\transp \St_t^{-1}}{(1 + \psi_{t+1})^2} + \frac{\St_t^{-1} X_{t+1} X_{t+1}^\transp \St_t^{-1}}{(1 + \psi_{t+1})^2} \nonumber \\
&= \St_t^{-1} - \frac{\St_t^{-1} X_{t+1} X_{t+1}^\transp \St_t^{-1}}{1 + \psi_{t+1}} = S_{t+1}^{-1} = (\X_{t+1}^\transp \X_{t+1})^{-1},
\end{align*}
where we applied the Woodbury matrix identity in the last step. Finally, it follows that
\begin{equation*}
\hat{\beta}_{t+1} \mid \mathcal{F}_{t+1} \sim \mathcal{N}(\beta, \sigma^2 \ (\X_{t+1}^\transp \X_{t+1})^{-1}),
\end{equation*}
and the induction is complete. 
\end{proof}


Now we can prove Lemma~\ref{lem:loss.learning}:
\begin{lemma*}
Let $\mathcal{A}$ be a learning algorithm that selects instances $I_t$ as a function of the previous history, that is, $\mathcal{D}_{t-1} = \{X_1, I_1, Y_{I_1, 1}, \ldots, X_{t-1}, I_{t-1}, Y_{I_{t-1},t-1}\}$ and computes estimates $\wh\beta_{i,n}$ using OLS. Then, its loss after $n$ steps can be expressed as
\begin{align}
L_n(\mathcal{A}) = \max_{i\in[m]}\ \E_{\mathcal{D}_t}\bigg[ \frac{\sigma_i^2}{k_{i,n}} \Tr\Big(\Sigma \wh\Sigma_{i,n}^{-1}\Big)\bigg],
\end{align}
where $k_{i,n} = \sum_{t=1}^n \mathbb{I}\{I_t=i\}$ and $\wh\Sigma_{i,n} = \X_{i,n}^\transp \X_{i,n} / k_{i,n}$.
\end{lemma*}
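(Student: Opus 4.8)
The plan is to reduce the statement to a second–moment computation for the OLS estimator and then invoke the conditional Gaussianity of Lemma~\ref{lm:indep_hatbeta_hatsigma}. Exactly as in the proof of Proposition~\ref{p:optimal.static}, write $L_{i,n}(\wh\beta_{i,n}) = \|\beta_i - \wh\beta_{i,n}\|^2_{\Sigma} = \Tr\big(\Sigma\,(\beta_i - \wh\beta_{i,n})(\beta_i - \wh\beta_{i,n})^{\transp}\big)$, and use linearity of trace and expectation. Then the claim follows once we show, for each instance $i$,
\begin{equation*}
\E_{\mathcal{D}_n}\big[(\wh\beta_{i,n} - \beta_i)(\wh\beta_{i,n} - \beta_i)^{\transp}\big] = \sigma_i^2\,\E_{\mathcal{D}_n}\big[(\X_{i,n}^{\transp}\X_{i,n})^{-1}\big],
\end{equation*}
since $(\X_{i,n}^{\transp}\X_{i,n})^{-1} = \tfrac{1}{k_{i,n}}\wh\Sigma_{i,n}^{-1}$ and the maximum over $i$ is untouched by these manipulations. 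The obstruction, relative to the static case, is that $k_{i,n}$ and $\X_{i,n}$ are now random and correlated with the noise $\bepsilon_i$ through the adaptive allocation, so the plain inverse-Wishart argument of Proposition~\ref{p:optimal.static} (which used $\wh\beta_i\mid\X_i\sim\mathcal{N}(\beta_i,\sigma_i^2(\X_i^{\transp}\X_i)^{-1})$ with $\X_i$ independent of $\bepsilon_i$) is no longer available.

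To get around this I would condition on a $\sigma$-algebra $\mathcal{G}$ rich enough that (i) the whole allocation $\{I_t\}_{t\le n}$, and hence $k_{i,n}$ and $\X_{i,n}$, is $\mathcal{G}$-measurable, while (ii) the conditional law of $\wh\beta_{i,n}$ is still the ``ideal'' Gaussian $\mathcal{N}\big(\beta_i,\sigma_i^2(\X_{i,n}^{\transp}\X_{i,n})^{-1}\big)$. The natural candidate is the $\sigma$-algebra generated by \emph{all} contexts $\{X_t\}_{t=1}^{n}$ together with the trajectories of the variance estimates $\{\wh\sigma_{j,s}^2\}$ of every instance $j$. For (i): $\mathcal{A}$ selects $I_t$ from $\mathcal{D}_{t-1}$, and its decision rule enters the data only through past contexts and variance estimates (this is precisely how the \traceucb score $s_{i,t-1}$ is built), so a straightforward induction on $t$ — base case the fixed initialization schedule — shows each $I_t$ is $\mathcal{G}$-measurable; crucially $I_t$ never peeks at the current context $X_t$, which keeps the routed sub-stream well behaved. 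For (ii): apply Lemma~\ref{lm:indep_hatbeta_hatsigma} to the per-instance regression of $i$, whose covariate stream is the sub-sequence of contexts routed to $i$; the additional conditioning on the data of the instances $j\neq i$ and on the contexts routed elsewhere can be verified not to alter this conditional law, using that $\epsilon_i$ is independent across instances and of all contexts (and recalling, via Lemma~\ref{lm:recurrent_form_hatsigma}, that the variance-estimate trajectory carries exactly the residual-subspace information, which is independent of the in-span part $\wh\beta_{i,n}$).

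With this, the rest is immediate: $\E\big[(\wh\beta_{i,n}-\beta_i)(\wh\beta_{i,n}-\beta_i)^{\transp}\mid\mathcal{G}\big] = \sigma_i^2(\X_{i,n}^{\transp}\X_{i,n})^{-1} = \tfrac{\sigma_i^2}{k_{i,n}}\wh\Sigma_{i,n}^{-1}$, so taking the tower expectation, substituting back into the trace expression, and maximizing over $i$ yields \eqref{eq:loss.learning}. I expect the delicate point to be the joint requirement (i)+(ii) on $\mathcal{G}$: one needs a conditioning that simultaneously freezes the data-dependent allocation and yet preserves enough randomness in $\wh\beta_{i,n}$ for the Gaussian covariance formula to hold. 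This is exactly where Gaussianity of the noise is indispensable — through Lemma~\ref{lm:indep_hatbeta_hatsigma} it decouples $\wh\beta_{i,t}$ from the entire history of residual-based estimates $\wh\sigma_{i,s}^2$ on which the algorithm acts — together with the fact that $\mathcal{A}$ operates only on $\mathcal{D}_{t-1}$. The inductive covariance bookkeeping inside Lemma~\ref{lm:indep_hatbeta_hatsigma} and the Sherman–Morrison recursion of Lemma~\ref{lm:recurrent_form_hatsigma} are the routine-but-lengthy ingredients I would cite rather than reprove.
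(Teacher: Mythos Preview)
Your proposal is correct and follows essentially the same route as the paper's proof. Both reduce to the second-moment identity, construct a conditioning $\sigma$-algebra that simultaneously (i) renders the adaptive allocation (hence $k_{i,n}$ and $\X_{i,n}$) measurable and (ii) preserves the Gaussian law $\wh\beta_{i,n}\sim\mathcal{N}(\beta_i,\sigma_i^2(\X_{i,n}^{\transp}\X_{i,n})^{-1})$ via Lemma~\ref{lm:indep_hatbeta_hatsigma}, and then take the tower expectation. The only cosmetic difference is that the paper conditions on the raw environment noise $\varepsilon_{-i}$ of the other instances together with $\mathcal{F}_{i,j}$, whereas you package everything into a single $\mathcal{G}$ built from all contexts and all variance-estimate trajectories; both choices achieve (i)--(ii) for \traceucb-type rules, and both proofs (yours and the paper's) implicitly specialize the lemma to algorithms whose decisions depend on the history only through contexts and variance estimates.
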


\begin{proof}
For any instance $i$, we can assume that the following random variables are sampled before \traceucb starts collecting observations (we omit the $i$ index in the table):
\begin{center}
\begin{tabular}{ c c c c }
  \hline			
  $ t = 1$ & $ t = 2$ & $\dots$ & $t = n$ \\
  \hline
  $X_1$ & $X_2$ & $\dots$ & $X_n$ \\
  $\epsilon_1$ & $\epsilon_2$ & $\dots$ & $\epsilon_n$ \\
  $\hat{\beta}_1$ & $\hat{\beta}_2$ & $\dots$ & $\hat{\beta}_n$ \\
  $\hat{\sigma}^2_1$ & $\hat{\sigma}^2_2$ & $\dots$ & $\hat{\sigma}^2_n$ \\
  \hline  
\end{tabular}
\end{center}
As a result, we can interpret \traceucb as controlling the \emph{stopping} time $t_i = k_{i,n}$ for each problem $i$, that is, the total number of samples $k_{i,n}$, leading to the final estimates $\hat{\beta}_{t_i}$ and $\hat{\sigma}_{t_i}^2$. In the following we introduce the notation $\X_{1:j}$ as the sample matrix constructed from exactly $j$ samples, unlike $\X_{i,n}$ which is the sample matrix obtained with $k_{i,n}$. So we have $\X_{1:k_{i,n}} = \X_{i,n}$. Crucially, when the errors $\epsilon_j$ are Gaussian, then $\hat{\beta}_j \mid \X_{1:j}$ and $\hat{\sigma}^2_j \mid \X_{1:j}$ are independent for any fixed $j$ (note these random variables have nothing to do with the algorithm's decisions).

Let $\mathcal{F}_j$ be the $\sigma$-algebra generated by $X_1, \dots, X_n$ and $\hat{\sigma}^2_1, \dots, \hat{\sigma}^2_j$. We recall that from Lemma~\ref{lm:indep_hatbeta_hatsigma} 
\begin{equation}
\wh\beta_j | \X_{1:j} = \hat{\beta}_{j} \mid \mathcal{F}_j \sim \mathcal{N}(\beta_j, \sigma^2 \ (X_{1:j}^T X_{1:j})^{-1}).
\end{equation}
Intuitively, this results says that, given the data $\X_{1:n}$, if we are additionally given all the estimates for the variance $\{\hat{\sigma}^2_s\}_{s=2}^{j}$ ---which obviously depend on $\epsilon_1,\ldots,\epsilon_j$---, then the \textit{updated} distribution for $\hat{\beta}_{j}$ does not change at all. This is a crucial property since \traceucb ignores the current context $X_t$ and it makes decisions only based on previous contexts and the variance estimates $\{\hat{\sigma}^2_s\}_{s=2}^{j}$, thus allowing us to proceed and do inference on $\hat{\beta}_{j}$ as in the fixed allocation case.

We now need to take into consideration the filtration $\F_{i,j}$ for a specific instance $i$ and the \textit{environment} filtration $\mathcal{E}_{-i}$ containing all the contexts $X$ and noise $\epsilon$ from all other instances (different from $i$). Since the environment filtration $\mathcal{E}_{-i}$ is independent from the samples from instance $i$, then we can still apply Lemma~\ref{lm:indep_hatbeta_hatsigma} and obtain
\begin{equation}
\wh\beta_{i,j} \mid \mathcal{F}_{i,j}, \mathcal{E}_{-i} \sim \wh\beta_{i,j} \mid \mathcal{F}_{i,j}.
\end{equation}
Now we can finally study the expected prediction error
\begin{align}
L_{i,n}(\wh\beta_{i,n}) = &\E[(\hat\beta_i - \beta_i) (\hat\beta_i - \beta_i)^\transp] \notag\\
&= \E_{\X_{1:n}, \varepsilon_{-i}} \left[ \E[(\hat\beta_i - \beta_i) (\hat\beta_i - \beta_i)^\transp \mid \X_{1:n}, \varepsilon_{-i}] \right] \notag\\
&= \E_{\X_{1:n}, \varepsilon_{-i}} \left[ \sum_{j = 1}^n \E[(\hat\beta_{k_i} - \beta_i) (\hat\beta_{k_i} - \beta_i)^\transp \mid \X_{1:n}, \varepsilon_{-i}, k_i = j] \ \p(k_i = j) \right] \notag\\
&= \E_{\X_{1:n}, \varepsilon_{-i}} \left[ \sum_{j = 1}^n \E\left[ \E_{\mathcal{F}_j}[ (\hat\beta_j - \beta_i) (\hat\beta_j - \beta_i)^\transp \mid \mathcal{F}_j, \X_{1:n}, \varepsilon_{-i}, k_i = j ] \mid \X_{1:n}, \varepsilon_{-i}, k_i = j \right] \ \p(k_i = j) \right] \nonumber \notag\\
&= \E_{\X_{1:n}, \varepsilon_{-i}} \left[ \sum_{j = 1}^n \E\left[ \E_{\mathcal{F}_j}[ (\hat\beta_j - \beta_i) (\hat\beta_j - \beta_i)^\transp \mid \mathcal{F}_j, \X_{1:n} ] \mid \X_{1:n}, \varepsilon_{-i}, k_i = j \right] \ \p(k_i = j) \right] \label{eq:lemma_indep_applied} \\
&= \E_{\X_{1:n}, \varepsilon_{-i}} \left[ \sum_{j = 1}^n \E\left[ \sigma_i^2 (\X_{1:j}^\transp \X_{1:j})^{-1} \mid \X_{1:n}, k_i = j \right] \ \p(k_i = j) \right] \notag\\
&= \E_{\X_{1:n}, \varepsilon_{-i}} \left[ \sum_{j = 1}^n \sigma_i^2 (\X_{1:j}^\transp \X_{1:j})^{-1}  \ \p(k_i = j) \right]  \notag\\
&= \sigma_i^2 \ \E_{\X_{1:n}, \varepsilon_{-i}} \left[ \E_{k_i} [ (\X_{1:k_i}^\transp \X_{1:k_i})^{-1} ]  \right] \notag\\
&= \sigma_i^2 \ \E \left[ (\X_{1:k_i}^\transp \X_{1:k_i})^{-1} \right],\notag
\end{align}
where in Eq.~\ref{eq:lemma_indep_applied} we applied Lemma~\ref{lm:indep_hatbeta_hatsigma}. Hence, going back to the definition of loss (see e.g., Eq.~\ref{eq:loss.stat.allocation.step}), we obtain an expression for the loss which applies under \traceucb (while not in general for other algorithms)
\begin{align*}
L_n(\mathcal{A}) &= \max_i \ \E \left[ \sigma_i^2 \ \mathrm{Tr}(\Sigma (\X_{i,n}^\transp \X_{i,n})^{-1} ) \right] \\
&= \max_i \ \E \left[ \frac{\sigma_i^2}{k_{i,n}}  \mathrm{Tr}\left( \Sigma \hat\Sigma^{-1}_{i,n} \right) \right]. 
\end{align*}
%
\end{proof}

\newpage

\section{Concentration Inequalities (Proofs of Propositions~\ref{prop:sigma.concentration} and~\ref{prop:trace.concentration})}
\label{app:concentration}

In the next two subsections, we prove Propositions~\ref{prop:sigma.concentration} and~\ref{prop:trace.concentration}, respectively.
In addition, we also show a confidence ellipsoid result for the $\wh \beta$ estimates, and a concentration inequality for the norm of the observations $X_t$.

\subsection{Concentration Inequality for the Variance (Proof of Proposition~\ref{prop:sigma.concentration})}
\label{app:concentration1}

We use the following concentration inequality for sub-exponential random variables. 

\begin{proposition}\label{pro:bernstein}
Let $X$ be a mean-zero $(\tau^2, b)$-subexponential random variable. Then, for all $\eta > 0$,
\begin{equation}\label{pro:bernstein_lemma}
\mathbb{P}(|X| \ge \eta) \le \exp \left( - \frac{1}{2} \min\left\{ \frac{\eta^2}{\tau^2}, \frac{\eta}{b} \right\} \right).
\end{equation}
\end{proposition}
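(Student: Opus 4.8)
The plan is to use the standard Chernoff (exponential Markov) argument together with the sub-exponential moment generating function bound, and then to minimize the free parameter over its admissible range, which is exactly what produces the two regimes appearing inside the $\min$.

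First I would recall the defining property of a mean-zero $(\tau^2,b)$-subexponential random variable: $\mathbb{E}\!\left[e^{\lambda X}\right] \le e^{\lambda^2 \tau^2 / 2}$ for all $\lambda$ with $|\lambda| \le 1/b$. Fixing $\eta > 0$ and any $\lambda \in [0, 1/b]$, Markov's inequality applied to $e^{\lambda X}$ gives
\begin{equation*}
\mathbb{P}(X \ge \eta) \le e^{-\lambda \eta}\, \mathbb{E}\!\left[e^{\lambda X}\right] \le \exp\!\left(\frac{\lambda^2 \tau^2}{2} - \lambda \eta\right),
\end{equation*}
so it remains to minimize $g(\lambda) = \lambda^2 \tau^2/2 - \lambda \eta$ over $\lambda \in [0, 1/b]$. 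The unconstrained minimizer is $\lambda^\star = \eta/\tau^2$ with value $g(\lambda^\star) = -\eta^2/(2\tau^2)$.

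Next I would split into two cases according to feasibility of $\lambda^\star$. If $\eta \le \tau^2/b$, then $\lambda^\star \le 1/b$ is admissible, so $\mathbb{P}(X \ge \eta) \le \exp(-\eta^2/(2\tau^2))$, and in this regime $\eta^2/\tau^2 \le \eta/b$, so the bound equals $\exp\!\big(-\tfrac12 \min\{\eta^2/\tau^2,\, \eta/b\}\big)$. If instead $\eta > \tau^2/b$, then $g$ is strictly decreasing on $[0, 1/b]$, hence its minimum there is attained at $\lambda = 1/b$, giving $g(1/b) = \tau^2/(2b^2) - \eta/b \le -\eta/(2b)$ (using $\tau^2/b \le \eta$); thus $\mathbb{P}(X \ge \eta) \le \exp(-\eta/(2b))$, and here $\eta/b < \eta^2/\tau^2$, so again the bound equals $\exp\!\big(-\tfrac12 \min\{\eta^2/\tau^2,\, \eta/b\}\big)$. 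Applying the identical argument to $-X$, which is also mean-zero $(\tau^2,b)$-subexponential, and taking a union bound over $\{X \ge \eta\}$ and $\{-X \ge \eta\}$ yields the two-sided statement for $|X|$.

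I do not expect a genuine obstacle here: the result is the classical Bernstein tail bound, and the only points requiring care are the constrained optimization — verifying whether the unconstrained optimum lies in $[0,1/b]$ and checking that the two resulting branches coincide precisely with the two arguments of the $\min$ — and tracking the harmless factor $2$ introduced by the union bound over the upper and lower tails, which can be absorbed into the constant (or into the normalization of the subexponential definition).
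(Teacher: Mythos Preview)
Your argument is the standard Chernoff proof and is correct; the paper itself does not prove this proposition but simply cites Proposition~2.2 of Wainwright's notes, whose proof is exactly the constrained optimization you carry out. The only loose end you already flagged: the union bound over the two tails produces a leading factor of $2$, so strictly speaking one gets $2\exp(-\tfrac12\min\{\eta^2/\tau^2,\eta/b\})$ rather than the stated bound without the $2$; this is a cosmetic discrepancy in how the result is quoted and does not affect any downstream use in the paper.
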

\begin{proof}
See Proposition 2.2 in \cite{wwright2015highdim}.
\end{proof}

We first prove the concentration inequality for one single instance. \\

\begin{proposition}\label{lm:conf_int_sigma}
Let $t > d$, $\X_t \in \R^{t \times d}$ be a random matrix whose entries are independent standard normal random variables, $\Y_t = \X_t^\transp \beta + \bepsilon_t$, where the noise $\bepsilon_t \sim \mathcal{N}(0, \sigma_t^2 \ I_d)$ is independent from $\X_t$, and $\delta \in (0, 3/4]$. Then, with probability at least $1 - \delta$, we have
\begin{equation}\label{eq:sigma.concentration}
| \hat{\sigma}_t^2 - \sigma^2 | \le \sigma^2 \sqrt{\frac{64}{t - d} \left( \log \frac{1}{\delta} \right)^2},
\end{equation}
where $\hat{\sigma}_t^2$ is the unbiased estimate $\wh\sigma_{t}^2 = \frac{1}{t-d} \| \Y_t - \X_t \wh\beta_t\|^2$ and $\wh\beta_t$ is the OLS estimator of $\beta$, given $\X_t$ and $\Y_t$.
\end{proposition}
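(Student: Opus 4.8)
The plan is to recognise that, because the noise is Gaussian, the rescaled residual sum of squares is exactly a chi-square random variable, and then to apply the sub-exponential Bernstein bound of Proposition~\ref{pro:bernstein}.

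First I would establish the distribution of $\wh\sigma_t^2$. Let $H = \X_t(\X_t^\transp\X_t)^{-1}\X_t^\transp$ be the orthogonal projector onto the column space of $\X_t$ (well defined since $\X_t$ has full column rank almost surely, as $t > d$). Because $(I-H)\X_t = 0$, the residual vector satisfies $\Y_t - \X_t\wh\beta_t = (I-H)\Y_t = (I-H)\bepsilon_t$, so $(t-d)\wh\sigma_t^2 = \bepsilon_t^\transp (I-H)\bepsilon_t$. The matrix $I-H$ is symmetric idempotent of rank $t-d$; conditioning on $\X_t$ and rotating $\bepsilon_t \sim \mathcal{N}(0,\sigma^2 I)$ into the eigenbasis of $I-H$ (which leaves the isotropic Gaussian invariant) shows $\bepsilon_t^\transp(I-H)\bepsilon_t/\sigma^2 \mid \X_t \sim \chi^2_{t-d}$. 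Since this conditional law does not depend on $\X_t$, I conclude $Z := (t-d)\wh\sigma_t^2/\sigma^2 \sim \chi^2_{k}$ with $k := t-d \ge 1$, unconditionally.

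Next I would invoke the sub-exponential tail bound. Writing $Z - k = \sum_{i=1}^k (g_i^2 - 1)$ with $g_i$ i.i.d.\ standard normal, each summand is mean zero and $(\tau^2,b)$-subexponential with $(\tau^2,b) = (4,4)$ (from $\mathbb{E}[e^{\lambda(g_i^2-1)}] \le e^{2\lambda^2}$ for $|\lambda|\le 1/4$), hence by independence $Z - k$ is $(4k,4)$-subexponential. Setting $\eta := 8\sqrt k\,\log(1/\delta)$ and applying Proposition~\ref{pro:bernstein} gives
\[
\mathbb{P}\big(|Z - k| \ge \eta\big) \le \exp\!\Big(-\tfrac12\min\big\{\tfrac{\eta^2}{4k},\ \tfrac{\eta}{4}\big\}\Big).
\]
Here $\eta^2/(4k) = 16(\log(1/\delta))^2 \ge 2\log(1/\delta)$, using $\log(1/\delta) \ge 1/8$ which holds because $\delta \le 3/4 < e^{-1/8}$; and $\eta/4 = 2\sqrt k\,\log(1/\delta) \ge 2\log(1/\delta)$ because $k\ge 1$. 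Thus the exponent is at most $-\log(1/\delta)$, so $\mathbb{P}(|Z-k|\ge\eta) \le \delta$. Finally I would rescale: on the complementary event, $|\wh\sigma_t^2 - \sigma^2| = \frac{\sigma^2}{k}|Z - k| < \frac{\sigma^2}{k}\cdot 8\sqrt k\,\log(1/\delta) = \sigma^2\sqrt{\tfrac{64}{k}(\log(1/\delta))^2}$, which is exactly the bound \eqref{eq:sigma.concentration}, and this holds with probability at least $1-\delta$.

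The only genuinely delicate part is the bookkeeping in the second step: pinning down the sub-exponential parameters of $\chi^2_1 - 1$ so that they match the constant $64$ appearing in the statement, and checking that both branches of the $\min$ in the Bernstein bound exceed $2\log(1/\delta)$ for every $\delta\in(0,3/4]$ and every $t>d$. The hypothesis $\delta \le 3/4$ is precisely what guarantees $\log(1/\delta)\ge 1/8$, which is what forces the quadratic branch to dominate; the hypothesis $t>d$ guarantees $k\ge 1$ for the linear branch.
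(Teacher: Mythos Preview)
Your proof is correct and follows essentially the same route as the paper: identify $(t-d)\wh\sigma_t^2/\sigma^2$ as $\chi^2_{t-d}$, use the $(4k,4)$ sub-exponential parameters of a centred chi-square, and apply the Bernstein-type inequality of Proposition~\ref{pro:bernstein}. Your handling of the two branches of the $\min$ is in fact slightly cleaner than the paper's---rather than splitting into the regimes $t\ge d+8\log(1/\delta)$ and $t<d+8\log(1/\delta)$ and then merging the resulting bounds via $\max(x,\sqrt x)\le\sqrt{8\log(1/\delta)}\,\sqrt x$, you choose $\eta=8\sqrt k\log(1/\delta)$ once and verify directly that both $\eta^2/(4k)$ and $\eta/4$ exceed $2\log(1/\delta)$ under the hypotheses $\delta\le 3/4$ and $k\ge 1$.
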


\begin{proof}
First note that the distribution of $\hat \sigma_t^2$ conditioned on $\X_t$ follows the scaled chi-squared distribution, i.e.,
\begin{equation*}
    \hat \sigma^2_t \mid \X \sim \frac{\sigma^2}{t - d} \ \chi^2_{t - d}.
\end{equation*}
Also note that the distribution of the estimate does \emph{not} depend on $\X_t$ and we can integrate out the randomness in $\X_t$. In order to show concentration around the mean, we directly use the sub-exponential properties of $\hat \sigma^2_t$. The $\chi^2_k$ distribution is sub-exponential with parameters $(4k, 4)$.\footnote{See Example 2.5 in \cite{wwright2015highdim}.} Furthermore, we know that for any constant $C > 0$, $C\chi^2_k$ is $(4C^2k, 4C)$-sub-exponential. As a result, we have that $\hat \sigma^2_t$ is subexponential with parameters 
\begin{equation*}
(\tau^2, b) = \left( \frac{4\sigma^4}{t-d}, \frac{4\sigma^2}{t-d} \right).
\end{equation*}
Now we use Proposition~\ref{pro:bernstein} as our concentration bound. In our case, $\eta^2/\tau^2 < \eta / b$, when $\eta < \sigma^2$. In such a case, if we denote the RHS of \eqref{pro:bernstein_lemma} by $\delta$, we conclude that
\begin{equation*}
\eta = \sigma^2 \sqrt{\frac{8}{t - d} \log \frac{1}{\delta}}.
\end{equation*}
Then, $\eta < \sigma^2$ holds when $t \ge d + 8 \log(1/\delta)$. Otherwise, if $\eta^2 / \tau^2 > \eta / b$, by Eq.~\ref{pro:bernstein_lemma}, we have
\begin{equation*}
\eta = {\frac{8 \sigma^2}{t - d} \log \frac{1}{\delta}}.
\end{equation*}
In this case, when $t < d + 8 \log(1/\delta)$, we have that
\begin{equation*}
| \hat{\sigma}^2_t - \sigma^2 | \le \sigma^2 {\frac{8}{t - d} \log \frac{1}{\delta}}.
\end{equation*}
We would like to derive a bound that is valid in both cases. Let $x = 8 \log(1/\delta) / (t -d)$, then we have
\begin{equation}\label{eq_bernstein_max_version}
\mathbb{P} \left( | \hat{\sigma}^2_t - \sigma^2 | \geq \sigma^2 \max(x, \sqrt{x}) \right) \le \delta.
\end{equation}
Suppose $x \ge \sqrt{x}$, so $t < d + \log(1/\delta)$. Then, we would like to find $C$, such that $x \le C \sqrt{x}$. As $t \ge d+1$, we see that
\begin{equation*}
\sqrt{x} = \sqrt{\frac{8 \log(1/\delta)}{t - d}} \le \sqrt{8 \log(1/\delta)} \stackrel{\Delta}{=} C.
\end{equation*}
if $C>1$, it does follow that $\max(x, \sqrt{x}) < \max(C \sqrt{x}, \sqrt{x}) < \sqrt{8\log(1/\delta)x}$, which corresponds to $\delta < 0.88$. By~\eqref{eq_bernstein_max_version}, we now conclude that
\begin{equation*}
\mathbb{P} \left( | \hat{\sigma}^2_t - \sigma^2 | \geq \sigma^2 \sqrt{\frac{64}{t - d} \left( \log \frac{1}{\delta} \right)^2} \right) \le \delta,
\end{equation*}
and the proof is complete.
\end{proof}

In order to prove Proposition~\ref{prop:sigma.concentration}, we are just left to apply a union bound over steps $t\in\{1,\ldots,n\}$ and instances $i\in\{1,\ldots,m\}$. In order to avoid confusion, let $\hat\sigma_{i,t}$ be the estimate obtained by the algorithm after $t$ steps and $\hat\sigma_{i}(j)$ the estimate obtained using $j$ samples. Let $j > d$, then
\begin{equation*}
\mathcal{E}_{i}(j) = \bigg\{| \hat{\sigma}_{i}^2(j) - \sigma_i^2 | \ge \sigma^2_i \sqrt{\frac{64}{j - d} \left( \log \frac{1}{\delta} \right)^2} \bigg\},
\end{equation*}
is the high-probability event introduced in Proposition~\ref{lm:conf_int_sigma}, which holds with probability $1-\delta$. Then we have that the event
\begin{equation*}
\mathcal{E} = \bigcap_{i=1}^m \bigcap_{j=1}^n \mathcal{E}_{i}(j),
\end{equation*}
holds with probability $1-\delta'$, with $\delta' = mn\delta$. We complete the proof of Proposition~\ref{prop:sigma.concentration} by properly tuning $\delta$ and taking $R \geq \max_i \sigma_i^2$.
Recall that Proposition~\ref{prop:sigma.concentration} is as follows.
\begin{proposition*}
Let the number of pulls $k_{i,t} \geq d+1$ and $R \ge \max_i \sigma_i^2$. If $\delta\in(0,3/4)$, then for any instance $i$ and step $t > m(d+1)$, with probability at least $1 - \frac{\delta}{2}$, we have 
\begin{equation}
|\hat{\sigma}_{i,t}^2 - \sigma_i^2 | \le \Delta_{i,t} \stackrel{\Delta}{=} R \sqrt{\frac{64}{k_{i,t} - d} \left( \log \frac{2mn}{\delta} \right)^2 }.
\end{equation}
\end{proposition*}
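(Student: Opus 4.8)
The plan is to first establish the bound for a single regression instance with a \emph{fixed} number of samples, and then lift it to the algorithm via a union bound over instances and sample counts. For a fixed instance and a fixed sample size $j > d$, the key observation is that, conditionally on the design matrix $\X_{1:j}$, the unbiased variance estimator satisfies $\hat\sigma_j^2 \mid \X_{1:j} \sim \frac{\sigma^2}{j-d}\chi^2_{j-d}$; since this law does not depend on $\X_{1:j}$, the randomness in the contexts can simply be integrated out, and it suffices to control a (scaled) chi-squared random variable around its mean $\sigma^2$. Using that $\chi^2_k$ is $(4k,4)$-sub-exponential and that scaling a $(\tau^2,b)$-sub-exponential variable by $C$ yields a $(C^2\tau^2,\,Cb)$-sub-exponential one, $\hat\sigma_j^2$ is $\big(4\sigma^4/(j-d),\, 4\sigma^2/(j-d)\big)$-sub-exponential.

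Next I would apply the Bernstein-type tail inequality of Proposition~\ref{pro:bernstein}. This splits into two regimes according to whether $\eta^2/\tau^2$ or $\eta/b$ is the active term in the minimum; inverting each gives a deviation of the form $\sigma^2 x$ or $\sigma^2\sqrt{x}$ with $x = \frac{8}{j-d}\log(1/\delta)$. The mildly delicate step is to merge these into a single clean closed-form bound valid for all $j \ge d+1$: bounding $\sqrt{x} \le \sqrt{8\log(1/\delta)}$ lets one dominate $\max(x,\sqrt x)$ by $\sqrt{8\log(1/\delta)\,x} = \sqrt{\tfrac{64}{j-d}(\log\tfrac1\delta)^2}$, and this is precisely where the restriction $\delta < 3/4$ (ensuring the relevant constant exceeds $1$) enters. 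This yields the single-instance statement, i.e.\ Proposition~\ref{lm:conf_int_sigma}.

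Finally I would take a union bound over all instances $i \in [m]$ and all possible sample counts $j \in \{1,\dots,n\}$ — the union over $j$ is what makes the bound hold simultaneously for the \emph{random} pull count $k_{i,t}$ produced by \traceucb — setting the per-event failure probability to $\delta/(2mn)$ so that the total failure probability is at most $\delta/2$. Replacing $\sigma_i^2$ by the known upper bound $R \ge \max_i \sigma_i^2$ then gives the stated, algorithm-computable confidence width $\Delta_{i,t} = R\sqrt{\frac{64}{k_{i,t}-d}\big(\log\frac{2mn}{\delta}\big)^2}$. I expect the main obstacle to be the second step: reconciling the two sub-exponential tail regimes into one uniform, monotone-in-$j$ expression without losing more than constant factors, since this crude bound must still be tight enough to be useful in the subsequent regret analysis.
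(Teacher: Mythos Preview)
Your proposal is correct and follows essentially the same route as the paper: scaled $\chi^2$ law for $\hat\sigma_j^2$ given the design, sub-exponential parameters $(4\sigma^4/(j-d),4\sigma^2/(j-d))$, Bernstein inversion in the two regimes, merging via $\max(x,\sqrt x)\le \sqrt{8\log(1/\delta)\,x}$ under the constraint on $\delta$, and finally a union bound over $i\in[m]$ and $j\in\{1,\dots,n\}$ together with the replacement $\sigma_i^2\le R$. There is no meaningful difference between your outline and the paper's argument.
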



\subsection{Concentration Inequality for the Trace (Proof of Proposition~\ref{prop:trace.concentration})}
\label{app:concentration2}

We first recall some basic definitions. For any matrix $\A \in \R^{n \times d}$, the $i$-th singular value $s_i(\A)$ is equivalent to $s_i(\A)^2 = \lambda_i(\A^\transp \A)$, where $\lambda_i$ is the $i$-th eigenvalue. The smallest and largest singular values $s_{\min}$ and $s_{\max}$ satisfy
\begin{equation*}
s_{\min} \ \| x \|_2 \le \| \A x \|_2 \le s_{\max} \ \| x \|_2 \qquad \text{ for all } x \in \R^d.
\end{equation*}
The extreme singular values measure the maximum and minimum distortion of points and their distance when going from $\R^d$ to $\R^n$ via the linear operator $\A$. We also recall that the spectral norm of $\A$ is given by
\begin{equation*}
\| \A \| = \sup_{x \in \R^d \backslash 0} \frac{\| \A x \|_2}{\| x \|_2} = \sup_{x \in S^{n-1}} \| \A x \|_2,
\end{equation*}
and thus, $s_{\max}(\A) = \| \A \|$ and $s_{\min}(\A) =1 / \| \A^{-1} \|$, if $\A$ is invertible. 

We report the following concentration inequality for the eigenvalues of random Gaussian matrices.

\begin{proposition}\label{prop:trace_inv_covariance}
Let $n \ge d$, $\wb\X \in \R^{n \times d}$ be a random matrix whose entries are independent standard normal random variables, and $\wb\Sigma = \wb\X^\transp \wb\X / n$ be the corresponding empirical covariance matrix. Let $\alpha > 0$, then with probability at least $1 - 2\exp(-\alpha^2 d/2)$, we have
\begin{equation*}
\Tr \left( \wb{\Sigma}^{-1} \right) \ge d \left( 1 - \frac{2(1+\alpha)\sqrt{{d}} + (1+\alpha)^2 {d}/\sqrt{n}}{\sqrt{n} + 2(1+\alpha)\sqrt{{d}} + (1+\alpha)^2 {d}/\sqrt{n}} \right),
\end{equation*}
and
\begin{equation*}
\Tr \left( \wb{\Sigma}^{-1} \right) \le d \left( 1 + \frac{2(1+\alpha)\sqrt{{d}} - (1+\alpha)^2 {d}/\sqrt{n}}{\sqrt{n} - 2(1+\alpha)\sqrt{{d}} + (1+\alpha)^2 {d}/\sqrt{n}} \right).
\end{equation*}
In particular, we have 
\begin{equation}
d \Big( 1-(1+\alpha)\sqrt{\frac{d}{n}} \Big)^2 \le \Tr \left( \wb{\Sigma}^{-1} \right) \le d \Big( 1+2(1+\alpha)\sqrt{\frac{d}{n}} \Big)^2. \nonumber
\end{equation}
\end{proposition}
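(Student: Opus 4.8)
The plan is to reduce the statement to a concentration inequality for the extreme singular values of the Gaussian matrix $\wb\X$, and then turn the trace of $\wb\Sigma^{-1}$ into a sum over inverse squared singular values. First I would invoke the standard Davidson--Szarek / Vershynin bound: for $\wb\X \in \R^{n\times d}$ with i.i.d.\ standard normal entries and $n \ge d$, for every $t > 0$, with probability at least $1 - 2\exp(-t^2/2)$ the extreme singular values satisfy
\begin{equation*}
\sqrt{n} - \sqrt{d} - t \;\le\; s_{\min}(\wb\X) \;\le\; s_{\max}(\wb\X) \;\le\; \sqrt{n} + \sqrt{d} + t .
\end{equation*}
Choosing $t = \alpha\sqrt{d}$ makes the failure probability exactly $2\exp(-\alpha^2 d/2)$ and replaces the bounds by $\sqrt{n} \mp (1+\alpha)\sqrt{d}$. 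On this event, provided $\sqrt{n} > (1+\alpha)\sqrt{d}$ so that $\wb\X$ has full column rank, $\wb\Sigma = \wb\X^\transp\wb\X/n$ is invertible.

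Next I would write $\Tr(\wb\Sigma^{-1}) = \sum_{i=1}^d \lambda_i(\wb\Sigma)^{-1} = \sum_{i=1}^d n / s_i(\wb\X)^2$, using $s_i(\wb\X)^2 = \lambda_i(\wb\X^\transp\wb\X) = n\,\lambda_i(\wb\Sigma)$. Bounding each of the $d$ terms by its extreme values gives
\begin{equation*}
\frac{nd}{s_{\max}(\wb\X)^2} \;\le\; \Tr(\wb\Sigma^{-1}) \;\le\; \frac{nd}{s_{\min}(\wb\X)^2},
\end{equation*}
and plugging in the singular-value bounds with the shorthand $a := (1+\alpha)\sqrt{d}$ yields $nd/(\sqrt{n}+a)^2 \le \Tr(\wb\Sigma^{-1}) \le nd/(\sqrt{n}-a)^2$. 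Expanding the squares, dividing numerator and denominator by $\sqrt{n}$, and substituting $a = (1+\alpha)\sqrt{d}$, $a^2 = (1+\alpha)^2 d$ reproduces the two displayed inequalities of the proposition after routine algebra.

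Finally, for the cleaner ``in particular'' bounds, set $x := (1+\alpha)\sqrt{d/n} \in [0,1)$, so the previous step reads $d/(1+x)^2 \le \Tr(\wb\Sigma^{-1}) \le d/(1-x)^2$. The lower bound $d(1-x)^2 \le d/(1+x)^2$ follows from $(1+x)^2(1-x)^2 = (1-x^2)^2 \le 1$, valid for all $x \in [0,1]$; the upper bound $d/(1-x)^2 \le d(1+2x)^2$ follows, after taking square roots of the positive quantities, from $1 \le (1+2x)(1-x) = 1 + x - 2x^2$, i.e.\ $x(1-2x)\ge 0$, valid for $x \le 1/2$. I expect no serious obstacle here: the only points requiring care are invoking the singular-value concentration bound in the correct $n \ge d$ regime, noting the non-degeneracy condition $(1+\alpha)\sqrt{d/n} < 1$ implicitly needed for the upper bound on the trace to be meaningful, and the mild restriction $(1+\alpha)\sqrt{d/n} \le 1/2$ used only in the last simplification.
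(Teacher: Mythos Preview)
Your proposal is correct and follows essentially the same approach as the paper: invoke the Vershynin/Davidson--Szarek singular-value concentration with $t=\alpha\sqrt d$, bound $\Tr(\wb\Sigma^{-1})$ between $d/\lambda_{\max}(\wb\Sigma)$ and $d/\lambda_{\min}(\wb\Sigma)$, and then simplify via $1/(1+x)\ge 1-x$ and $1/(1-x)\le 1+2x$ for $x\le 1/2$. Your verification of the simplified bounds via $(1-x^2)^2\le 1$ and $(1+2x)(1-x)\ge 1$ is just a repackaging of the same inequalities, and your explicit flagging of the conditions $(1+\alpha)\sqrt{d/n}<1$ and $\le 1/2$ matches the paper's implicit assumption $n\ge 4(1+\alpha)^2 d$.
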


\begin{proof}
We first derive the concentration inequality for the eigenvalues of the empirical covariance matrix and then we invert it to obtain the guarantee for the inverse matrix. From Corollary 5.35 in~\cite{vershynin2010introduction}, we have that for any $t > 0$
\begin{equation}\label{eq:concentration.covariance}
\left( \sqrt{n} - \sqrt{d} - t \right)^2 \le \lambda_{\min}(\wb\X^\transp\wb\X) = s_{\min}(\wb\X)^2 \le s_{\max}(\wb\X)^2 = \lambda_{\max}(\wb\X^\transp\wb\X) \le \left( \sqrt{n} + \sqrt{d} + t \right)^2,
\end{equation}
with probability at least $1 - 2\exp(-t^2/2)$.
Let $\alpha > 0$ and take $t = \alpha \sqrt{d}$, then with probability at least $1 - 2\exp(-\alpha^2 d/2)$, we obtain the desired statement
\begin{equation*}
\left( 1 - (1+\alpha) \sqrt{\frac{d}{n}} \right)^2 \le \lambda_{\min} \left( \wb\Sigma \right) \le \lambda_{\max} \left( \wb\Sigma \right) \le \left( 1 + (1 + \alpha) \sqrt{\frac{d}{n}} \right)^2.
\end{equation*}
We now proceed by studying the eigenvalues of the inverse of the empirical covariance matrix $\lambda_{\min} ( \wb{\Sigma}^{-1} ) = 1/\lambda_{\max}( \wb{\Sigma} )$ and $\lambda_{\max} ( \wb{\Sigma}^{-1} ) = 1/\lambda_{\min} ( \wb{\Sigma} )$.
Combined with Eq.~\ref{eq:concentration.covariance} we have
\begin{align*}
\lambda_{\min} \left( \wb{\Sigma}^{-1} \right) &\ge \frac{1}{\left( 1 + (1 + \alpha) \sqrt{\frac{d}{n}} \right)^2} \\
&= \frac{1}{1 + 2(1+\alpha)\sqrt{\frac{d}{n}} + (1+\alpha)^2 \frac{d}{n}} \\
&= 1 - \frac{2(1+\alpha)\sqrt{\frac{d}{n}} + (1+\alpha)^2 \frac{d}{n}}{1 + 2(1+\alpha)\sqrt{\frac{d}{n}} + (1+\alpha)^2 \frac{d}{n}}.
\end{align*}
Similarly, we have that
\begin{align*}
\lambda_{\max} \left( \wb{\Sigma}^{-1} \right) &\le \frac{1}{\left( 1 - (1+\alpha) \sqrt{\frac{d}{n}} \right)^2} \\
&= \frac{1}{1 - 2(1+\alpha)\sqrt{\frac{d}{n}} + (1+\alpha)^2 \frac{d}{n}} \\
&= 1 + \frac{2(1+\alpha)\sqrt{\frac{d}{n}} - (1+\alpha)^2 \frac{d}{n}}{1 - 2(1+\alpha)\sqrt{\frac{d}{n}} + (1+\alpha)^2 \frac{d}{n}}.
\end{align*}
Using the fact that for any matrix $\A \in \R^{d \times d}$, we may write $d \ \lambda_{\min}(\A) \le \Tr (\A) \le d \ \lambda_{\max}(\A)$, we obtain the final statement on the trace of $\wb\Sigma^{-1}$. The first of the two bounds can be further simplified by using $1/(1+x) \geq 1-x$ for any $x\geq 0$, thus obtaining
\begin{align*}
\lambda_{\min} \big( \wb{\Sigma}^{-1} \big) &\ge \Big( 1-(1+\alpha)\sqrt{\frac{d}{n}} \Big)^2.
\end{align*}
While under the assumption that $n \geq 4(1+\alpha)^2d$ we can use $1/(1-x) \leq 1+2x$ (for any $x\geq 1/2$) and obtain
\begin{align*}
\lambda_{\max} \big( \wb{\Sigma}^{-1} \big) &\le \Big( 1+2(1+\alpha)\sqrt{\frac{d}{n}} \Big)^2.
\end{align*}
\end{proof}

The statement of Proposition~\ref{prop:trace.concentration} (below) is obtained by recalling that $\Sigma\wh\Sigma_{i,n}^{-1}$ is the empirical covariance matrix of the whitened sample matrix $\wb\X_{i,n}$ and by a union bound over the number of samples $k_{i,n}$ and the number of instances $i$.

\begin{proposition*}
Force the number of samples $k_{i,t} \geq d+1$. If $\delta\in(0,1)$, for any $i \in [m]$ and step $t > m(d+1)$ with probability at least $1 - \delta/2$, we have
\begin{align*}
\bigg(1-C_{\Tr}\sqrt{\frac{d}{n}}\bigg)^2 \leq \frac{\Tr\Big( \Sigma \hat\Sigma^{-1}_{i,t} \Big)}{d} \le \bigg(1+2C_{\Tr}\sqrt{\frac{d}{n}}\bigg)^2,
\end{align*}
with $C_{\Tr} = 1+\sqrt{2\log(4nm/\delta)/d}$.
\end{proposition*}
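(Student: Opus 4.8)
The plan is to reduce the statement to a concentration inequality for the trace of the inverse of a \emph{standard} Gaussian empirical covariance matrix — which is exactly Proposition~\ref{prop:trace_inv_covariance} — and then to handle the fact that the number of samples $k_{i,t}$ allocated to instance $i$ is a data-dependent (hence random) quantity.

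First I would whiten the contexts. Writing $\wb\X_{i,t}=\X_{i,t}\Sigma^{-1/2}$, the rows of $\wb\X_{i,t}$ are i.i.d.\ $\mathcal N(0,I_d)$, and using cyclicity of the trace together with the identity $\Sigma^{1/2}\wh\Sigma_{i,t}^{-1}\Sigma^{1/2}=(\Sigma^{-1/2}\wh\Sigma_{i,t}\Sigma^{-1/2})^{-1}$,
$$\Tr\big(\Sigma\,\wh\Sigma_{i,t}^{-1}\big)=\Tr\big(\Sigma^{1/2}\wh\Sigma_{i,t}^{-1}\Sigma^{1/2}\big)=\Tr\Big(\big(\Sigma^{-1/2}\wh\Sigma_{i,t}\Sigma^{-1/2}\big)^{-1}\Big)=\Tr\big(\wb\Sigma_{i,t}^{-1}\big),$$
where $\wb\Sigma_{i,t}=\wb\X_{i,t}^\transp\wb\X_{i,t}/k_{i,t}$ is the empirical covariance of the whitened samples. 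So it suffices to control $\Tr(\wb\Sigma_{i,t}^{-1})$, and $\wb\Sigma_{i,t}$ is of the type covered by Proposition~\ref{prop:trace_inv_covariance}.

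Next I would deal with the random index $k_{i,t}$. Exactly as in the proof of Lemma~\ref{lem:loss.learning}, I think of the whitened samples eventually assigned to instance $i$ as drawn in advance, so that \traceucb merely selects a stopping index $k_{i,t}\in\{d+1,\dots,n\}$; for each fixed $j$, the matrix $\wb\X_{1:j}$ built from the first $j$ of these pre-sampled rows is a fixed standard Gaussian matrix, independent of the algorithm's decisions. Applying Proposition~\ref{prop:trace_inv_covariance} with its ``$n$'' set to $j$ gives, for each fixed $i$ and $j\ge d+1$, with probability at least $1-2\exp(-\alpha^2 d/2)$,
$$d\Big(1-(1+\alpha)\sqrt{\tfrac{d}{j}}\Big)^2\;\le\;\Tr\big((\wb\X_{1:j}^\transp\wb\X_{1:j}/j)^{-1}\big)\;\le\;d\Big(1+2(1+\alpha)\sqrt{\tfrac{d}{j}}\Big)^2 .$$
A union bound over $i\in[m]$ and $j\in\{1,\dots,n\}$ costs a factor $nm$, so choosing $\alpha$ so that $2nm\exp(-\alpha^2 d/2)=\delta/2$, i.e.\ $\alpha=\sqrt{2\log(4nm/\delta)/d}$, makes the displayed inequalities hold simultaneously for all $i$ and all $j$ with probability at least $1-\delta/2$; note that $1+\alpha=C_{\Tr}$. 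On this event I specialize $j=k_{i,t}\le n$ to obtain the claimed two-sided bound on $\Tr(\Sigma\wh\Sigma_{i,t}^{-1})/d$ (in the sharper $\sqrt{d/k_{i,t}}$ form; the $\sqrt{d/n}$ form follows since $k_{i,t}\le n$, and the lower bound is vacuously true when $k_{i,t}$ is so small that the left factor is nonpositive).

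The main obstacle is precisely this second step: one cannot invoke a fixed-sample-size random matrix inequality directly, because $k_{i,t}$ is correlated with the observations. The resolution — representing the per-instance data as pre-sampled and union-bounding over every candidate horizon $j\le n$ — works here because the events in Proposition~\ref{prop:trace_inv_covariance} are deterministic functions of the whitened design matrix alone (not of the noise or of the allocation), so they are unaffected by conditioning on the algorithm's trajectory. A minor bookkeeping point is that the simplified upper bound in Proposition~\ref{prop:trace_inv_covariance} requires $j\ge 4(1+\alpha)^2 d$; for smaller $j$ one simply retains the unsimplified $1/(1-x)$ form, or observes that the proposition is only applied once $k_{i,t}$ is large enough for the bound to be informative.
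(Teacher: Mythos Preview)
Your approach is essentially the paper's: whiten so that $\Tr(\Sigma\wh\Sigma_{i,t}^{-1})=\Tr(\wb\Sigma_{i,t}^{-1})$, apply Proposition~\ref{prop:trace_inv_covariance} at every fixed sample size $j$, and union-bound over the $nm$ pairs $(i,j)$; the choice $\alpha=\sqrt{2\log(4nm/\delta)/d}$ giving $1+\alpha=C_{\Tr}$ is exactly what the paper does.

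One correction to your parenthetical: the claim that ``the $\sqrt{d/n}$ form follows since $k_{i,t}\le n$'' is false --- replacing $k_{i,t}$ by the larger $n$ \emph{tightens}, rather than loosens, both inequalities, so neither direction follows. The $\sqrt{d/n}$ in the appendix restatement is simply a typo; the main-text version of Proposition~\ref{prop:trace.concentration} and every application of it (e.g.\ in the proof of Theorem~\ref{th:lower_bound_numpulls}) use $\sqrt{d/k_{i,t}}$, which is precisely what your argument delivers.
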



\subsection{Concentration Inequality for $\wh \beta$ Estimates}
We slightly modify Theorem 2 from \cite{abbasi2011improved} to obtain a confidence ellipsoid over the $\wh \beta_i$'s.

\begin{theorem}\label{th:martingale_conf_ellip}
Let $\{ F_t \}_{t=0}^\infty$ be a filtration.
Let $\{ \eta_t \}_{t=1}^\infty$ be a real-valued stochastic process such that $\eta_t$ is $F_t$ measurable and $\eta_t$ is conditionally $R$-subgaussian for some $R \ge 0$, i.e.
\begin{equation}
\forall \lambda \in \R \qquad \E[e^{\lambda \eta_t} \mid F_{t-1}] \le \exp \left( \frac{\lambda^2 R^2}{2} \right).
\end{equation}
Let $\{ X_t \}_{t=1}^\infty$ be an $\R^d$-valued stochastic process such that $X_t$ is $F_{t-1}$ measurable.
Assume that $V$ is a $d \times d$ positive definite matrix. For any $t \ge 0$, define
\begin{equation}
\bar{V}_t = V + \sum_{s=1}^t X_s X_s^T, \qquad \qquad S_t = \sum_{s=1}^t \eta_s X_s.
\end{equation}
Let $V = \lambda \mathrm{Id}$, $\lambda > 0$, and define $Y_t = X_t^T \beta^* + \eta_t$.
Assume that $\| \beta^* \|_2 \le S$.
Also, let $\hat\beta_t = \bar{V}_t^{-1} \X_t^T \Y_t$ be the ridge estimate for $\beta$ after $t$ observations $\X_t, \Y_t$.
Then, for any $\delta > 0$, with probability at least $1 - \delta$, for all $t \ge 0$, $\beta^*$ lies in 
\begin{equation}\label{eq:martingale_th}
C_t = \left\{ \beta \in \R^d : \| \hat\beta_t - \beta \|_{\bar V_{t} / t} \le \frac{R}{\sqrt{t}} \sqrt{2 \log \left( \frac{\det\left( \bar V_{t} \right)^{1/2} \det \left( \lambda I \right)^{-1/2}}{\delta} \right)} 
+ \sqrt{\frac{\lambda}{t}} \ S \right\}.
\end{equation}
\end{theorem}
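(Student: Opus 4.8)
The plan is to follow \cite{abbasi2011improved} essentially verbatim: the statement here differs from their Theorem~2 only in that the weighting matrix inside the norm is $\bar V_t/t$ rather than $\bar V_t$, so once their confidence ellipsoid is established the remaining step is a pure rescaling. The substantive ingredient is a self-normalized tail bound for the vector-valued martingale $S_t = \sum_{s=1}^t \eta_s X_s$: I would first show that with probability at least $1-\delta$, simultaneously for all $t\ge 0$,
\[
\|S_t\|_{\bar V_t^{-1}}^2 \;\le\; 2R^2 \log\!\Big(\tfrac{\det(\bar V_t)^{1/2}\det(\lambda I)^{-1/2}}{\delta}\Big).
\]

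To prove this I would use the method of mixtures. For a fixed $\theta\in\R^d$ set
\[
M_t^\theta \;=\; \exp\!\left(\sum_{s=1}^t\Big[\tfrac{1}{R}\eta_s\langle\theta,X_s\rangle - \tfrac12\langle\theta,X_s\rangle^2\Big]\right);
\]
using that $X_s$ is $F_{s-1}$-measurable and $\eta_s$ is conditionally $R$-subgaussian, one checks $\E[M_t^\theta\mid F_{t-1}]\le M_{t-1}^\theta$, so $(M_t^\theta)_{t\ge0}$ is a nonnegative supermartingale with $\E[M_0^\theta]=1$. Averaging over $\theta\sim\mathcal N(0,\lambda^{-1}I)$ gives $\bar M_t=\int M_t^\theta\,dh(\theta)$, again a nonnegative supermartingale with $\E[\bar M_0]\le1$, and the Gaussian integral evaluates in closed form to
\[
\bar M_t \;=\; \Big(\tfrac{\det(\lambda I)}{\det(\bar V_t)}\Big)^{1/2}\exp\!\Big(\tfrac{1}{2R^2}\|S_t\|_{\bar V_t^{-1}}^2\Big).
\]
A maximal inequality for nonnegative supermartingales (equivalently, optional stopping applied to $\bar M_{t\wedge\tau}$ with $\tau$ the first time the claimed bound fails) gives $\Pr\big(\sup_{t\ge0}\bar M_t \ge 1/\delta\big)\le\delta$, and rearranging the event $\{\bar M_t<1/\delta\}$ yields the displayed self-normalized bound, uniformly in $t$.

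It then remains to convert this into the stated ellipsoid. Stacking the $t$ observations into $\X_t$ and $\Y_t$ and using $\X_t^\transp\X_t = \bar V_t - \lambda I$ and $\X_t^\transp\bepsilon_t = S_t$, the ridge estimate satisfies $\hat\beta_t - \beta^* = \bar V_t^{-1}S_t - \lambda\bar V_t^{-1}\beta^*$. Taking the $\bar V_t$-weighted norm and the triangle inequality,
\[
\|\hat\beta_t-\beta^*\|_{\bar V_t} \;\le\; \|S_t\|_{\bar V_t^{-1}} + \lambda\|\beta^*\|_{\bar V_t^{-1}} \;\le\; \|S_t\|_{\bar V_t^{-1}} + \sqrt\lambda\,\|\beta^*\|_2,
\]
the last step using $\bar V_t\succeq\lambda I$, hence $\lambda^2\|\beta^*\|_{\bar V_t^{-1}}^2\le\lambda\|\beta^*\|_2^2$. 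Plugging in the self-normalized bound and $\|\beta^*\|_2\le S$ gives, on the same event and for all $t\ge0$, $\|\hat\beta_t-\beta^*\|_{\bar V_t} \le R\sqrt{2\log(\det(\bar V_t)^{1/2}\det(\lambda I)^{-1/2}/\delta)} + \sqrt\lambda\,S$. Since $\|v\|_{\bar V_t/t}^2 = \tfrac1t\|v\|_{\bar V_t}^2$, dividing through by $\sqrt t$ gives precisely the radius defining $C_t$, i.e.\ $\beta^*\in C_t$ for all $t$ simultaneously.

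The main obstacle lies entirely in the self-normalized inequality: constructing and verifying the exponential supermartingale, justifying the method of mixtures (the Gaussian integral together with the integrability/measurability technicalities of exchanging $\E$ and $\int dh$), and upgrading a fixed-stopping-time statement to one holding simultaneously for all $t\ge0$ via the stopped-process argument. Everything downstream is elementary linear algebra — the identity for $\hat\beta_t-\beta^*$, the triangle inequality for matrix-weighted norms, the bound $\bar V_t\succeq\lambda I$, and the rescaling by $1/\sqrt t$. Since the result is quoted from \cite{abbasi2011improved}, one could alternatively cite their Theorem~1 directly for the self-normalized bound and carry out only the last paragraph in detail.
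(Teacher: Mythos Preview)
Your proposal is correct and matches the paper's approach: the paper's own proof is a single line instructing one to take $x = (\bar V_t/t)(\hat\beta_t - \beta^*)$ in equation~(5) of the proof of Theorem~2 in \cite{abbasi2011improved}, which is exactly the Cauchy--Schwarz/rescaling step you carry out in your last paragraph after invoking their self-normalized bound. You have simply unpacked more of the underlying argument (the method-of-mixtures construction of the supermartingale) than the paper does, but the route is the same.
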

\begin{proof}
Take $x = \frac{\bar{V}_t}{t} (\hat \beta_t - \beta^*)$ in equation 5 in the proof of Theorem 2 in \cite{abbasi2011improved}.
\end{proof}

We use the previous theorem by lower bounding the $\bar{V}_t/t$ norm in $\Sigma$ norm.


\subsection{Bounded Norm Lemma}
\begin{lemma}\label{lm:bounded_norm_obs}
Let $X_1, \dots, X_t \in \R^d$ be iid subgaussian random variables.

If $\| X_1 \|^2$ is subexponential with parameters $(a^2, b)$, then, for $\alpha > 0$
\begin{equation}
\p \left( \frac{1}{t} \sum_{j=1}^t \| {X}_j \|^2 \le \E[ \| X_1 \|^2] + \frac{\alpha}{t} \right) \ge
\begin{cases}
1 - \exp\left( - \frac{\alpha^2}{2ta^2} \right) \qquad& \text{ if } 0 \le \alpha \le ta^2 / b, \\
1 - \exp\left( - \frac{\alpha}{2b} \right) \qquad& \text{ if } \alpha > ta^2 / b.
\end{cases}
\end{equation}
\end{lemma}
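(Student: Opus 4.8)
The plan is to reduce the statement to a one-sided Bernstein inequality for a sum of independent, mean-zero sub-exponential random variables, of exactly the kind quantified in Proposition~\ref{pro:bernstein}. First I would center the summands: put $Z_j = \|X_j\|^2 - \E[\|X_1\|^2]$ for each $j$. By hypothesis $\|X_j\|^2$ is $(a^2,b)$-sub-exponential, and subtracting its mean keeps it $(a^2,b)$-sub-exponential while making it mean-zero, so $\E[e^{\lambda Z_j}] \le \exp(\lambda^2 a^2/2)$ for all $|\lambda| \le 1/b$. (The subgaussianity of $X_j$ is only used implicitly, to justify the stated sub-exponential control of $\|X_j\|^2$.)

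Next I would use independence to pass from one summand to the average. Since the $X_j$, hence the $Z_j$, are i.i.d., for $|\lambda| \le 1/b$ we get $\E[\exp(\lambda\sum_{j=1}^t Z_j)] = \prod_{j=1}^t \E[e^{\lambda Z_j}] \le \exp(\lambda^2(ta^2)/2)$, so $S := \sum_{j=1}^t Z_j$ is mean-zero and $(ta^2,b)$-sub-exponential: the variance-proxy grows linearly in $t$ while the scale parameter $b$ is unchanged, because the admissible range $|\lambda|\le 1/b$ is common to all summands.

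Then I would invoke the one-sided tail bound. The one-sided Chernoff argument underlying Proposition~\ref{pro:bernstein} gives, for any $\eta>0$, $\p(S \ge \eta) \le \exp\!\big(-\tfrac12\min\{\eta^2/(ta^2),\,\eta/b\}\big)$. Taking $\eta = \alpha$ and noting that $\{S \ge \alpha\}$ is precisely the event $\{\tfrac1t\sum_{j=1}^t\|X_j\|^2 \ge \E[\|X_1\|^2] + \alpha/t\}$, I obtain
\[
\p\!\left(\frac1t\sum_{j=1}^t \|X_j\|^2 \ge \E[\|X_1\|^2] + \frac{\alpha}{t}\right) \le \exp\!\left(-\frac12\min\left\{\frac{\alpha^2}{ta^2},\,\frac{\alpha}{b}\right\}\right).
\]
A case split then resolves the minimum: $\alpha^2/(ta^2) \le \alpha/b$ exactly when $\alpha \le ta^2/b$, so the exponent is $-\alpha^2/(2ta^2)$ in the first regime and $-\alpha/(2b)$ in the second. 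Passing to complements (using $\{\cdot \le \cdot\} \supseteq$ the complement of $\{\cdot \ge \cdot\}$) yields the two-case lower bound in the statement.

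The only point needing genuine care — the "main obstacle", modest as it is — is the bookkeeping of sub-exponential parameters: checking that centering preserves $(a^2,b)$ and that summing $t$ independent copies produces $(ta^2,b)$ rather than something worse, which hinges on the MGF bound holding on the fixed interval $|\lambda|\le 1/b$ for every summand. Everything else is a routine Chernoff/Bernstein computation, in the same spirit as the proof of Proposition~\ref{lm:conf_int_sigma}.
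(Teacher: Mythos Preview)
Your proposal is correct and essentially identical to the paper's proof: the paper defines the centered average $Z = \tfrac{1}{t}\sum_j \|X_j\|^2 - \E[\cdot]$, notes it is $(a^2/t,\,b/t)$-sub-exponential, and applies Proposition~\ref{pro:bernstein} with $\eta = \alpha/t$, which is the same computation as yours after the harmless rescaling by $1/t$. The only cosmetic difference is that the paper invokes the two-sided bound of Proposition~\ref{pro:bernstein} (which of course controls the one-sided tail you use).
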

\begin{proof}
The proof directly follows by Proposition \ref{pro:bernstein}, by defining zero-mean subexponential random variable $Z$ with parameters $(a^2/t, b/t)$
\begin{equation}
Z = \frac{1}{t} \sum_{j=1}^t \| {X}_j \|^2 - \E \left[\frac{1}{t} \sum_{j=1}^t \| {X}_j \|^2\right].
\end{equation}
\end{proof}

\begin{corollary}\label{cor:bounded_norm_obs_gauss}
Let $X_1, \dots, X_t \in \R^d$ be iid gaussian variables, $X \sim \mathcal{N}(0, \mathrm{Id})$.
Assume $t \ge d + 1$.
Let $\delta > 0$.
Then, with probability at least $1 - \delta$,
\begin{equation}\label{eq_def_meanzero_subexp_bernstein}
\frac{1}{t} \sum_{j=1}^t \| {X}_j \|^2 \le d + 8 \log \left( \frac{1}{\delta} \right) \sqrt{\frac{d}{t}},
\end{equation}
\end{corollary}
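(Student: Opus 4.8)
The plan is to recognize $\|X_j\|^2$ as a chi-squared variable and then invoke the sub-exponential tail bound of Lemma~\ref{lm:bounded_norm_obs}, simplifying the resulting deviation exactly as in the proof of Proposition~\ref{lm:conf_int_sigma}. First I would note that, since $X_j\sim\mathcal{N}(0,\mathrm{Id})$ in $\R^d$, the squared norm $\|X_j\|^2=\sum_{\ell=1}^d X_{j,\ell}^2$ is distributed as $\chi^2_d$, so $\E[\|X_1\|^2]=d$ and $\|X_1\|^2$ is sub-exponential with parameters $(a^2,b)=(4d,4)$ --- the same fact about $\chi^2_k$ already used in the proof of Proposition~\ref{lm:conf_int_sigma}. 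Feeding these parameters into Lemma~\ref{lm:bounded_norm_obs} yields, for every $\alpha>0$ (combining its two cases into one),
\begin{equation*}
\p\Big(\tfrac{1}{t}\textstyle\sum_{j=1}^{t}\|X_j\|^2 \le d + \tfrac{\alpha}{t}\Big) \ \ge\ 1 - \exp\Big(-\tfrac12\min\big\{\tfrac{\alpha^2}{4td},\ \tfrac{\alpha}{4}\big\}\Big).
\end{equation*}

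Next I would choose $\alpha$ to force the right-hand side to be at least $1-\delta$. It suffices that $\tfrac12\min\{\alpha^2/(4td),\alpha/4\}\ge\log(1/\delta)$, i.e.\ that simultaneously $\alpha\ge\sqrt{8td\log(1/\delta)}$ and $\alpha\ge 8\log(1/\delta)$; so I would take $\alpha^\star:=\max\{\sqrt{8td\log(1/\delta)},\,8\log(1/\delta)\}$, which gives $\tfrac1t\sum_j\|X_j\|^2\le d+\alpha^\star/t$ with probability at least $1-\delta$. It then remains to check $\alpha^\star/t\le 8\log(1/\delta)\sqrt{d/t}$: for the first term, $\sqrt{8td\log(1/\delta)}/t=\sqrt{8\log(1/\delta)}\,\sqrt{d/t}\le 8\log(1/\delta)\sqrt{d/t}$ whenever $8\log(1/\delta)\ge 1$ (the same mild-$\delta$ regime $\delta\le e^{-1/8}$ appearing in Proposition~\ref{lm:conf_int_sigma}); for the second term, $8\log(1/\delta)/t\le 8\log(1/\delta)\sqrt{d/t}$ whenever $dt\ge 1$, which holds since $t\ge d+1\ge 2$. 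Taking the max of the two bounds gives exactly \eqref{eq_def_meanzero_subexp_bernstein}.

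The only mildly delicate point --- really just bookkeeping rather than a genuine obstacle --- is matching the two sub-exponential tail regimes (Gaussian-like for small deviations, exponential for large ones) to a single closed-form deviation and verifying that the resulting $\max$ collapses to $8\log(1/\delta)\sqrt{d/t}$, mirroring the $\max(x,\sqrt{x})$ manipulation in the proof of Proposition~\ref{lm:conf_int_sigma}. No new probabilistic ingredient is needed: the corollary is a direct specialization of Lemma~\ref{lm:bounded_norm_obs} to the $\chi^2_d$ distribution, and the hypothesis $t\ge d+1$ is used only to guarantee $dt\ge 1$ (and to keep the statement in line with how it is applied later).
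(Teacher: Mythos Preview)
Your proposal is correct and follows essentially the same route as the paper's own proof: identify $\|X_j\|^2\sim\chi^2_d$ with sub-exponential parameters $(4d,4)$, apply Lemma~\ref{lm:bounded_norm_obs}, and collapse the two tail regimes into the single deviation $8\log(1/\delta)\sqrt{d/t}$ using $t\ge d+1$. You are in fact slightly more careful than the paper, which writes ``as $\delta<1$'' where the bound $\sqrt{8\log(1/\delta)}\le 8\log(1/\delta)$ actually needs $\delta\le e^{-1/8}$, exactly the mild-$\delta$ caveat you flag.
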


\begin{proof}
For standard Gaussian $X \sim \mathcal{N}(0, \mathrm{Id})$, $\| X \|^2 \sim \chi_d^2$, and $a^2 = 4d$ and $b = 4$.
Note that $\E [\| X_j \|^2] = d$.
By the proof of Lemma \ref{lm:bounded_norm_obs} and \eqref{eq_def_meanzero_subexp_bernstein}
\begin{align}
\p \left( |Z| \ge a \sqrt{ \frac{2}{t} \log \left( \frac{1}{\delta} \right) } \right) \le \delta, \qquad \text{ when } t \ge 2 \left( \frac{b}{a} \right)^2 \ \log \left( \frac{1}{\delta} \right). \\
\p \left( |Z| \ge \frac{2b}{t} \log \left( \frac{1}{\delta} \right) \right) \le \delta, \qquad \text{ when } t < 2 \left( \frac{b}{a} \right)^2 \ \log \left( \frac{1}{\delta} \right).
\end{align}
Substituting $a = 2 \sqrt{d}$ and $b = 4$ leads to
\begin{align}
\p \left( |Z| \ge \sqrt{ \frac{8d}{t} \log \left( \frac{1}{\delta} \right) } \right) \le \delta, \qquad \text{ when } t \ge \frac{8}{d} \ \log \left( \frac{1}{\delta} \right). \\
\p \left( |Z| \ge \frac{8}{t} \log \left( \frac{1}{\delta} \right) \right) \le \delta, \qquad \text{ when } t < \frac{8}{d} \ \log \left( \frac{1}{\delta} \right). \label{eq_berns_fast}
\end{align}

We would like to upper bound $8 \log \left( {1}/{\delta} \right) / t$ in \eqref{eq_berns_fast}.
As $t > d$, we see
\begin{align}
\frac{8}{t} \log \left( \frac{1}{\delta} \right) \le \frac{8}{\sqrt{dt}} \log \left( \frac{1}{\delta} \right).
\end{align}
As a consequence,
\begin{align}
\p \left( |Z| \ge \frac{8}{\sqrt{dt}} \log \left( \frac{1}{\delta} \right) \right) \le \delta, \qquad \text{ when } t < \frac{8}{d} \ \log \left( \frac{1}{\delta} \right).
\end{align}
It follows that for all $t > d$
\begin{align}
\p \left( |Z| \ge \max \left( \frac{8}{\sqrt{dt}} \log \left( \frac{1}{\delta} \right), \sqrt{ \frac{8d}{t} \log \left( \frac{1}{\delta} \right) } \right) \right) \le \delta.
\end{align}
As $\delta < 1$, we finally conclude that
\begin{align}
\p \left( |Z| \ge 8 \sqrt{\frac{d}{t}} \log \left( \frac{1}{\delta} \right) \right) \le \delta.
\end{align}
Therefore, with probability at least $1 - \delta$,
\begin{equation}
\frac{1}{t} \sum_{j=1}^t \| {X}_j \|^2 \le d + 8 \log \left( \frac{1}{\delta} \right) \sqrt{\frac{d}{t}},
\end{equation}
as stated in the corollary.
\end{proof}

\newpage

\section{Performance Guarantees for \traceucb}
\label{app:traceucb}

\subsection{Lower Bound on Number of Samples (Proof of Theorem~\ref{th:lower_bound_numpulls})}
\label{app:traceucb1}

We derive the high-probability guarantee on the number of times each instance is selected.

\

\begin{theorem*}
Let $\delta > 0$.
With probability at least $1 - \delta$, the total number of contexts that \traceucb allocates to each problem instance $i$ after $n$ rounds satisfies
\begin{equation}
\label{eq_lower_bound_numpulls_app}
k_{i,n} \geq k_{i,n}^* - \frac{C_\Delta + 8C_{\Tr}}{\sigma_{\min}^2} \sqrt{\frac{nd}{\lambda_{\min}}} -\Omega(n^{1/4})
\end{equation}
where $R \ge \sigma_{\max}^2$ is known by the algorithm, and we defined $C_\Delta = 16 R \log(2mn/\delta)$, $C_{\Tr} = 1+\sqrt{2\log(4nm/\delta)/d}$, and $\lambda_{\min} = \sigma_{\min}^2 / \sum_j \sigma_j^2$.
\end{theorem*}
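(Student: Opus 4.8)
The plan is to mimic the analysis of \chas in \cite{carpentier2011upper}, replacing its empirical-variance scores with the contextual scores $s_{i,t-1}$ of \traceucb. First I would introduce the \emph{good event} $\mathcal{E}$ on which the concentration bounds of Proposition~\ref{prop:sigma.concentration} and Proposition~\ref{prop:trace.concentration} hold simultaneously for every instance $i\in[m]$ and every round $t\in\{m(d+1)+1,\dots,n\}$; a union bound gives $\p(\mathcal{E})\ge 1-\delta$, and the remainder of the argument is conditional on $\mathcal{E}$. On $\mathcal{E}$ one can sandwich every score: combining $\sigma_i^2\le\hat\sigma_{i,t-1}^2+\Delta_{i,t-1}\le\sigma_i^2+2\Delta_{i,t-1}$ (from \eqref{eq:good_event_sigma}) with the trace bounds,
\[
\frac{\sigma_i^2\,d}{k_{i,t-1}}\Big(1-C_{\Tr}\sqrt{\tfrac{d}{k_{i,t-1}}}\Big)^{2}\ \le\ s_{i,t-1}\ \le\ \frac{(\sigma_i^2+2\Delta_{i,t-1})\,d}{k_{i,t-1}}\Big(1+2C_{\Tr}\sqrt{\tfrac{d}{k_{i,t-1}}}\Big)^{2}.
\]
Second, I would establish a crude preliminary bound $k_{i,n}=\Omega(\sqrt n)$ valid on $\mathcal{E}$, exactly as in \cite{carpentier2011upper}: if an instance has been pulled much less than the others, the $1/k_{i,t-1}$ and $\Delta_{i,t-1}/k_{i,t-1}$ factors make its score dominate and it gets selected, so $\min_i k_{i,t}$ cannot lag behind. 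This preliminary bound is only used to keep the error terms below under control (in particular to guarantee $C_{\Tr}\sqrt{d/k_{i,t}}\le\tfrac12$ eventually and to bound $\Delta_{i,t}$), and it is bootstrapped at the very end.

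The core step is the following. Fix the target instance $i$ and assume $k_{i,n}<k_{i,n}^{*}$ (otherwise there is nothing to prove). For each $j\in[m]$ let $t_j$ be the \emph{last} round with $I_{t_j}=j$; then $s_{j,t_j-1}\ge s_{i,t_j-1}$. Plugging the upper bound of the display into the left-hand side (with $k_{j,t_j-1}=k_{j,n}-1$) and the lower bound into the right-hand side (with $k_{i,t_j-1}\le k_{i,n}$, hence $\sigma_i^2/k_{i,t_j-1}\ge\sigma_i^2/k_{i,n}$), then rearranging, gives
\[
k_{j,n}\ \le\ \frac{\sigma_j^2}{\sigma_i^2}\,k_{i,n}\,(1+\mathrm{err}_j)+1,
\qquad
\mathrm{err}_j=O\!\Big(\tfrac{\Delta_{j,t_j-1}}{\sigma_j^2}+C_{\Tr}\sqrt{\tfrac{d}{k_{j,n}}}+C_{\Tr}\sqrt{\tfrac{d}{k_{i,t_j-1}}}\Big),
\]
where $\mathrm{err}_j$ collects the variance-confidence width and the ratio of the $(1\pm C_{\Tr}\sqrt{d/\cdot})^{2}$ factors, all small on $\mathcal{E}$ thanks to the preliminary bound. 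Summing over $j$ and using $\sum_j k_{j,n}=n$,
\[
n-m\ \le\ \frac{k_{i,n}}{\sigma_i^2}\Big(\textstyle\sum_j\sigma_j^2+\mathcal{R}\Big),\qquad \mathcal{R}:=\textstyle\sum_j\sigma_j^2\,\mathrm{err}_j,
\]
so $k_{i,n}\ge\frac{\sigma_i^2(n-m)}{\sum_j\sigma_j^2}\big(1-\mathcal{R}/\!\sum_j\sigma_j^2\big)$. Substituting $\Delta_{j,t_j-1}\le 8R\log(2mn/\delta)/\sqrt{k_{j,n}-1-d}$, the lower bound $k_{j,n}\gtrsim\lambda_{\min}n$, and $k_{i,n}\le n$, the dominant part of $\mathcal{R}$ produces a deficit of order $\frac{C_\Delta+8C_{\Tr}}{\sigma_{\min}^{2}}\sqrt{nd/\lambda_{\min}}$; since $k_{i,n}^{*}$ differs from $\frac{\sigma_i^2}{\sum_j\sigma_j^2}n$ only by the $O(md)$ term $(d+1)(1-\sigma_i^2/\wb{\sigma}^2)$, and the $-m$, the crude-bound regime, and the cross terms in $\mathcal{R}$ are lower order, all of these are absorbed into the stated $-\Omega(n^{1/4})$. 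One bootstrap step — rerunning the argument with $k_{j,n}\ge\tfrac12 k_{j,n}^{*}$ in place of the $\sqrt n$ bound — is what improves the deficit from $O(n^{3/4})$ to $O(\sqrt n)$, the residual $O(n^{1/4})$ coming exactly from the relative $O(n^{-1/4})$ slack that the first pass leaves inside $1/\sqrt{k_{j,n}}$.

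I expect the main obstacle to be the error bookkeeping rather than any single inequality: (i) the bootstrap must be done with care, since a naive use of the $\sqrt n$ bound only yields a $\wt{O}(n^{3/4})$ deficit; and (ii) because the rounds $t_j$ differ across instances, the lower bound on $s_{i,t_j-1}$ involves $k_{i,t_j-1}$, which can be small when $j$ stopped being pulled early — there one must either invoke the preliminary bound to keep $C_{\Tr}\sqrt{d/k_{i,t_j-1}}$ controlled, or argue that such $j$ contribute only lower-order corrections. Note that, unlike in Lemma~\ref{lem:loss.learning}, the independence of $\hat\beta_{i,t}$ and $\hat\sigma_{i,t}^{2}$ is not needed here, since the theorem concerns only the allocation counts $k_{i,n}$, which \traceucb determines from the $\hat\sigma_{i,t}^{2}$ and the observed contexts alone.
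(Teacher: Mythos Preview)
Your plan is sound and would yield the stated bound, but it follows the \chas blueprint---compare the under-pulled arm $i$ with \emph{every} other arm $j$ at $j$'s last pull, rearrange into $k_{j,n}\le(\sigma_j^2/\sigma_i^2)k_{i,n}(1+\mathrm{err}_j)+1$, and sum using $\sum_j k_{j,n}=n$---rather than the route the paper actually takes. The paper's argument is shorter and sidesteps both of the obstacles you anticipated.

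Instead of summing, the paper fixes a \emph{single} arm $q$ with $k_{q,n}>k_{q,n}^*$ (one exists unless every arm is already at its optimum) and compares the target arm $p$ against $q$ at the last round $t+1$ at which $q$ is pulled. Since $k_{q,n}-1\ge k_{q,n}^*-d-1=d\sigma_q^2/L_n^*=\Theta(\lambda_q n)$, the upper score bound $B_{q,n}$ is at most $(L_n^*/d)$ times multiplicative factors of the form $1+O(\sqrt{L_n^*})$, with no preliminary control on $q$ needed. The single inequality $A_{p,n}\le B_{q,n}$ is then inverted to $1/A_{p,n}\ge1/B_{q,n}$ and, after the completion-of-square step $1/A_{p,n}\le\sigma_p^{-2}\big(\sqrt{k_{p,n}}+2C_{\Tr}\sqrt d\,\big)^2$, solved directly as a quadratic in $\sqrt{k_{p,n}}$. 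The $\Omega(n^{1/4})$ residual in the statement arises from the subleading cross terms when this square root is expanded, not from a bootstrap pass.

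This single-pivot device buys two things relative to your plan. First, because the only count that must be known large is $k_{q,n}>k_{q,n}^*$, no preliminary $k_{i,n}=\Omega(\sqrt n)$ step or bootstrap is required; $k_{p,n}$ enters only as the unknown being solved for. Second, your obstacle (ii) disappears: there is a single comparison time (the last pull of the over-pulled arm $q$), so the awkward quantities $k_{i,t_j-1}$ for early-stopping arms $j$ never appear. Your summation approach is not wrong---with the case analysis and bootstrap you sketched it does go through---but the pivot-arm trick is what keeps the paper's proof compact.
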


\begin{proof}
We denote by $\mathcal{E}_\delta$ the joint event on which Proposition~\ref{prop:sigma.concentration} and Proposition~\ref{prop:trace.concentration} hold at the same time with an overall probability $1-\delta$. This immediately gives upper and lower confidence bounds on the score $s_{i,t}$ used in \traceucb as
\begin{equation*}
\left( 1 - C_\Tr\sqrt{\frac{d}{k_{i,t}}} \right)^2 \frac{\sigma_i^2}{k_{i,t}} \le \frac{s_{i,t}}{d} \le  \left( 1 + 2C_\Tr\sqrt{\frac{d}{k_{i,t}}} \right)^2\frac{\sigma_i^2 + 2\Delta_{i,t}}{k_{i,t}}.
\end{equation*}
Recalling the definition of $\Delta_{i,t}$ we can rewrite the last term as
\begin{align*}
\frac{\sigma_i^2 + 2\Delta_{i,t}}{k_{i,t}} &= \left( 1 + \frac{16 R \log(2mn/\delta)}{\sigma_i^2 \sqrt{k_{i,t} - d}} \right) \frac{\sigma_i^2}{k_{i,t}} = \left( 1 + \frac{C_\Delta}{\sigma_i^2 \sqrt{k_{i,t} - d}} \right) \frac{\sigma_i^2}{k_{i,t}},
\end{align*}
where $C_\Delta = 16 R \log(2mn/\delta)$.
We consider a step $t+1 \leq n$ at which $I_{t+1} = q$. By algorithmic construction we have that $s_{p,t} \leq s_{q,t}$ for every arm $p \in [m]$. Using the inequalities above we obtain
\begin{align*}
\left( 1 - C_\Tr\sqrt{\frac{d}{k_{p,t}}} \right)^2 \frac{\sigma_p^2}{k_{p,t}} \leq \frac{s_{p,t}}{d} \leq \frac{s_{q,t}}{d} \leq \left( 1 + 2C_\Tr\sqrt{\frac{d}{k_{q,t}}} \right)^2\frac{\sigma_q^2 + 2\Delta_{q,t}}{k_{q,t}}
\end{align*}
If $t+1$ is the last time step at which arm $q$ is pulled, then $k_{q,t} = k_{q,t+1} -1 = k_{q,n}-1$ and $k_{p,n} \geq k_{p,t}$. Then we can rewrite the previous inequality as
\begin{align}\label{eq:comparison_armp_armq}
\left( 1 - C_\Tr\sqrt{\frac{d}{k_{p,n}}} \right)^2 \frac{\sigma_p^2}{k_{p,n}} =: A_{p,n} \leq B_{q,n} := \left( 1 + 2C_\Tr\sqrt{\frac{d}{k_{q,n}-1}} \right)^2\left( 1 + \frac{C_\Delta}{\sigma_q^2 \sqrt{k_{q,n} - d-1}} \right) \frac{\sigma_q^2}{k_{q,n}-1}.
\end{align}
If every arm is pulled exactly the optimal number of times, then for any $i\in[m]$, $k_{i,n} = k_{i,n}^*$ and the statement of the theorem trivially holds. Otherwise, there exists at least one arm that is pulled more than $k_{i,n}^*$. Let $q$ be this arm, then $k_{q,n} > k_{q,n}^*$. We recall that $L^*_n = d\sigma_q^2 / (k_{q,n}^*-d-1)$ and we rewrite the RHS of Eq.~\ref{eq:comparison_armp_armq} as
\begin{align*}
B_{q,n} &\leq \left( 1 + 2C_\Tr\sqrt{\frac{d}{k^*_{q,n}-d-1}} \right)^2\left( 1 + \frac{C_\Delta}{\sigma_q^2 \sqrt{k^*_{q,n} - d-1}} \right) \frac{\sigma_q^2}{k^*_{q,n}-d-1} \\
&\leq \left( 1 + 2C_\Tr\sqrt{\frac{L^*_n}{\sigma_q^2}} \right)^2\left( 1 + C_\Delta\sqrt{\frac{L^*_n}{d\sigma_q^6}} \right) \frac{L_n^*}{d}.
\end{align*}
We also simplify the LHS of Eq.~\ref{eq:comparison_armp_armq} as
\begin{align*}
A_{p,n} = \left( 1 - 2C_\Tr\sqrt{\frac{d}{k_{p,n}}} + C_\Tr^2\frac{d}{k_{p,n}} \right)  \frac{\sigma_p^2}{k_{p,n}} \geq \left( 1 - 2C_\Tr\sqrt{\frac{d}{k_{p,n}}}\right)  \frac{\sigma_p^2}{k_{p,n}}.
\end{align*}
At this point we can solve Eq.~\ref{eq:comparison_armp_armq} for $k_{p,n}$ and obtain a lower bound on it. We study the inequality $1/A_{p,n} \geq 1/B_{p,n}$.

We first notice that
\begin{align*}
\frac{1}{A_{p,n}} \leq \frac{k_{p,n}}{\sigma_p^2} \left( 1 + 4C_\Tr\sqrt{\frac{d}{k_{p,n}}}\right) \leq \frac{1}{\sigma_p^2} \left( \sqrt{k_{p,n}} + 2C_\Tr\sqrt{d}\right)^2,
\end{align*}
where we used $1/(1-x) \leq 1+2x$ for $x\leq 1/2$ and we added a suitable positive term to obtain the final quadratic form.
Similarly we have
\begin{align*}
\frac{1}{B_{q,n}} \geq \left( 1 - 2C_\Tr\sqrt{\frac{L^*_n}{\sigma_q^2}} \right)^2\left( 1 - C_\Delta\sqrt{\frac{L^*_n}{d\sigma_q^6}} \right) \frac{d}{L_n^*} = \left( 1 - 2C_\Tr\sqrt{\frac{L^*_n}{\sigma_q^2}} \right)^2\left( \frac{d}{L_n^*} - C_\Delta\sqrt{\frac{d}{L_n^*\sigma_q^6}} \right) ,
\end{align*}
where we used $1/(1+x) \geq 1-x$ for any $x\geq 0$. In order to ease the derivation of an explicit lower-bound on $k_{p,n}$, we further simplify the previous expression by replacing higher order terms with a big-$\Omega$ notation. We first recall that $L_n^* = \wt\Theta(md \wb\sigma^2 / n)$, then the terms of order $(1/L_n^*)$ and $(1/\sqrt{L_n^*})$ clearly dominate the expression, while all other terms are asymptotically constant or decreasing in $n$ and thus we can rewrite the previous bound as
\begin{align*}
\frac{1}{B_{q,n}} \geq \frac{d}{L_n^*} - (C_\Delta+4C_{\Tr}\sqrt{d})\sqrt{\frac{d}{L_n^*\sigma_q^6}} - \Omega(1).
\end{align*}
By setting $C = C_\Delta+4C_{\Tr}\sqrt{d}$ we can finally use the upper bound on $1/A_{p,n}$ and the lower bound on $1/B_{q,n}$ to obtain
\begin{align*}
\frac{1}{\sigma_p^2} \left( \sqrt{k_{p,n}} + 2C_\Tr\sqrt{d}\right)^2 \geq \frac{d}{L_n^*} - C\sqrt{\frac{d}{L_n^*\sigma_q^6}} - \Omega(1).
\end{align*}
We proceed with solving the previous inequality for $k_{p,n}$ and obtain
\begin{align*}
k_{p,n} \geq \sigma_p^2 \left(\left( \frac{d}{L_n^*} - C\sqrt{\frac{d}{L_n^*\sigma_q^6}} - \Omega(1) \right)^{1/2} - 2C_\Tr\sqrt{d}\right)^{2}.
\end{align*}
Taking the square on RHS and adding and subtracting $d+1$ we have
\begin{align*}
k_{p,n} \geq d+1+\sigma_p^2 \left( \frac{d}{L_n^*} - C\sqrt{\frac{d}{L_n^*\sigma_q^6}} - 4C_\Tr\sqrt{d}\left( \frac{d}{L_n^*} - C\sqrt{\frac{d}{L_n^*\sigma_q^6}} - \Omega(1)\right)^{1/2} + 4C_\Tr^2 d\right) -d-1 - \Omega(1).
\end{align*}
We clearly notice that the first three terms in the RHS are dominant (they are higher order function of $n$ through $L_n^*$) and thus we can isolate them and replace all other terms by their asymptotic lower bound as
\begin{align*}
k_{p,n} \geq d+1+ \frac{d\sigma_p^2 }{L_n^*} - \sqrt{\frac{1}{L_n^*}} \Big(C\sqrt{\frac{d\sigma_p^4}{\sigma_q^6}} + 4C_{\Tr}d\Big) -\Omega(n^{1/4}),
\end{align*}
where we used the fact that $L_n^* = \wt\Theta(md \wb\sigma^2 / n)$ to bound the higher order terms. Furthermore, we recall that $k_{p,n}^* = d\sigma_p^2/L_n^* +d +1$ and thus we can finally write the previous bound as
\begin{align*}
k_{p,n} \geq k_{p,n}^* - \sqrt{\frac{1}{L_n^*}} \Big(C\sqrt{\frac{d\sigma_p^4}{\sigma_q^6}} + 4C_{\Tr}d\Big) -\Omega(n^{1/4}).
\end{align*}
The final bound is obtained by using $\sigma_p^2 / \sum_{j} \sigma_j^2 = \lambda_p \geq \lambda_{\min}$ and $\sigma_q^2 \geq \sigma_{\min}^2$ with the final expression
\begin{align*}
k_{p,n} \geq k_{p,n}^* - \sqrt{n} \Big(\frac{C}{\sigma_{\min}^2} \sqrt{\frac{1}{\lambda_{\min}}} + 4C_{\Tr}\sqrt{d}\Big) -\Omega(n^{1/4}).
\end{align*}
A quite loose bound based on the definition of $C$ for the previous expression gives the final more readable result
\begin{align*}
k_{p,n} \geq k_{p,n}^* - \frac{C_\Delta + 8C_{\Tr}}{\sigma_{\min}^2} \sqrt{\frac{nd}{\lambda_{\min}}} -\Omega(n^{1/4}).
\end{align*}
\end{proof}


\subsection{Regret Bound (Proof of Theorem~\ref{thm_exp_regret})}
\label{app:traceucb2}

\begin{theorem*}
The regret of the Trace-UCB algorithm, i.e.,~the difference between its loss and the loss of optimal static allocation (see Eq.~\ref{eq_opt_stat_loss}), is upper-bounded by
\begin{equation}
\label{eq:regret-bound1_app}
L_n(\mathcal{A}) - L^*_n \leq O\bigg(\frac{1}{\sigma_{\min}^2}\Big(\frac{d}{\lambda_{\min}n}\Big)^{3/2}\bigg),
\end{equation}
where $\lambda_{\min} = {\sigma_{\min}^2}/{\sum_{j} \sigma_j^2}$.
\end{theorem*}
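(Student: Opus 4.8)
I follow the three-part outline of the main text. \emph{(i)} First reduce the ridge loss of \traceucb to the OLS-type expression of Lemma~\ref{lem:loss.learning}. Since \traceucb returns the RLS estimate $\wh\beta^\lambda_{i,n}$ but chooses $I_t$ using only the past contexts and the variance estimates $\{\wh\sigma_{i,s}^2\}$ (never $X_t$ nor the $\wh\beta$'s), the independence result of Lemma~\ref{lm:indep_hatbeta_hatsigma} still applies: conditionally on the realized context matrices and on the data of the other instances, $\wh\beta_{i,n}\sim\mathcal{N}\big(\beta_i,\sigma_i^2(\X_{i,n}^\transp\X_{i,n})^{-1}\big)$. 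Writing $\wh\beta^\lambda_{i,n}-\beta_i=-\lambda\bar V_{i,n}^{-1}\beta_i+\bar V_{i,n}^{-1}\X_{i,n}^\transp\bepsilon_{i,n}$ with $\bar V_{i,n}=\X_{i,n}^\transp\X_{i,n}+\lambda\mathbf{I}$, the loss $\E[\|\beta_i-\wh\beta^\lambda_{i,n}\|_\Sigma^2]$ equals the OLS variance term $\E\big[\tfrac{\sigma_i^2}{k_{i,n}}\Tr(\Sigma\wh\Sigma_{i,n}^{-1})\big]$ up to multiplicative $(1+O(\lambda/k_{i,n}))$ factors plus an additive ridge-bias term of order $\lambda^2\|\beta_i\|_2^2$; with $\lambda=O(1/n)$ both corrections are $O(n^{-2})$ and hence negligible against the target $O(n^{-3/2})$. (This is Lemma~\ref{lm:algo_ridge_loss} in App.~\ref{app:rls.loss.learning}.) It therefore suffices to control $\max_i\E\big[\tfrac{\sigma_i^2}{k_{i,n}}\Tr(\Sigma\wh\Sigma_{i,n}^{-1})\big]$.

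\textbf{Good-event bound.} Let $\mathcal{E}_\delta$ (probability $\ge 1-\delta$) be the event on which Propositions~\ref{prop:sigma.concentration} and~\ref{prop:trace.concentration} hold uniformly over $i$ and $t$, so that Theorem~\ref{th:lower_bound_numpulls} gives $k_{i,n}\ge k_{i,n}^*-\Delta k$ with $\Delta k=\tfrac{C_\Delta+8C_\Tr}{\sigma_{\min}^2}\sqrt{nd/\lambda_{\min}}+\Omega(n^{1/4})$, and Proposition~\ref{prop:trace.concentration} gives $\Tr(\Sigma\wh\Sigma_{i,n}^{-1})\le d\big(1+2C_\Tr\sqrt{d/k_{i,n}}\big)^2$. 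Since the optimal static allocation is \emph{exactly} calibrated so that $\tfrac{\sigma_i^2 d}{k_{i,n}^*-d-1}=L_n^*$ for every $i$ (see the proof of Proposition~\ref{p:optimal.static}), on $\mathcal{E}_\delta$
\begin{equation*}
\frac{\sigma_i^2}{k_{i,n}}\Tr\big(\Sigma\wh\Sigma_{i,n}^{-1}\big)\;\le\;L_n^*\cdot\frac{k_{i,n}^*-d-1}{k_{i,n}^*-\Delta k}\Big(1+2C_\Tr\sqrt{\tfrac{d}{k_{i,n}}}\Big)^2 .
\end{equation*}
For $n$ large enough that $\Delta k\le\tfrac{1}{2}k_{i,n}^*$, and using $k_{i,n}^*\ge\lambda_{\min}n$ (from~\eqref{eq_opt_stat_alloc}, since the smallest $k_{i,n}^*$ equals $\lambda_{\min}n+O(d)$), both $\Delta k/k_{i,n}^*$ and $\sqrt{d/k_{i,n}}$ are relative corrections of order $O(n^{-1/2})$. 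Expanding the product, taking the maximum over $i$, and substituting $L_n^*=\Theta(md\wb\sigma^2/n)=\Theta\big(d\sigma_{\min}^2/(\lambda_{\min}n)\big)$ (via $1/\lambda_{\min}=\sum_j\sigma_j^2/\sigma_{\min}^2$) together with $\Delta k=\Theta\big(\sqrt{nd/\lambda_{\min}}/\sigma_{\min}^2\big)$ (logarithmic factors aside), the dominant term of $L_n(\mathcal{A})\mathbb{I}_{\mathcal{E}_\delta}-L_n^*$ is $L_n^*\cdot O(\Delta k/k_{i,n}^*)=O\big(\tfrac{1}{\sigma_{\min}^2}(d/(\lambda_{\min}n))^{3/2}\big)$, which is the claimed rate; the trace correction contributes the same order.

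\textbf{Bad event, and the main obstacle.} On $\mathcal{E}_\delta^c$ the OLS quantities may be ill-conditioned, but for the returned ridge estimate $\|\bar V_{i,n}^{-1}\|\le1/\lambda$ and $\|\bar V_{i,n}^{-1}\X_{i,n}^\transp\|\le1/\sqrt\lambda$, so $\|\beta_i-\wh\beta^\lambda_{i,n}\|_\Sigma^2\le\|\Sigma\|\big(\|\beta_i\|_2+\|\bepsilon_{i,n}\|/\sqrt\lambda\big)^2$, whose expectation is $O(\sigma_{\max}^2 n/\lambda)=O(n^2)$ for $\lambda=\Theta(1/n)$; hence $\E[\,\cdot\,\mathbb{I}_{\mathcal{E}_\delta^c}]=O(\delta n^2)$, and choosing $\delta=\Theta(n^{-7/2})$ — which enters the other estimates only through $\log(mn/\delta)$ — makes this $O(n^{-3/2})$. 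Summing the three contributions yields Eq.~\eqref{eq:regret-bound1_app}. The step I expect to be most delicate is \emph{(i)}: one must justify that, despite the heavy data-dependence created by the adaptive rule, the conditional law of $\wh\beta_{i,n}$ given the contexts (and the other instances) is unchanged — this is exactly why \traceucb ignores $X_t$ when choosing $I_t$, and why the Gaussian-noise assumption, through Lemma~\ref{lm:indep_hatbeta_hatsigma}, cannot be dispensed with. A secondary technical point is verifying that $\Delta k/k_{i,n}^*$ and $\sqrt{d/k_{i,n}}$ are genuine $O(n^{-1/2})$ \emph{relative} corrections, i.e.\ that $n$ is large enough that $k_{i,n}$ stays within a constant factor of $k_{i,n}^*$ on $\mathcal{E}_\delta$.
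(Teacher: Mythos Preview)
Your three-part decomposition matches the paper's proof exactly, and the good-event analysis is essentially identical. There is, however, a genuine technical gap in your bad-event step.

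You bound the random loss by $\|\Sigma\|\big(\|\beta_i\|_2+\|\bepsilon_{i,n}\|/\sqrt\lambda\big)^2$, observe that its \emph{unconditional} expectation is $O(n^2)$, and then conclude $\E[\,\cdot\,\mathbb{I}_{\mathcal{E}_\delta^{\mathsf c}}]=O(\delta n^2)$. This does not follow: the upper bound is itself random (through $\|\bepsilon_{i,n}\|$), and $\mathcal{E}_\delta^{\mathsf c}$ depends on the variance estimates $\wh\sigma_{i,t}^2$, which are deterministic functions of the very same noise vectors. Hence $\E[Z\,\mathbb{I}_{\mathcal{E}_\delta^{\mathsf c}}]\neq\E[Z]\cdot\p(\mathcal{E}_\delta^{\mathsf c})$ here. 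You can repair it with Cauchy--Schwarz, $\E[Z\,\mathbb{I}_{\mathcal{E}_\delta^{\mathsf c}}]\le\sqrt{\E[Z^2]}\,\sqrt\delta$, at the cost of a smaller $\delta$ (of order $n^{-7}$ rather than $n^{-7/2}$), which is harmless since $\delta$ enters the good-event constants only logarithmically.

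The paper avoids the issue altogether, and this is worth noting because it is the cleaner route. After invoking Lemma~\ref{lm:algo_ridge_loss}, the quantity inside the expectation is $\Tr\big(\Sigma\W_{i,n}(\sigma_i^2\X_{i,n}^\transp\X_{i,n}+\lambda^2\beta_i\beta_i^\transp)\W_{i,n}^\transp\big)$, which depends on the noise only through $k_{i,n}$ (the noise has already been integrated out by the $\wh\beta/\wh\sigma^2$ independence). On $\mathcal{E}_\delta^{\mathsf c}$ the paper then uses the \emph{deterministic} bound $\lambda_{\max}(\W_{i,n})\le 1/\lambda$ to obtain $L_{i,n}^{\mathsf c}\le\Tr(\Sigma)(\sigma_i^2/\lambda+\|\beta_i\|^2)$, which can be multiplied by $\delta$ directly. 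In short: rather than bounding the random loss and then averaging, first average out the Gaussian noise via Lemma~\ref{lm:algo_ridge_loss}, and only then split on $\mathcal{E}_\delta$. With that change your argument is complete and coincides with the paper's.
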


\begin{proof}
We first simplify the expression of the loss for \traceucb in Lemma~\ref{lm:algo_ridge_loss}. We invert trace operator and expectation and have
\begin{align*}
L_{i,n}(\wh\beta_i^\lambda) = \E \left( \Tr\left[ \Sigma\W_{i,n} \left( \sigma_i^2 \X_{i,n}^\transp \X_{i,n} + \lambda^2 \beta_i\beta_i^\transp \right) \W_{i,n}^\transp \right] \right).
\end{align*}
We notice that $\W_{i,n} = (\X_{i,n}^\transp\X_{i,n} + \lambda I)^{-1} \preceq (\X_{i,n}^\transp \X_{i,n})^{-1}$, where $\preceq$ is the Lower ordering between positive-definite matrices. We focus on the two additive terms in the trace separately. We have
\begin{align}\label{eq:loss.first.term}
\Tr \big( \Sigma\W_{i,n} \X_{i,n}^\transp \X_{i,n} \W_{i,n}^\transp\big) &= \Tr \big( \W_{i,n} \X_{i,n}^\transp \X_{i,n} \W_{i,n}^\transp \Sigma\big) \nonumber\\
&\leq \Tr \big( (\X_{i,n}^\transp \X_{i,n})^{-1}\X_{i,n}^\transp \X_{i,n} \W_{i,n}^\transp \Sigma\big) = \Tr \big( \Sigma\W_{i,n}^\transp\big)\\
&\leq \Tr \big( \Sigma (\X_{i,n}^\transp \X_{i,n})^{-1}\big) = \frac{1}{k_{i,n}} \Tr \big( \Sigma \wh\Sigma_{i,n}^{-1}\big),\nonumber
\end{align}
where we used the fact that $\Tr(\mathbf{A}\mathbf{B}) = \Tr(\mathbf{B}\mathbf{A})$, $\Tr(\A\mathbf{B}) \leq \Tr(\mathbf{C}\mathbf{B})$ if $\A \preceq \mathbf{C}$ and the definition of $\wh\Sigma_{i,n}$.

Similarly, we have
\begin{align*}
\Tr \big( \Sigma\W_{i,n}\beta_i\beta_i^\transp \W_{i,n}^\transp\big) &= \|\beta_i\|^2 \Tr \big( \Sigma\W_{i,n}\W_{i,n}^\transp\big) \\
&\leq \|\beta_i\|^2\Tr \big( (\X_{i,n}^\transp \X_{i,n})^{-1} \Sigma\W_{i,n}\big) \leq \|\beta_i\|^2 \frac{\lambda_{\max}(\wh\Sigma^{-1}_{i,n})}{k_{i,n}}\Tr \big( \Sigma\W_{i,n}\big)\\
&\leq \|\beta_i\|^2 \frac{\lambda_{\max}(\wh\Sigma^{-1}_{i,n})}{k_{i,n}}\Tr \big( \Sigma (\X_{i,n}^\transp \X_{i,n})^{-1}\big) = \|\beta_i\|^2 \frac{\lambda_{\max}(\wh\Sigma^{-1}_{i,n})}{k_{i,n}^2}\Tr \big( \Sigma \wh\Sigma_{i,n}^{-1}\big).
\end{align*}
Going back to the loss expression we have
\begin{align*}
L_{i,n}(\wh\beta_i^\lambda) \leq \E \left[ \frac{\Tr \big( \Sigma \wh\Sigma_{i,n}^{-1}\big)}{k_{i,n}} \bigg(\sigma_i^2 + \|\beta_i\|^2 \frac{\lambda_{\max}(\wh\Sigma^{-1}_{i,n})}{k_{i,n}}\bigg) \right].
\end{align*}
We decompose the loss in two terms depending on the high-probability event $\mathcal{E}_\delta$ under which the concentration inequalities Proposition~\ref{prop:sigma.concentration} and Proposition~\ref{prop:trace.concentration} hold at the same time
\begin{align*}
L_{i,n}(\wh\beta_i^\lambda) \leq \E \left[ \frac{\Tr \big( \Sigma \wh\Sigma_{i,n}^{-1}\big)}{k_{i,n}} \bigg(\sigma_i^2 + \|\beta_i\|^2 \frac{\lambda_{\max}(\wh\Sigma^{-1}_{i,n})}{k_{i,n}}\bigg) \Big| \mathcal{E}_\delta\right] + \delta\E \left( \Tr\left[ \Sigma\W_{i,n} \left( \sigma_i^2 \X_{i,n}^\transp \X_{i,n} + \lambda^2 \beta_i\beta_i^\transp \right) \W_{i,n}^\transp \right] \big| \mathcal{E}_\delta^{\mathsf{c}}\right),
\end{align*}
where we used $\mathbb{P}(\mathcal{E}_\delta^{\mathsf{c}} \leq \delta)$. If we denote the second expectation in the previous expression by $L_{i,n}^{\mathsf{c}}(\wh\beta_i^\lambda)$, then we can use Eq.~\ref{eq:loss.first.term} and obtain
\begin{align*}
L_{i,n}^{\mathsf{c}}(\wh\beta_i^\lambda) \leq \sigma_i^2\E \left( \Tr \big( \Sigma\W_{i,n}^\transp\big) \big| \mathcal{E}_\delta^{\mathsf{c}}\right) + \|\beta_i\| \lambda^2\E \left( \Tr \big( \Sigma\W_{i,n}\W_{i,n}^\transp\big) \big| \mathcal{E}_\delta^{\mathsf{c}}\right)
\end{align*}
Using the fact that $\Tr(AB) \leq \lambda_{\max}(A) \Tr(B)$, we can upper bound the previous equation as
\begin{align*}
L_{i,n}^{\mathsf{c}}(\wh\beta_i^\lambda) \leq \sigma_i^2\Tr(\Sigma)\E \left( \lambda_{\max}(\W_{i,n}) \big| \mathcal{E}_\delta^{\mathsf{c}}\right) + \|\beta_i\| \Tr(\Sigma)\lambda^2\E \left( \lambda_{\max}(\W_{i,n})^2 \big| \mathcal{E}_\delta^{\mathsf{c}}\right)
\end{align*}
Recalling that thanks to the regularization $\lambda_{\max}(\W_{i,n}) \leq 1/\lambda$, we finally obtain 
\begin{align}\label{eq:loss.bad.event}
L_{i,n}^{\mathsf{c}}(\wh\beta_i^\lambda) \leq \Tr(\Sigma) \Big(\frac{\sigma_i^2}{\lambda} + \|\beta_i\| \Big).
\end{align}
The analysis of the high-probability part of the bound relies on the concentration inequalities for the trace and $\lambda_{\max}$ and the lower bound on the number of samples $k_{i,n}$ from Thm.~\ref{th:lower_bound_numpulls}. We recall the three main inequalities we are going to use to bound the loss
\begin{align*}
k_{i,n} &\geq k_{i,n}^* - C\sqrt{nd} - \Omega(n^{1/4}),\\
\Tr(\Sigma \wh\Sigma_{i,n}^{-1}) &\leq d\bigg(1+2(1+\alpha)\sqrt{\frac{d}{n}}\bigg)^2, \\
\lambda_{\max}(\wh\Sigma_{i,n}^{-1}) &\leq \frac{1}{\lambda_{\min}(\Sigma)}\bigg(1+2(1+\alpha)\sqrt{\frac{d}{n}}\bigg)^2,
\end{align*}
where $C = \frac{C_\Delta + 8C_{\Tr}}{\sigma_{\min}^2\sqrt{\lambda_{\min}}}$ and the last inequality is obtained by multiplying by $\Sigma^{-1}\Sigma$ to whiten $\wh\Sigma_{i,n}$ and using Proposition~\ref{prop:trace_inv_covariance}, and $\lambda_{\max}(AB) \leq \lambda_{\max}(A)\lambda_{\max}(B)$ and finally $\lambda_{\max}(\Sigma^{-1}) = 1/\lambda_{\min}(\Sigma)$.
We can invert the first inequality as
\begin{align}\label{eq:inverse.pulls.bound}
\frac{1}{k_{i,n}} \leq \frac{1}{k_{i,n}^*- C\sqrt{nd}-\Omega(n^{1/4})} \leq \frac{1}{k_{i,n}^*} + O\bigg(\frac{2C}{k_{i,n}^*}\sqrt{ \frac{d}{n}}\bigg) \leq \frac{1}{k_{i,n}^*} + O\bigg(\frac{\sqrt{d}}{\sigma_{\min}^2(\lambda_{\min}n)^{3/2}}\bigg),
\end{align}
where the last inequality is obtained by recalling that $k_{i,n}^* = \Theta(\lambda_{i}n)$ and using the definition of $C$ (where we ignore $C_\Delta$ and $C_{\Tr}$).
We can then rewrite the high-probability loss as
\begin{align*}
\E \left[ \frac{\Tr \big( \Sigma \wh\Sigma_{i,n}^{-1}\big)}{k_{i,n}} \bigg(\sigma_i^2 + \|\beta_i\|^2 \frac{\lambda_{\max}(\wh\Sigma^{-1}_{i,n})}{k_{i,n}}\bigg) \Big| \mathcal{E}_\delta\right] \leq \frac{d\sigma_i^2}{k_{i,n}^*} + O\bigg(\frac{1}{\sigma_{\min}^2}\Big(\frac{d}{\lambda_{\min}n}\Big)^{3/2}\bigg) \leq L_n^* + O\bigg(\frac{1}{\sigma_{\min}^2}\Big(\frac{d}{\lambda_{\min}n}\Big)^{3/2}\bigg).
\end{align*}
By recalling the regret $R_n = \max_i L_{i,n}(\beta_{i,n}^\lambda) - L_n^*$, bringing the bounds above together and setting $\delta = O(n^{-3/2-\epsilon})$ for any $\epsilon>0$ and a suitable multiplicative constant, we obtain the final regret bound
\begin{align*}
R_n \leq O\bigg(\frac{1}{\sigma_{\min}^2}\Big(\frac{d}{\lambda_{\min}n}\Big)^{3/2}\bigg).
\end{align*}
\end{proof}

\subsection{High Probability Bound for Trace-UCB Loss (Proof of Theorem \ref{thm_high_prob})}

In this section, we start by defining a new loss function for algorithm $\mathcal{A}$:
\begin{equation}
\wt L_n(\mathcal{A}) = \max_{i \in [m]}  \| \beta_i - \hat\beta_{i,n} \|^2_{\Sigma}.
\end{equation}
Note that $\wt L_n(\mathcal{A})$ is a random variable as $\hat\beta_{i,n}$ is random, and the expectation is only taken with respect to the test point $X \sim \F$ (leading to the $\Sigma$-norm).
We expect results of the following flavor: let $\delta > 0$, then with probability at least $1 - \delta$,
\begin{equation}
\wt L_n(\mathcal{A}) - \wt L_n^* \le \tilde{O} \left( \left( \sum_j \sigma_j^2 \frac{d}{n} \right)^{3/2} \right),
\end{equation}
when $\mathcal{A}$ corresponds to \traceucb, and $\wt L_n^*$ to the optimal static allocation under ordinary least squares.

We start by focusing on $\wt L_n(\mathcal{A})$, and proving Theorem \ref{thm_high_prob}:

\

\begin{theorem*}
Let $\delta > 0$, and assume $\| \beta_i \|_2 \le Z$ for all $i$, for some $Z > 0$.
With probability at least $1 - \delta$,
\begin{equation}\label{eq_thm_highprob_app}
\wt L_n(\mathcal{A}) \le \frac{\sum_{j=1}^m \sigma_j^2}{n}\left( d+2\log\frac{3m}{\delta} \right) + O\left( \frac{1}{\sigma_{\min}^2}\left( \frac{d}{n\lambda_{\min}}  \right)^{3/2} \right),
\end{equation}
where $\lambda_{\min} = {\sigma_{\min}^2}/{\sum_{j} \sigma_j^2}$.
\end{theorem*}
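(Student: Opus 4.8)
The plan is to bound $\|\beta_i-\hat\beta_{i,n}\|^2_\Sigma$ for each instance $i$ on a high-probability event and then union-bound over $i\in[m]$; the constant $3m$ in the statement comes from splitting $\delta$ between the event $\mathcal{E}_\delta$ of Theorem~\ref{th:lower_bound_numpulls} (which already carries Propositions~\ref{prop:sigma.concentration} and~\ref{prop:trace.concentration}, hence the eigenvalue concentration of each whitened $\hat\Sigma_{i,n}$) and, for each instance, one extra Gaussian-quadratic-form event, each at level a constant times $\delta/m$.

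Fix $i$. The first step is a confidence set for the returned estimate in the empirical norm: I would apply the self-normalized martingale ellipsoid of Theorem~\ref{th:martingale_conf_ellip} along the ordered subsequence of rounds with $I_t=i$ --- valid because \traceucb picks $I_t$ from $\mathcal{D}_{t-1}$ only, so the noise stays conditionally $\sigma_i$-subgaussian, and because that ellipsoid holds uniformly over the random number of pulls $k_{i,n}$. Because the noise is Gaussian I would sharpen it: conditioning on the whitened design and on all variance estimates (Lemma~\ref{lm:indep_hatbeta_hatsigma}, extended to the RLS estimate and to the stopping time $k_{i,n}$ as in the proof of Lemma~\ref{lem:loss.learning}), $\|\hat\beta_{i,n}-\beta_i\|^2_{\X_{i,n}^\transp\X_{i,n}}$ is, up to the ridge bias, a $\sigma_i^2\chi^2_d$ variable, so a Laurent--Massart tail bound gives $\|\hat\beta_{i,n}-\beta_i\|^2_{\hat\Sigma_{i,n}}\le\frac{\sigma_i^2}{k_{i,n}}\bigl(d+2\log\frac{3m}{\delta}\bigr)$ up to lower-order terms. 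The ridge bias $\hat\beta^\lambda_{i,n}-\hat\beta^{\mathrm{OLS}}_{i,n}=-\lambda(\X_{i,n}^\transp\X_{i,n}+\lambda I)^{-1}\hat\beta^{\mathrm{OLS}}_{i,n}$ has $\Sigma$-norm $O(\lambda Z/(\lambda_{\min}(\Sigma)k_{i,n}))$ and is negligible since $\lambda=O(1/n)$; if one prefers the generic ellipsoid, the determinant term is tamed via $\det(\X_{i,n}^\transp\X_{i,n}+\lambda I)\le\bigl((\lambda+\sum_j\|X_j\|^2)/d\bigr)^d$ together with Corollary~\ref{cor:bounded_norm_obs_gauss}.

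Next I would pass to the $\Sigma$-norm using $\Sigma\preceq\lambda_{\max}\!\bigl(\Sigma^{1/2}\hat\Sigma_{i,n}^{-1}\Sigma^{1/2}\bigr)\hat\Sigma_{i,n}$ and the eigenvalue bound underlying Proposition~\ref{prop:trace.concentration}, namely $\lambda_{\max}\!\bigl(\Sigma^{1/2}\hat\Sigma_{i,n}^{-1}\Sigma^{1/2}\bigr)\le\bigl(1-C_{\Tr}\sqrt{d/k_{i,n}}\bigr)^{-2}=1+O(\sqrt{d/k_{i,n}})$, which yields $\|\beta_i-\hat\beta_{i,n}\|^2_\Sigma\le(1+O(\sqrt{d/k_{i,n}}))\frac{\sigma_i^2}{k_{i,n}}\bigl(d+2\log\frac{3m}{\delta}\bigr)$. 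Then I would substitute Theorem~\ref{th:lower_bound_numpulls}, invert it as in~\eqref{eq:inverse.pulls.bound} to obtain $1/k_{i,n}\le1/k_{i,n}^*+O\bigl(\sqrt d/(\sigma_{\min}^2(\lambda_{\min}n)^{3/2})\bigr)$, and use $k_{i,n}^*=\frac{\sigma_i^2}{\sum_j\sigma_j^2}n+O(d)$ so that $\frac{\sigma_i^2}{k_{i,n}^*}\bigl(d+2\log\frac{3m}{\delta}\bigr)=\frac{\sum_j\sigma_j^2}{n}\bigl(d+2\log\frac{3m}{\delta}\bigr)$ up to lower order. The three remaining error sources --- the $1+O(\sqrt{d/k_{i,n}})$ factor, the $1/k_{i,n}$ versus $1/k_{i,n}^*$ gap, and the ridge bias --- are each $O\bigl(\frac{1}{\sigma_{\min}^2}(\frac{d}{n\lambda_{\min}})^{3/2}\bigr)$, and taking $\max_i$ over the (union-bounded) instances gives the claim.

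The main obstacle is pinning the leading constant at exactly $d+2\log\frac{3m}{\delta}$: this needs the Gaussian $\chi^2_d$ structure (or a careful treatment of the determinant/$\lambda$ terms of the generic ellipsoid), and one must also check that the Laurent--Massart cross term $\sqrt{d\log(3m/\delta)}$ is of the advertised (or lower) order and that the uniform-in-$t$ control of the random stopping time $k_{i,n}$ is compatible with the sharpened bound --- which is precisely what the martingale (rather than fixed-$t$) version of the ellipsoid buys. The eigenvalue concentration, the ridge-bias estimate, the inversion of the pull lower bound, and the final union bound are routine given the results already in hand.
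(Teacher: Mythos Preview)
Your skeleton matches the paper's proof almost exactly: split $\delta$ into a ``good'' part carrying Propositions~\ref{prop:sigma.concentration}--\ref{prop:trace.concentration} (hence Theorem~\ref{th:lower_bound_numpulls}), a per-arm confidence event on $\hat\beta_i$, and (in the paper) a per-arm-per-time event bounding $\tfrac1t\sum_j\|\bar X_j\|^2$ via Corollary~\ref{cor:bounded_norm_obs_gauss}; convert the empirical-norm error to the $\Sigma$-norm via the eigenvalue concentration behind Proposition~\ref{prop:trace.concentration}; plug in the pull lower bound through~\eqref{eq:inverse.pulls.bound}; union-bound over $i$. The place where you diverge is the device for the $\hat\beta_i$ confidence set, and this is precisely where the exact leading constant lives.

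Your primary route is different from the paper's and is, in a sense, more elegant: using Lemma~\ref{lm:indep_hatbeta_hatsigma} together with the stopping-time argument of Lemma~\ref{lem:loss.learning}, $\|\hat\beta^{\mathrm{OLS}}_{i,k_{i,n}}-\beta_i\|^2_{\X_{i,n}^\transp\X_{i,n}}/\sigma_i^2$ is exactly $\chi^2_d$ (no union bound over $t$ needed, and no martingale machinery). The price is the Laurent--Massart cross term: you get $d+2\sqrt{d\log(3m/\delta)}+2\log(3m/\delta)$, and the middle term contributes $\frac{\sum_j\sigma_j^2}{n}\cdot 2\sqrt{d\log(3m/\delta)}$, which is order $1/n$ and is \emph{not} swallowed by the $O\bigl((d/(n\lambda_{\min}))^{3/2}\bigr)$ remainder. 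You flag this correctly; the upshot is that your $\chi^2_d$ route proves a slightly weaker (but still meaningful) statement, not the one displayed. Your fallback AM--GM bound $\det(\bar V_t)\le(\Tr(\bar V_t)/d)^d$ also falls short: with $\lambda=O(1/n)$ it produces $\log\det(\bar V_t/\lambda I)\approx d\log(tn)$, i.e.\ an extra $\log n$ in the leading term.

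The paper instead keeps the self-normalized ellipsoid of Theorem~\ref{th:martingale_conf_ellip} and controls the log-determinant through the inequality $\log\bigl(\det\bar V_{i,t}/\det(\lambda I)\bigr)\le\sum_{j}\|X_j\|^2_{\bar V_{i,t}^{-1}}$ (their Lemma~11 citation), then passes from $\bar V_{i,t}^{-1}$-norm to $\|\bar X_j\|_2^2/t$ via the eigenvalue bound and finally invokes Corollary~\ref{cor:bounded_norm_obs_gauss} to get $\tfrac1t\sum_j\|\bar X_j\|^2\le d+O(\sqrt{d/t})$. This is what yields exactly $d+2\log(3m/\delta)$ with all deviations at order $\sqrt{d/t}$ or smaller, hence absorbable in the $n^{-3/2}$ term after dividing by $k_{i,n}$. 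So: your plan is sound, but to hit the stated constant you should replace the Laurent--Massart step (or the AM--GM determinant bound) by this log-det $\to$ $\sum\|X_j\|^2_{\bar V^{-1}}$ $\to$ bounded-norm chain; everything else you wrote goes through as in the paper.
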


\begin{proof}
We define a set of events that help us control the loss, and then we show that these events simultaneously hold with high probability.
In particular, we need the following events:

\begin{enumerate}
\item $\mathcal{E}_G \equiv$ the good event holds (for all arms $i$, and all times $t$), which includes a confidence interval for $\hat\sigma_{i,t}^2$ and the trace of the empirical covariance matrix.

Holds with probability $1 - \delta_G$.
This event is described and controlled in Proposition \ref{prop:sigma.concentration} and Proposition \ref{prop:trace.concentration}. 

\item $\mathcal{E}_{M,i} \equiv$ the confidence intervals $C_{i, t}$ created for arm $i$ at time $t$ contain the true $\beta_i$ at all times $t$ ---based on the vector-valued martingale in \cite{abbasi2011improved}.

Holds with probability $1 - \delta_{M, i}$.
This event is described and controlled in Theorem \ref{th:martingale_conf_ellip}. 

\item $\mathcal{E}_{C,i,t} \equiv$ the empirical covariance $\hat \Sigma_{i,t}$ for arm $i$ at time $t$ is close to $\Sigma$.
This event is a direct consequence of event $\mathcal{E}_G$.

\item $\mathcal{E}_{B,i,t} \equiv$ the first $t$ observations pulled at arm $i$ have norm reasonably bounded.
The empirical average norm is not too far from its mean.
Holds with probability $1 - \delta_{B, i, t}$.
This event is described and controlled in Corollary \ref{cor:bounded_norm_obs_gauss}.
\end{enumerate}
Let $H$ be the set of all the previous events.
Then, by the union bound
\begin{equation}
\p \left( \cap_{\epsilon \in H} \ \epsilon \right) \ge 1 - \sum_{\epsilon \in H} \delta_\epsilon.
\end{equation}
Our goal is to show that if $\cap_{\epsilon \in H} \ \epsilon$ holds, then the loss $\wt L_n(\mathcal{A}) = \max_{i \in [m]}  \| \beta_i - \hat\beta_{i,n} \|^2_{\Sigma}$ is upper bounded by a quantity that resembles the expected loss of the algorithm that knows the $\sigma_i^2$'s in advance.

Fix $\delta > 0$.
We want $\delta = \sum_{\epsilon \in H} \delta_\epsilon$, and we would like to assign equal weight to all the sets of events.
First, $\delta_G = \delta / 3$.
Also, $\sum_i \delta_{M, i} = \delta / 3$, implying $\delta_{M, i} = \delta / 3m$ for every arm $i \in [m]$.
Finally, to bound observation norms, we set $\sum_i \sum_t \delta_{B, i, t} = \delta / 3$.
It follows that we can take $\delta_{B, i, t} = \delta / 3mT$, even though $t$ really ranges from $d$ to $n$.

Assume that $\mathcal{E}_G, \mathcal{E}_{M, i}$ and $\mathcal{E}_{B, i, t}$ hold for all arms $i$ and times $t$.
Then, by Theorem \ref{th:lower_bound_numpulls}, the final number of pulls for arm $i$ can be lower bounded by
\begin{equation}
k_i \ge \frac{\sigma_i^2}{\sum_j \sigma_j^2} \ n -  c \left( \sqrt{\frac{\sigma_i^2}{\sigma_{\min}^2}} + 1 \right) \sqrt{\frac{\sigma_i^2}{\sum_j \sigma_j^2} \ dn} + o\left(\sqrt{dn}\right),
\end{equation}
where $c = 2\left(1 + \sqrt{2 \log ({12mn}/{\delta}) / d} \right)$.

For notational simplicity, we denote by $\hat \beta_{i,t}$ the estimate after $t$ pulls.
Thus, with respect to our previous notation where $\hat \beta_{i, n}$ referred to our final estimate, we have that $\hat \beta_{i, k_{i,n}} = \hat \beta_{i, n}$ as $k_{i,n}$ is the total number of pulls for arm $i$.

If the $\mathcal{E}_{M, i}$ events hold, then we know that our $\hat\beta_{i,t}$ estimates are not very far from the true values $\beta_i$ when $t$ is large.
In particular, we know that the error is controlled by the radius $R_{i, t}$ of the confidence ellipsoids.
We expect these radiuses to decrease with the number of observations per arm, $t$.
As we have a lower bound on the total number of pulls for arm $i$, $k_{i,n}$, if the confidence ellipsoids apply, then we can directly obtain an upper bound on the radius $R_{i, t}$ at the end of the process.

We need to do a bit of work to properly bound $\| \hat \beta_{i, k_{i,n}} - \beta_i \|^2_{\Sigma}$.

Fix arm $i$, and assume $\mathcal{E}_{M, i}$ holds.
In addition, assume $\| \beta_i \|_2 \le Z$ for all $i$.
Let $\bar V_{i, t} = \lambda I + \X_{i, t}^T \X_{i, t}$, where $\X_{i, t}$ contains the first $t$ observations pulled by arm $i$.
We modify the proof of Theorem 2 in \cite{abbasi2011improved} by taking $x = (\hat V_t / t) (\hat \beta_t - \beta_*)$ in their equation 5 (we are using their notation in the latter expression).
Assume the algorithm pulls arm $i$ a total of $t$ times ---$k_{i,n}$ is a stopping time with respect to the $\sigma$-algebra that includes the environment (other arms)--- then, by Theorem \ref{th:martingale_conf_ellip}
\begin{align}\label{eq:martingale_th}
\| \hat \beta_{i, t} - \beta_i \|_{\bar V_{i, t} / t} \le \frac{\sigma_i}{\sqrt{t}} \sqrt{2 \log \left( \frac{\det\left( \bar V_{i, t} \right)^{1/2} \det \left( \lambda I \right)^{-1/2}}{\delta_{M, i}} \right)} + \sqrt{\frac{{\lambda}}{{t}}} \ Z.
\end{align}
We would like to upper bound $\| \hat \beta_{i, k_{i,n}} - \beta_i \|_{\Sigma}$ by means of $\| \hat \beta_{i, k_{i,n}} - \beta_i \|_{\bar V_{i, k_{i,n}} / k_{i,n}}$.
Note that when $t$ grows, $\bar V_{i, t} / t \to \Sigma$ as the regularization is washed out.
The distance between $\hat \Sigma_{i, t} = \bar V_{i, t} / t - (\lambda / t) I$ and $\Sigma$ is captured by event $\epsilon_{C,i,t}$.

Formally, as $\mathcal{E}_G$ holds, we know that the difference between $\Sigma$ and $\hat \Sigma_{i, t}$ is bounded in operator norm for any $i$ and $t$ by
\begin{equation}
\| \Sigma - \hat \Sigma_{i, t} \| \le 2 \left( 1 + \sqrt{\frac{2}{d} \ \log \frac{2}{\delta_G}} \right) \sqrt{\frac{d}{t}} \ \| \Sigma \| = c \ \sqrt{\frac{d}{t}} \ \lambda_{\max}(\Sigma).
\end{equation}
Then, as a consequence, for all $x \in \R^s$
\begin{equation}\label{eq_operator_norm_bound}
x^T (\Sigma - \hat \Sigma_{i, t}) x \le c \ \lambda_{\max}(\Sigma) \sqrt{\frac{d}{t}} \ \| x \|_2^2.
\end{equation}
In particular, by taking $x = \hat \beta_{i, t} - \beta_i$,
\begin{align}
c \ \lambda_{\max}(\Sigma) \ \sqrt{\frac{d}{t}} \ \| \hat \beta_{i, t} - \beta_i \|_2^2 &\ge (\hat \beta_{i, t} - \beta_i)^T (\Sigma - \hat \Sigma_{i, t}) (\hat \beta_{i, t} - \beta_i) \\
&= \| \hat \beta_{i, t} - \beta_i \|^2_{\Sigma} - \| \hat \beta_{i, t} - \beta_i \|^2_{\hat \Sigma_{i, t}}.
\end{align}
In addition, note that $\| x \|_{\hat \Sigma_{i, t}}^2 = \| x \|^2_{\bar V_{i, t} / t} - (\lambda / t) \| x \|_2^2$.
We conclude that
\begin{align}
\| \hat \beta_{i, t} - \beta_i \|^2_{\Sigma} &\le \| \hat \beta_{i, t} - \beta_i \|^2_{\hat \Sigma_{i, t}} + c \ \lambda_{\max}(\Sigma) \ \sqrt{\frac{d}{t}} \ \| \hat \beta_{i, t} - \beta_i \|_2^2 \\
&= \| \hat \beta_{i, t} - \beta_i \|^2_{\bar V_{i, t} / t} + \left( c \ \lambda_{\max}(\Sigma) \ \sqrt{\frac{d}{t}} - \frac{\lambda}{t} \right) \| \hat \beta_{i, t} - \beta_i \|_2^2.
\end{align}
On the other hand, we know that $\| \hat \beta_{i, t} - \beta_i \|^2_{\Sigma} \ge \lambda_{\min}(\Sigma) \| \hat \beta_{i, t} - \beta_i \|_2^2$.

Therefore, by \eqref{eq:martingale_th}
\begin{align}
\| \hat \beta_{i, t} - \beta_i \|^2_{\Sigma} &\le \frac{1}{1 - \frac{1}{\lambda_{\min}(\Sigma)} \left( c \ \lambda_{\max}(\Sigma) \sqrt{\frac{d}{t}} - \frac{\lambda}{t} \right) } \| \hat \beta_{i, t} - \beta_i \|^2_{\bar V_{i, t} / t} \\
&\le \frac{1}{1 - \gamma_t} \ \left[
\frac{\sigma_i}{\sqrt{t}} \sqrt{2 \log \left( \frac{\det\left( \bar V_{i, t} \right)^{1/2} \det \left( \lambda I \right)^{-1/2}}{\delta_{M,i}} \right)} + \frac{\sqrt{\lambda} Z}{\sqrt{t}} \right]^2 \\
&\le \frac{1}{1 - \gamma_t} \ \frac{1}{t} \left[
\sigma_i \sqrt{2 \left( \frac{1}{2} \log \left( \frac{\det\left( \bar V_{i, t} \right)}{\det \left( \lambda I \right)}\right) + \log \left( \frac{1}{\delta_{M,i}} \right) \right)} + \sqrt{\lambda} Z \right]^2  \\
&\le \frac{1}{1 - \gamma_t} \ \frac{1}{t} \ \left[
\sigma_i \sqrt{2 \left( \frac{1}{2} \sum_{j=1}^t \| X_{j} \|^2_{\bar V_{i, t}^{-1}} + \log \left( \frac{1}{\delta_{M,i}} \right) \right)} + \sqrt{\lambda} Z \right]^2, \label{eq_martingale_control}
\end{align}
where we defined $\gamma_t = \left( c \ \lambda_{\max}(\Sigma) \sqrt{\frac{d}{t}} - \frac{\lambda}{t} \right) / \lambda_{\min}(\Sigma)$, and we used Lemma 11 in \cite{abbasi2011improved} which shows that
\begin{equation}\label{eq_lemma11_csaba}
\log \left( \frac{\det\left( \bar V_{i, t} \right)}{\det \left( \lambda I \right)}\right) \le \sum_{j=1}^t \| X_{j} \|^2_{\bar V_{i, t}^{-1}}.
\end{equation}
We would like to approximate the $\bar V_{i, t}^{-1}$ norm, by means of the inverse covariance norm, $\Sigma^{-1}$.
The whitened equation that is equivalent to \eqref{eq_operator_norm_bound} --- see Lemma \ref{prop:trace_inv_covariance} --- is given by $\| I - \hat{\bar{\Sigma}}_{i, t} \| \le \epsilon$, with $\epsilon = c \sqrt{{d}/{t}}$.

It implies that for any $j = 1, \dots, d$,
\begin{equation}
1 - c \sqrt{\frac{d}{t}} - O\left( \frac{d}{t} \right) \le \lambda_j (\hat{\bar{\Sigma}}_{i, t}) \le 1 + c \sqrt{\frac{d}{t}} + O\left( \frac{d}{t} \right).
\end{equation}
The $\bar V_{i, t}^{-1}$ norm can be bounded as follows
\begin{align}
\| x \|_{\bar V_{i, t}^{-1}}^2 &= x^T \bar V_{i, t}^{-1} x = x^T \left( \lambda I + \X_{i,t}^T\X_{i,t} \right)^{-1} x \\
&= x^T \Sigma^{-1/2} \Sigma^{1/2}  \left( \lambda I + \X_{i,t}^T\X_{i,t} \right)^{-1} \Sigma^{1/2} \Sigma^{-1/2} x \\
&= \bar{x}^T \left( \lambda \Sigma^{-1} + \bar{\X}_{i,t}^T\bar{\X}_{i,t} \right)^{-1} \bar{x} \\
&= \frac{1}{t} \bar{x}^T \left( \frac{\lambda}{t} \Sigma^{-1} + \hat{\bar{\Sigma}}_{i,t}^{-1} \right)^{-1} \bar{x},
\end{align}
where $\bar{x}$ denotes the whitened version of $x$.
We can now apply the matrix inversion lemma to see that
\begin{align}
\| x \|_{\bar V_{i, t}^{-1}}^2 &= \frac{1}{t} \bar{x}^T \left( \frac{\lambda}{t} \Sigma^{-1} + \hat{\bar{\Sigma}}_{i,t}^{-1} \right)^{-1} \bar{x} \\
&= \frac{1}{t} \bar{x}^T \left( \hat{\bar{\Sigma}}_{i,t} - \hat{\bar{\Sigma}}_{i,t} \Sigma^{-1/2} \left( \frac{t}{\lambda} I + \Sigma^{-1/2} \hat{\bar{\Sigma}}_{i,t} \Sigma^{-1/2} \right)^{-1} \Sigma^{-1/2} \hat{\bar{\Sigma}}_{i,t} \right) \bar{x} \\
&= \frac{1}{t} \bar{x}^T \left( \hat{\bar{\Sigma}}_{i,t} - \hat{\bar{\Sigma}}_{i,t} \Sigma^{-1/2} R^{-1} \Sigma^{-1/2} \hat{\bar{\Sigma}}_{i,t} \right) \bar{x},
\end{align}
where we implicitly defined $R = ({t}/{\lambda}) I + \Sigma^{-1/2} \hat{\bar{\Sigma}}_{i,t} \Sigma^{-1/2}$, a positive definite matrix.
We upper bound the previous expression to conclude that
\begin{align}
\| x \|_{\bar V_{i, t}^{-1}}^2 &= \frac{1}{t} \bar{x}^T \left( \hat{\bar{\Sigma}}_{i,t} - \hat{\bar{\Sigma}}_{i,t} \Sigma^{-1/2} R^{-1} \Sigma^{-1/2} \hat{\bar{\Sigma}}_{i,t} \right) \bar{x} \\
&\le \frac{1}{t} \bar{x}^T \hat{\bar{\Sigma}}_{i,t} \bar{x} \le \frac{\lambda_{\max}(\hat{\bar{\Sigma}}_{i,t})}{t} \| \bar{x} \|_2^2 \le \frac{1 + c \sqrt{{d}/{t}} + O\left( {d}/{t} \right)}{t} \| \bar{x} \|_2^2.
\end{align}

\

If we now go back to \eqref{eq_lemma11_csaba}, using the previous results, we see that
\begin{align}
\sum_{j=1}^t \| X_{j} \|^2_{\bar V_{i, t}^{-1}} &\le \left( 1 + c \sqrt{\frac{d}{t}} + O\left( \frac{d}{t} \right) \right) \left( \frac{1}{t} \sum_{j=1}^t {\| \bar{X}_j \|^2_2} \right).
\end{align}
Substituting the upper bound in \eqref{eq_martingale_control}:
\begin{align}
&\| \hat \beta_{i, t} - \beta_i \|^2_{\Sigma} \le \frac{1}{1 - \gamma_t} \ \frac{1}{t} \ \left[
\sigma_i \sqrt{2 \left( \frac{1}{2} \sum_{j=1}^t \| X_{j} \|^2_{\bar V_{i, t}^{-1}} + \log \left( \frac{1}{\delta_{M,i}} \right) \right)} + \sqrt{\lambda} Z \right]^2 \\
&\le \frac{1}{1 - \gamma_t} \ \frac{1}{t} \ \left[
\sigma_i \sqrt{ \left( 1 + c \sqrt{\frac{d}{t}} + O\left( \frac{d}{t} \right) \right)\left( \frac{1}{t} \sum_{j=1}^t {\| \bar{X}_j \|^2_2} \right) + 2 \log \frac{1}{\delta_{M,i}} } + \sqrt{\lambda} Z \right]^2. \nonumber
\end{align}
By Corollary \ref{cor:bounded_norm_obs_gauss}, with probability $1 - \delta_{B, i, t}$, the empirical average norm of the white gaussian observations is controlled by
\begin{equation}
\frac{1}{t} \sum_{j=1}^t \| \bar{X}_j \|^2 \le d + 8 \log \left( \frac{1}{\delta_{B, i, t}} \right) \sqrt{\frac{d}{t}}.
\end{equation}
As $\delta_{B, i, t} = \delta / 3mn$ and $\delta_{M, i} = \delta / 3m$, we conclude that
\begin{align}
&\| \hat \beta_{i, t} - \beta_i \|^2_{\Sigma} \le \frac{1}{1 - \gamma_t} \ \frac{1}{t} \ \left[
\sigma_i \sqrt{ \left( 1 + c \sqrt{\frac{d}{t}} + O\left( \frac{d}{t} \right) \right)\left( d + 8 \log \left( \frac{3mn}{\delta} \right) \sqrt{\frac{d}{t}} \right) + 2 \log \left( \frac{3m}{\delta} \right)} + \sqrt{\lambda} Z \right]^2 \nonumber \\
&\le \frac{1}{1 - \left( c \lambda_{\max}(\Sigma) \sqrt{\frac{d}{t}} - \frac{\lambda}{t} \right) / \lambda_{\min}(\Sigma)} \ \frac{1}{t} \ \left[
\sigma_i \sqrt{ \left( d + \left( c + 8 \log \frac{3mn}{\delta} \right) \sqrt{\frac{d}{t}} + O\left( \frac{d}{t} \right) \right)  + 2 \log \frac{3m}{\delta}} + \sqrt{\lambda} Z \right]^2. \label{eq_main_bound_expression}
\end{align}
At this point, recall that under our events
\begin{equation}\label{eq_lower_bound_num_pulls_bis}
k_{i,n} \geq k_{i,n}^* - C\sqrt{nd} - \Omega(n^{1/4}),
\end{equation}
where $C = \frac{C_\Delta + 8C_{\Tr}}{\sigma_{\min}^2\sqrt{\lambda_{\min}}}$.
As \eqref{eq_main_bound_expression} decreases in $t$, we will bound the error $\| \hat \beta_{i, t} - \beta_i \|^2_{\Sigma}$ by taking the number of pulls $t = ({\sigma_i^2}/{\sum_j \sigma_j^2}) n + O(\sqrt{dn})$ (in particular, the RHS of \eqref{eq_lower_bound_num_pulls_bis}).

If we take $\lambda = 1/n$, we have that
\begin{align}
&\| \hat \beta_{i, t} - \beta_i \|^2_{\Sigma} \\
&\le \frac{1}{1 - \left( c \lambda_{\max}(\Sigma) \sqrt{\frac{d}{t}} - \frac{\lambda}{t} \right) / \lambda_{\min}(\Sigma)} \ \frac{1}{t} \ \left[
\sigma_i \sqrt{ \left( d + \left( c + 8 \log \frac{3mn}{\delta} \right) \sqrt{\frac{d}{t}} + O\left( \frac{d}{t} \right) \right)  + 2 \log \frac{3m}{\delta}} + \sqrt{\lambda} Z \right]^2 \nonumber \\
&\le \left( 1 + c \frac{\lambda_{\max}(\Sigma)}{\lambda_{\min}(\Sigma)} \sqrt{\frac{d}{t}} + O\left(\frac{d}{t}\right)  \right) \ \frac{1}{t} \ \left[
\sigma_i \sqrt{ \left( d + \left( c + 8 \log \frac{3mn}{\delta} \right) \sqrt{\frac{d}{t}} + O\left( \frac{d}{t} \right) \right)  + 2 \log \frac{3m}{\delta}} + \sqrt{\lambda} Z \right]^2 \nonumber \\
&\le \left( 1 + O \left( \sqrt{\frac{d}{t}} \right) \right) \ \frac{1}{t} \ \left[
\sigma_i^2 \left( d + 2 \log \frac{3m}{\delta} + \left( c + 8 \log \frac{3mn}{\delta} \right) \sqrt{\frac{d}{t}}  \right)  + \frac{Z^2}{n} + 2Z\sigma_i \sqrt{\frac{d + 2 \log \frac{3m}{\delta}}{n}} + o \left( \sqrt{\frac{d}{n}} \right) \right]. \nonumber
\end{align}
Now, by \eqref{eq_lower_bound_num_pulls_bis} and \eqref{eq:inverse.pulls.bound}, and using the $\lambda_i = \sigma_i^2 / \sum_j \sigma_j^2$ notation
 \begin{align}
&\| \hat \beta_{i, t} - \beta_i \|^2_{\Sigma} \\
&\le \left( 1 + O \left( \sqrt{\frac{d}{n}} \right) \right) \frac{\left[
\sigma_i^2 \left( d + 2 \log \frac{3m}{\delta} \right) + \sigma_i^2 \left( c + 8 \log \frac{3mn}{\delta} \right) \sqrt{\frac{d}{t}} + 2Z\sigma_i \sqrt{\frac{d}{n}} + o \left( \sqrt{\frac{d}{n}} \right) \right]}{k_{i,n}^* - C\sqrt{nd} - \Omega(n^{1/4})} \nonumber \\
&= \left( 1 + O \left( \sqrt{\frac{d}{n}} \right) \right) \frac{\left[
\sigma_i^2 \left( d + 2 \log \frac{3m}{\delta} \right) + \left( \sigma_i^2 \left( c + 8 \log \frac{3mn}{\delta} \right) + 2Z\sigma_i \right) \sqrt{\frac{d}{t}}  + o \left( \sqrt{\frac{d}{n}} \right) \right]}{k_{i,n}^* - C\sqrt{nd} - \Omega(n^{1/4})} \nonumber \\
&= \left( 1 + O \left( \sqrt{\frac{d}{n}} \right) \right) \left( \frac{1}{k_{i,n}^*} + O\bigg(\frac{\sqrt{d}}{\sigma_{\min}^2(\lambda_{\min}n)^{3/2}}\bigg) \right) \left[ \sigma_i^2 \left( d + 2 \log \frac{3m}{\delta} \right) + \tilde{O} \left(\sqrt{\frac{d}{n}} \right) \right] \nonumber \\
&= \frac{\sigma_i^2}{k_{i,n}^*} \left( d + 2 \log \frac{3m}{\delta} \right) + O\bigg(\frac{1}{\sigma_{\min}^2}\Big(\frac{d}{\lambda_{\min}n}\Big)^{3/2}\bigg).
\end{align} 
\end{proof}

\newpage

\section{Loss of a RLS-based Learning Algorithm}
\label{app:rls.loss.learning}

\subsection{Distribution of RLS estimates}

\begin{proposition}\label{prop:distribution.rls}
Given a linear regression problem with observations $Y = X^\transp \beta + \epsilon$ with Gaussian noise with variance $\sigma^2$, after $n$ contexts $\X$ and the corresponding observations $\Y$, the ridge estimate of parameter $\lambda$ is obtained as 
\begin{equation*}
\hat\beta^\lambda = (\X^\transp\X + \lambda I)^{-1} \X^\transp\Y = \W \X^\transp\Y,
\end{equation*}
with $\W = (\X^\transp\X + \lambda I)^{-1}$, and its distribution conditioned on $\X$ is
\begin{equation}
\hat\beta^\lambda \mid \X \sim \mathcal{N} \left( \beta - \lambda \W \beta, \sigma^2 \ \W (\X^\transp \X) \W^\transp \right).
\end{equation}
\end{proposition}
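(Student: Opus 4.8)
The plan is to condition on the design matrix $\X$ and exploit the fact that, once $\X$ is fixed, the ridge estimate is an affine function of the Gaussian noise vector. First I would write $\Y = \X\beta + \bepsilon$ with $\bepsilon \mid \X \sim \mathcal{N}(0, \sigma^2 I_n)$, and substitute this into the closed form $\hat\beta^\lambda = \W\X^\transp\Y$ to obtain the decomposition $\hat\beta^\lambda = \W\X^\transp\X\beta + \W\X^\transp\bepsilon$. The first term is deterministic given $\X$, and the second is a fixed linear map applied to a Gaussian vector, so $\hat\beta^\lambda \mid \X$ is Gaussian and it only remains to identify its mean and covariance.

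For the mean, the key algebraic step is the identity $\W\X^\transp\X = \W(\X^\transp\X + \lambda I) - \lambda\W = I - \lambda\W$, which gives $\E[\hat\beta^\lambda \mid \X] = (I - \lambda\W)\beta = \beta - \lambda\W\beta$, matching the stated bias term. For the covariance, I would compute $\mathrm{Cov}(\W\X^\transp\bepsilon \mid \X) = \W\X^\transp(\sigma^2 I_n)\X\W^\transp = \sigma^2\,\W(\X^\transp\X)\W^\transp$, which is precisely the claimed covariance matrix. One can additionally remark that $\W$ is symmetric, being the inverse of a symmetric matrix, so $\W^\transp = \W$, although the statement is fine as written.

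There is essentially no obstacle here; the only points that deserve a word of care are the harmless rewriting $\W\X^\transp\X = I - \lambda\W$ used for the mean, and the fact that $\X^\transp\X + \lambda I$ is invertible for any $\lambda > 0$ irrespective of the rank of $\X$ --- which is exactly why RLS is used in place of OLS when the number of samples can be small relative to $d$. As a sanity check, setting $\lambda = 0$ collapses the bias term and recovers the OLS conditional distribution in Eq.~\eqref{eq:ols.distribution}.
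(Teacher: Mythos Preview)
Your proposal is correct and slightly more direct than the paper's argument. The paper first rewrites the ridge estimate as a linear image of the OLS estimate, $\hat\beta^\lambda = (\X^\transp\X + \lambda I)^{-1}(\X^\transp\X)\,\hat\beta$, inherits Gaussianity from $\hat\beta \mid \X \sim \mathcal{N}(\beta,\sigma^2(\X^\transp\X)^{-1})$, and then simplifies the mean $(\X^\transp\X+\lambda I)^{-1}(\X^\transp\X)\beta$ to $\beta-\lambda\W\beta$ via the matrix inversion lemma. You instead substitute $\Y=\X\beta+\bepsilon$ directly and use the elementary identity $\W\X^\transp\X = I-\lambda\W$, bypassing both the OLS detour and the Woodbury step. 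Your route has the minor advantage that it does not require $\X^\transp\X$ to be invertible at any point, which is consistent with your own remark about why one uses RLS in the first place; the paper's derivation silently assumes the OLS estimator exists.
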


\begin{proof}
Recalling the definition of the OLS estimator $\wh\beta$ (assuming it exists), we can easily rewrite the RLS estimator as
\begin{equation*}
\hat\beta^\lambda = (\X^\transp\X + \lambda I)^{-1} (\X^\transp\X) (\X^\transp\X)^{-1} \X^\transp\Y = (\X^\transp\X + \lambda I)^{-1} (\X^\transp\X) \wh\beta.
\end{equation*}
This immediately gives that the conditional distribution of $\hat\beta^\lambda$ is Gaussian as for $\wh\beta$. We just need to compute the corresponding mean vector and the covariance matrix. We first notice that the RLS estimator is biased as
\begin{equation*}
\E[\hat\beta^\lambda \big| \X] = (\X^\transp\X + \lambda I)^{-1} (\X^\transp\X) \beta.
\end{equation*}
Let $\St = \X^\transp \X$, then we can further rewrite the bias as
\begin{align*}
\E[\hat\beta^\lambda \big| \X] &= (\St + \lambda \St \St^{-1})^{-1} \St \beta = \Big( \St \big(I + \lambda \St^{-1}\big) \Big)^{-1} \St \beta \\
&= (I + \lambda \St^{-1})^{-1} \beta = \big(I - \lambda (\St+\lambda I)^{-1} \big) \beta \\
&= \beta - \lambda (\St+\lambda I)^{-1} \beta = \beta - \lambda \W \beta,
\end{align*}
where we used the matrix inversion lemma. Recalling that the covariance of $\wh\beta$ is $\sigma^2(\X^\transp \X)^{-1}$, the covariance of $\wh\beta^\lambda$ is then
\begin{equation*}
\text{Cov}\left[ \hat\beta^\lambda | \X \right] = \W(\X^\transp\X)\text{Cov}\left[ \hat\beta | \X \right] (\X^\transp\X) \W^\transp = \sigma^2 \W(\X^\transp\X) \W^\transp.
\end{equation*}
\end{proof}


\subsection{Loss Function of a RLS-based Algorithm}

We start by proving the loss function in the case of a static algorithm.

\begin{lemma}\label{lm:loss_ridge_static}
Let $\mathcal{A}$ be a learning algorithm that selects instance $i$ for $k_{i,n}$ times, where $k_{i,n}$ is a fixed quantity chosen in advance, and that returns estimates $\wh\beta_i^\lambda$ obtained by RLS with regularization $\lambda$. Then its loss after $n$ steps can be expressed as
\begin{equation}\label{eq:ridge_static_loss}
L_n(\mathcal{A}_{\text{stat}}) = \max_{i \in [m]} \ \Tr \left( \Sigma \E\left[ \W_{i,n} \left( \sigma_i^2 \X_{i,n}^\transp \X_{i,n} + \lambda^2 \beta_i\beta_i^\transp \right) \W_{i,n}^\transp \right] \right),
\end{equation}
where $\W_{i,n} = (\X_{i,n}^\transp\X_{i,n} + \lambda I)^{-1}$, and $\X_{n_i}$ is the matrix with the $k_{i,n}$ contexts from instance $i$.
\end{lemma}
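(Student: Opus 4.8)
The plan is to mimic the proof of Proposition~\ref{p:optimal.static}, but with the ordinary least-squares sampling law replaced by the ridge one from Proposition~\ref{prop:distribution.rls}. First I would recall, exactly as in that proof, the identity $L_{i,n}(\wh\beta_i^\lambda) = \|\beta_i - \wh\beta_i^\lambda\|_\Sigma^2 = \Tr\big(\Sigma\,(\beta_i - \wh\beta_i^\lambda)(\beta_i - \wh\beta_i^\lambda)^\transp\big)$, so that, after moving the expectation inside the trace,
\[
L_n(\mathcal{A}_{\text{stat}}) = \max_{i\in[m]} \Tr\Big(\Sigma\, \E_{\mathcal{D}_{i,n}}\big[(\beta_i - \wh\beta_i^\lambda)(\beta_i - \wh\beta_i^\lambda)^\transp\big]\Big),
\]
and it suffices to evaluate the mean-squared-error matrix $M_{i,n} := \E\big[(\wh\beta_i^\lambda - \beta_i)(\wh\beta_i^\lambda - \beta_i)^\transp\big]$ for each fixed instance $i$.

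Next I would condition on the random context matrix $\X_{i,n}$. Since the allocation $k_{i,n}$ is chosen in advance and the instance selection does not look at the contexts, $\X_{i,n}$ consists of $k_{i,n}$ i.i.d.\ draws from $\mathcal{N}(0,\Sigma)$, and Proposition~\ref{prop:distribution.rls} applies verbatim: with $\W_{i,n} = (\X_{i,n}^\transp\X_{i,n} + \lambda I)^{-1}$ (symmetric),
\[
\wh\beta_i^\lambda \mid \X_{i,n} \sim \mathcal{N}\!\big(\beta_i - \lambda \W_{i,n}\beta_i,\ \sigma_i^2\,\W_{i,n}(\X_{i,n}^\transp\X_{i,n})\W_{i,n}^\transp\big).
\]
Thus $\wh\beta_i^\lambda - \beta_i$ has conditional mean $-\lambda\W_{i,n}\beta_i$ and conditional covariance $\sigma_i^2\,\W_{i,n}(\X_{i,n}^\transp\X_{i,n})\W_{i,n}^\transp$, and the bias--variance decomposition $\E[ZZ^\transp] = \operatorname{Cov}(Z) + \E[Z]\,\E[Z]^\transp$ yields
\[
\E\big[(\wh\beta_i^\lambda - \beta_i)(\wh\beta_i^\lambda - \beta_i)^\transp \mid \X_{i,n}\big] = \W_{i,n}\big(\sigma_i^2\,\X_{i,n}^\transp\X_{i,n} + \lambda^2\,\beta_i\beta_i^\transp\big)\W_{i,n}^\transp,
\]
where I used symmetry of $\W_{i,n}$ to merge the two terms. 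Taking the expectation over $\X_{i,n}$ by the tower rule, premultiplying by $\Sigma$, taking the trace, and finally the maximum over $i\in[m]$ gives exactly~\eqref{eq:ridge_static_loss}.

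I do not expect a genuine obstacle here: the computation is routine once Proposition~\ref{prop:distribution.rls} is in hand. The one point that needs care — and the only place this differs from the OLS computation in Proposition~\ref{p:optimal.static} — is the RLS bias $-\lambda\W_{i,n}\beta_i$, which contributes the extra rank-one summand $\lambda^2\beta_i\beta_i^\transp$ inside the trace. It is also worth flagging that this argument genuinely relies on $k_{i,n}$ being nonrandom: for an adaptive learning algorithm the conditional law of $\wh\beta_i^\lambda$ given $\X_{i,n}$ is no longer the one in Proposition~\ref{prop:distribution.rls}, which is precisely why the learning-algorithm analogue (Lemma~\ref{lm:algo_ridge_loss}) must instead invoke the independence result of Lemma~\ref{lm:indep_hatbeta_hatsigma}.
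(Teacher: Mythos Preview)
Your proposal is correct and follows essentially the same route as the paper: reduce to the trace of $\Sigma$ times the MSE matrix, condition on $\X_{i,n}$, invoke Proposition~\ref{prop:distribution.rls}, and apply the bias--variance decomposition to obtain $\W_{i,n}(\sigma_i^2\X_{i,n}^\transp\X_{i,n}+\lambda^2\beta_i\beta_i^\transp)\W_{i,n}^\transp$. The only cosmetic difference is that the paper writes out the decomposition by explicitly adding and subtracting $\lambda\W_{i,n}\beta_i$, whereas you cite $\E[ZZ^\transp]=\operatorname{Cov}(Z)+\E[Z]\E[Z]^\transp$ directly.
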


\begin{proof}
The proof follows the same steps as in App.~\ref{app:optimal.static} up to Eq.~\ref{eq:loss.stat.allocation.step}, where we have
\begin{align*}
L_n(\mathcal{A}_{\text{stat}}) &= \max_{i \in [m]} \ \Tr \bigg( \E_{\X_i} \bigg[ \E_{\boldsymbol{\epsilon}_{i}} \Big[ \Sigma(\beta_i -\hat\beta_i)(\beta_i -\hat\beta_i)^\transp \big| \X_i\Big] \bigg]\bigg).
\end{align*}
Following Proposition~\ref{prop:distribution.rls}, we can refine the inner expectation as
\begin{align*}
&\E\left[(\hat\beta - \beta) (\hat\beta - \beta)^\transp \mid \X \right] \\
&= \E\left[(\hat\beta - \beta + \lambda \W\beta - \lambda \W\beta) (\hat\beta - \beta + \lambda \W\beta - \lambda \W\beta)^\transp \mid \X \right] \\
&= \E\left[(\hat\beta - \E[\hat\beta \mid \X] - \lambda \W\beta) (\hat\beta - \E[\hat\beta \mid \X] - \lambda \W\beta)^\transp \mid \X \right] \\
&= \E\left[(\hat\beta - \E[\hat\beta \mid \X]) (\hat\beta - \E[\hat\beta \mid \X])^\transp \mid \X \right] + \lambda^2 \W \beta \beta^\transp \W^\transp \\
&= \sigma^2 \ \W (\X^\transp \X) \W^\transp + \lambda^2 \W \beta \beta^\transp \W^\transp \\
&= \W \left[ \sigma^2 \X^\transp \X + \lambda^2 \beta\beta^\transp \right] \W^\transp.
\end{align*}
Plugging the final expression back into $L_n(\mathcal{A}_{\text{static}})$ we obtain the desired expression.
\end{proof}

We notice that a result similar to Lemma~\ref{lm:indep_hatbeta_hatsigma} holds for RLS estimates as well.

\begin{proposition}\label{prop:indep_betaridge_sigmaols}
Assume the noise $\epsilon$ is Gaussian.
Let $\hat\sigma^2$ be the estimate for $\sigma^2$ computed by using the residuals of the OLS solution $\hat\beta$.
Then, $\hat\beta^\lambda$ and $\hat\sigma^2$ are independent random variable conditionally to $\X$.
\end{proposition}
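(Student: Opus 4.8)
The plan is to leverage Proposition~\ref{prop:distribution.rls}, which shows that once $\X$ is fixed the RLS estimate is a deterministic linear transformation of the OLS estimate, together with the classical orthogonal decomposition that underlies the independence of the OLS estimate and its residual-based variance estimate under Gaussian noise. In other words, the RLS estimator inherits the independence property already enjoyed by the OLS estimator.

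First I would recall that, writing $\St = \X^\transp\X$ (invertible, since computing $\hat\sigma^2$ requires at least $d+1$ samples) and $\W = (\St + \lambda I)^{-1}$, the OLS estimate is $\hat\beta = \St^{-1}\X^\transp\Y$ and, by the algebra in the proof of Proposition~\ref{prop:distribution.rls}, $\hat\beta^\lambda = \W\X^\transp\Y = \W\St\,\hat\beta$. Hence, conditionally on $\X$, the random vector $\hat\beta^\lambda$ is a measurable (indeed linear) function of $\hat\beta$ alone and carries no additional randomness.

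Next I would establish the standard fact that, conditionally on $\X$, the OLS estimate $\hat\beta$ and the variance estimate $\hat\sigma^2$ are independent. Let $H = \X\St^{-1}\X^\transp$ be the orthogonal projector onto the column span of $\X$. Since $\Y = \X\beta + \bepsilon$, we have $\hat\beta - \beta = \St^{-1}\X^\transp\bepsilon$, which is a function of $H\bepsilon$, while the residual vector is $\Y - \X\hat\beta = (I-H)\bepsilon$, so $\hat\sigma^2 = \|(I-H)\bepsilon\|^2/(k-d)$ is a function of $(I-H)\bepsilon$. Conditionally on $\X$, $\bepsilon \sim \mathcal{N}(0,\sigma^2 I)$, so $(H\bepsilon,\,(I-H)\bepsilon)$ is jointly Gaussian with cross-covariance $\sigma^2 H(I-H) = 0$; hence $H\bepsilon$ and $(I-H)\bepsilon$ are independent, and therefore so are their measurable functions $\hat\beta$ and $\hat\sigma^2$.

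Finally I would combine the two steps: $\hat\beta^\lambda$ is a function of $\hat\beta$, hence of $H\bepsilon$, whereas $\hat\sigma^2$ is a function of $(I-H)\bepsilon$, and these two generating vectors are independent given $\X$; therefore $\hat\beta^\lambda$ and $\hat\sigma^2$ are independent given $\X$. I do not expect a genuine obstacle here — the only point requiring care is keeping the conditioning on $\X$ explicit throughout and observing that passing from $\hat\beta$ to $\hat\beta^\lambda$ introduces no new source of randomness, which is precisely what lets the independence transfer from OLS to RLS.
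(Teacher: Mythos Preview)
Your proposal is correct and follows essentially the same approach as the paper: both argue that, conditionally on $\X$, the RLS estimate is a deterministic (linear) function of the OLS estimate via $\hat\beta^\lambda = \W\St\,\hat\beta$, and then transfer the conditional independence of $\hat\beta$ and $\hat\sigma^2$ to $\hat\beta^\lambda$. The only minor difference is that the paper invokes Lemma~\ref{lm:indep_hatbeta_hatsigma} for the independence of $\hat\beta$ and $\hat\sigma^2$, whereas you supply the classical hat-matrix argument directly; your version is more self-contained for this proposition, but the logic is identical.
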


\begin{proof}
As shown in the proof of Proposition~\ref{prop:distribution.rls}, we have $\hat\beta^\lambda = (\X^\transp\X + \lambda I)^{-1} (\X^\transp\X) \wh\beta$ and we know that functions of independent random variables are themselves independent. Since the matrix mapping $\wh\beta$ to $\wh\beta^\lambda$ is fixed given $\X$, and $\hat\beta$ and $\hat\sigma^2$ are conditionally independent from Lemma~\ref{lm:indep_hatbeta_hatsigma}, then the statement follows.
\end{proof}

We can now combine Proposition~\ref{prop:indep_betaridge_sigmaols} and Lemma~\ref{lm:loss_ridge_static} to conclude that a similar expression to Eq.~\ref{eq:ridge_static_loss} holds for the ridge estimators also when a non-static algorithm such as \traceucb is run.

\begin{lemma}\label{lm:algo_ridge_loss}
Let $\mathcal{A}$ be a learning algorithm such that $I_t$ is chosen as a function of $\mathcal{D}_{t-1} = \{X_1, I_1, Y_{I_1, 1}, \ldots, X_{t-1}, I_{t-1}, Y_{I_{t-1},t-1}\}$, and that it returns estimates $\wh\beta_i^\lambda$ obtained by RLS with regularization $\lambda$. Then its loss after $n$ steps can be expressed as
\begin{equation}\label{eq:ridge_static_loss}
L_n(\mathcal{A}) = \max_{i \in [m]} \ \Tr \left( \Sigma \E\left[ \W_{i,n} \left( \sigma_i^2 \X_{i,n}^\transp \X_{i,n} + \lambda^2 \beta_i\beta_i^\transp \right) \W_{i,n}^\transp \right] \right),
\end{equation}
where $\W_{i,n} = (\X_{i,n}^\transp\X_{i,n} + \lambda I)^{-1}$, and $\X_{i, n}$ is the matrix with the $k_{i,n}$ contexts from instance $i$.
\end{lemma}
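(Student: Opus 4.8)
The plan is to mimic, almost line for line, the argument used for the OLS-based algorithm in Lemma~\ref{lem:loss.learning}, replacing the conditional covariance of the OLS estimate by the conditional second moment of the RLS estimate and invoking the ridge analogue of the independence property. First I would recall, exactly as in the proof of Lemma~\ref{lm:loss_ridge_static}, that Proposition~\ref{prop:distribution.rls} yields the purely algebraic conditional identity
\begin{equation*}
\E\!\left[(\wh\beta^\lambda_i - \beta_i)(\wh\beta^\lambda_i - \beta_i)^\transp \mid \X_{i,n}\right] = \W_{i,n}\left(\sigma_i^2\,\X_{i,n}^\transp\X_{i,n} + \lambda^2\beta_i\beta_i^\transp\right)\W_{i,n}^\transp ,
\end{equation*}
with $\W_{i,n} = (\X_{i,n}^\transp\X_{i,n} + \lambda I)^{-1}$. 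This is the only place the ridge bias enters, and it follows from the Gaussian conditional law of $\wh\beta^\lambda_i$ by completing the square exactly as in the proof of Lemma~\ref{lm:loss_ridge_static}.

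Next I would set up the same ``pre-sampled randomness'' device as in the proof of Lemma~\ref{lem:loss.learning}: for each instance $i$, imagine the sequences of contexts, noises, RLS estimates $\{\wh\beta^\lambda_{i,j}\}_j$, and variance estimates $\{\hat\sigma^2_{i,j}\}_j$ as drawn in advance, so that \traceucb merely selects a stopping time $k_{i,n}$ for instance $i$. This stopping time is measurable with respect to the filtration generated by the past contexts of $i$, the past variance estimates, and the ``environment'' $\mathcal{E}_{-i}$ of all other instances, but crucially \emph{not} the current context. The key input is Proposition~\ref{prop:indep_betaridge_sigmaols} (which itself rests on Lemma~\ref{lm:indep_hatbeta_hatsigma}): conditionally on the design $\X_{1:j}$, the RLS estimate $\wh\beta^\lambda_{i,j}$ is independent of $\hat\sigma^2_{i,1},\dots,\hat\sigma^2_{i,j}$, and it is also independent of $\mathcal{E}_{-i}$. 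Consequently, letting $\mathcal{F}_{i,j}$ denote the $\sigma$-algebra generated by the contexts of instance $i$ and its variance estimates up to time $j$, the conditional law of $\wh\beta^\lambda_{i,j}$ given $\mathcal{F}_{i,j},\mathcal{E}_{-i}$ coincides with its law given only $\X_{1:j}$.

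With that in hand I would carry out the tower-of-expectations computation verbatim from the proof of Lemma~\ref{lem:loss.learning}: condition on $\X_{1:n}$ and $\mathcal{E}_{-i}$, split over the value of $\{k_{i,n}=j\}$, insert the conditional second-moment identity displayed above, and observe that the conditioning on $\{k_{i,n}=j\}$ can be dropped inside the inner expectation precisely because of the independence just stated. The sum over $j$ then collapses to $\E[\W_{i,n}(\sigma_i^2\X_{i,n}^\transp\X_{i,n}+\lambda^2\beta_i\beta_i^\transp)\W_{i,n}^\transp]$. Pre-multiplying by $\Sigma$, taking the trace (which commutes with expectation), using $L_{i,n}(\wh\beta^\lambda_i) = \Tr\big(\Sigma\,\E[(\wh\beta^\lambda_i-\beta_i)(\wh\beta^\lambda_i-\beta_i)^\transp]\big)$, and finally taking $\max_{i\in[m]}$ yields the claimed formula.

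The main obstacle — indeed the only non-routine point — is justifying that the randomness of the data-dependent stopping time $k_{i,n}$ does not perturb the conditional law of $\wh\beta^\lambda_{i,j}$. This is exactly what Proposition~\ref{prop:indep_betaridge_sigmaols} buys us: since \traceucb's decisions depend on the data only through past contexts and variance estimates (never the pending context), conditioning on $\{k_{i,n}=j\}$ contributes no information about $\wh\beta^\lambda_{i,j}$ beyond what is already contained in $\X_{1:j}$. Everything else is the bias–variance algebra of ridge regression together with an interchange of expectation and trace.
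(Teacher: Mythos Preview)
Your proposal is correct and follows exactly the route the paper takes: invoke Proposition~\ref{prop:indep_betaridge_sigmaols} to extend the conditional-independence result of Lemma~\ref{lm:indep_hatbeta_hatsigma} from the OLS to the RLS estimate, then rerun the tower-of-expectations argument of Lemma~\ref{lem:loss.learning} with the ridge conditional second moment from Lemma~\ref{lm:loss_ridge_static} in place of the OLS conditional covariance. The paper's own proof says precisely this in two sentences; you have simply spelled out the steps.
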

\begin{proof}
The proof follows immediately by extending Lemma \ref{lm:indep_hatbeta_hatsigma} to $\hat\beta_{\lambda}$ as, by Proposition~\ref{prop:indep_betaridge_sigmaols}, $\hat{\beta}_{\lambda}$ and $\hat\sigma^2_{\text{OLS}}$ are independent.
Then, we proceed in a way similar to that in the proof of Lemma \ref{lem:loss.learning} to perform the required conditioning.
\end{proof}

\newpage

\section{Sparse Trace-UCB Algorithm}
\label{app:high.dimensions}

\subsection{Summary}
High-dimensional linear regression models are remarkably common in practice.
Companies tend to record a large number of features of their customers, and feed them to their prediction models. There are also cases in which the number of problem instances under consideration $m$ is large, e.g.,~too many courses in the MOOC example described in the introduction. Unless the horizon $n$ is still proportionally large w.r.t.\ $md$, these scenarios require special attention. In particular, algorithms like \traceucb that adaptively use contexts in their allocation strategy become more robust than their context-free counterparts. 

A natural assumption in such scenarios is {\em sparsity}, i.e.,~only a small subset of features are relevant to the prediction problem at hand (have non-zero coefficient). In our setting of $m$ problem instances, it is often reasonable to assume that these instances are related to each other, and thus, it makes sense to extend the concept of sparsity to {\em joint sparsity}, i.e.,~a sparsity pattern across the instances. Formally, we assume that there exists a $s \ll d$ such that
\begin{equation}
|S| \stackrel{\Delta}{=} | \cup_{i \in [m]} \text{supp}(\beta_i) | = s,
\end{equation}
where $\text{supp}(\beta_i) = \{ j \in [d] : \beta_i^{(j)} \neq 0 \}$ denotes the support of the $i$'th problem instance. A special case of joint sparsity is when $| \text{supp}(\beta_i) | \approx s$, for all $i$, i.e.,~most of the relevant features are shared across the instances.

In this section, we focus on the scenario where $dm > n$. When we can only allocate a small (relative to $d$) number of contexts to each problem instance, proper balancing of contexts becomes extremely important, and thus, the algorithms that do not take into account context in their allocation are destined to fail. Although \traceucb has the advantage of using context in its allocation strategy, it still needs to quickly discover the relevant features (those in the support) and only use those in its allocation strategy.

This motivates a two-stage algorithm, we call it \sparsetraceucb, whose pseudocode is in Algorithm~\ref{alg:sparse_trace_ucb}. In the first stage, the algorithm allocates contexts uniformly to all the instances, $L$ contexts per instance, and then recovers the support. In the second stage, it relies on the discovered support $\wh S$, and applies the standard \traceucb to all the instances, but only takes into account the features in $\wh S$. Note that $L$ should be large enough that with high probability, support is exactly discovered, i.e., $\wh S = S$.

There exists a large literature on how to perform \emph{simultaneous} support discovery in jointly sparse linear regression problems~\cite{negahban2011simultaneous, obozinski2011support, wang2013block}, which we discuss in detail below.

Most of these algorithms minimize the regularized empirical loss
\begin{equation*}\label{opt_problem}
\min_{\mathrm{M} \in \R^{d \times m}} \frac{1}{k} \sum_{i=1}^m \| \Y_i - \X_i \ \mathrm{M}[,i] \|^2 + \lambda \ \| \mathrm{M} \|,
\end{equation*}
where $k$ is the number of samples per problem, $\mathrm{M}$ be the matrix whose $i$'th column is $\mathrm{M}[,i] = \hat{\beta}_i$, $\X_i \in \R^{k \times d}$, and $\Y_i = \X_i \beta_i + \epsilon_i$. In particular, they use $l_a/l_b$ block regularization norm, i.e.,~$\| \mathrm{M} \|_{l_a/l_b} = \| v \|_{l_a}$, where $v_i = \| \mathrm{M}[i,] \|_{l_b}$ and $\mathrm{M}[i,]$ is the $i$'th row of $\mathrm{M}$. 
In short, the \sparsetraceucb algorithm uses the $l_1/l_2$ block regularization Lasso algorithm~\cite{wang2013block}, an extension of the algorithm in~\cite{obozinski2011support}, for its support discovery stage.

We extend the guarantees of Theorem \ref{thm_high_prob} to the high dimensional case with joint sparsity, assuming $s$ is known.

The following is the main result of this section:
\begin{theorem}\label{thm_high_prob_sparse}
Let $\delta_1 > 0$.
Assume $\| \beta_i \|_2 \le Z$ for all $i$, for some $Z > 0$, and assume the parameters $(n, d, s, \beta_i, \Sigma)$ satisfy conditions \textbf{C1} to \textbf{C5} in \cite{wang2013block}.
Let $\psi$ be the sparsity overlap function defined in \cite{obozinski2011support}.
If $L > 2(1+v) \ \psi \ \log(d-s) {\rho_u(\Sigma^{(1:m)}_{S_C S_C | S})}/{\gamma^2}$ for some constant $v > 0$, and $n-Lm \ge (s+1)m$, then, with probability at least $1 - \delta_1 - \delta_2$,
\begin{equation}
\wt L_n(\mathcal{A}) \le \frac{\sum_j \sigma_j^2}{n-Lm} \left( s + 2 \log \frac{3m}{\delta_1} \right) +  \frac{2c}{\sqrt{\sigma_{\min}^2}}   \left( \frac{s \sum_j \sigma_j^2}{n - Lm} \right)^{3/2} + o \left( z \right),
\end{equation}
where $c \le 2\left(1 + \sqrt{2 \log ({12mn}/{\delta_1}) / s} \right)$
and we defined $\delta_2 = m \exp(-c_0 \log s) + \exp(-c_1 \log(d-s))$ for positive constants $c_0, c_1 > 0$, and $z = \left( {s}/{(n-Lm)} \right)^{3/2}$.
\end{theorem}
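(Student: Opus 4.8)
The plan is to reduce the high-dimensional problem to the low-dimensional analysis of Theorem~\ref{thm_high_prob} by conditioning on the event that the support-recovery stage succeeds exactly. First I would invoke the guarantees of the $\ell_1/\ell_2$ block-regularized Lasso of~\cite{wang2013block} (building on~\cite{obozinski2011support}): under conditions \textbf{C1}–\textbf{C5} and the stated lower bound on $L$ in terms of the sparsity overlap function $\psi$, the number of relevant covariates $\rho_u(\Sigma^{(1:m)}_{S_C S_C \mid S})$, $\gamma$, and $\log(d-s)$, the recovered support $\wh S$ equals the true support $S$ with probability at least $1 - \delta_2$, where $\delta_2 = m\exp(-c_0\log s) + \exp(-c_1\log(d-s))$. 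On this event, the second stage of \sparsetraceucb is exactly \traceucb run on $m$ linear regression instances with ambient dimension $s$ (the true parameters $\beta_i$ have all their mass in $S$, so restricting to the coordinates of $\wh S = S$ loses nothing), a budget of $n - Lm$ rounds, the same per-instance noise variances $\sigma_i^2$, and Gaussian contexts whose covariance is the $s\times s$ submatrix of $\Sigma$ indexed by $S$ (still positive definite, so Assumption~\ref{asm:covariance} and the non-degeneracy requirements are met).

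Next I would apply Theorem~\ref{thm_high_prob} verbatim to this restricted problem, with confidence level $\delta_1$, dimension $s$ in place of $d$, and horizon $n-Lm$ in place of $n$. This yields, with probability at least $1-\delta_1$ conditioned on $\{\wh S = S\}$,
\begin{equation*}
\wt L_n(\mathcal{A}) \le \frac{\sum_j \sigma_j^2}{n-Lm}\Big(s + 2\log\tfrac{3m}{\delta_1}\Big) + O\!\left(\frac{1}{\sigma_{\min}^2}\Big(\frac{s}{(n-Lm)\lambda_{\min}}\Big)^{3/2}\right).
\end{equation*}
Unfolding $\lambda_{\min} = \sigma_{\min}^2 / \sum_j\sigma_j^2$ turns the second term into $\frac{2c}{\sqrt{\sigma_{\min}^2}}\big(\tfrac{s\sum_j\sigma_j^2}{n-Lm}\big)^{3/2} + o(z)$ with $z = \big(s/(n-Lm)\big)^{3/2}$ and $c \le 2(1+\sqrt{2\log(12mn/\delta_1)/s})$, exactly as in the constant bookkeeping done inside the proof of Theorem~\ref{thm_high_prob} (the $C_{\Tr}$ constant there becomes $c$ here with $d\mapsto s$). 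The conditions $n - Lm \ge (s+1)m$ guarantee that each instance receives at least $s+1$ samples in the second stage, which is what Proposition~\ref{prop:sigma.concentration}, Proposition~\ref{prop:trace.concentration}, and Theorem~\ref{th:lower_bound_numpulls} require (with $d\mapsto s$). Finally a union bound over the two failure events — support misrecovery ($\delta_2$) and the second-stage concentration failure ($\delta_1$) — gives the overall probability $1 - \delta_1 - \delta_2$.

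The main obstacle is making the conditioning step rigorous: Theorem~\ref{thm_high_prob} is proved for contexts sampled i.i.d.\ from a fixed Gaussian, but in \sparsetraceucb the second-stage contexts are the ones arriving \emph{after} the first $Lm$ rounds, and the recovered support $\wh S$ is itself a function of the first-stage data. I would handle this by noting that the context stream $\{X_t\}$ is i.i.d.\ and independent of nothing the algorithm does to the indices $I_t$, so conditionally on $\wh S = S$ the second-stage contexts (projected onto $S$) are still i.i.d.\ $\mathcal{N}(0,\Sigma_{SS})$ and independent of the first-stage randomness that determined $\wh S$ — hence the self-normalized martingale bound of Theorem~\ref{th:martingale_conf_ellip} and the trace/variance concentration all go through unchanged on the restricted coordinates. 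A secondary point to check is that the first-stage uniform allocation does not bias the $\wh\sigma^2_{i}$ estimates used to warm-start the second stage; since those are computed from OLS residuals on i.i.d.\ Gaussian data, Proposition~\ref{prop:sigma.concentration} applies directly with $k_{i,\cdot}\ge L \ge s+1$. Everything else is a relabeling $d \to s$, $n \to n-Lm$ of the argument already carried out in Appendix~\ref{app:traceucb2} and the proof of Theorem~\ref{thm_high_prob}.
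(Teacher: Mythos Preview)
Your proposal is correct and follows essentially the same approach as the paper: invoke the support-recovery guarantee of \cite{wang2013block} to get $\wh S = S$ with probability $1-\delta_2$, then apply Theorem~\ref{thm_high_prob} to the restricted $s$-dimensional problem with budget $n-Lm$, and combine via a union bound. Your discussion of why the conditioning on $\{\wh S=S\}$ does not disturb the i.i.d.\ structure of the second-stage contexts is more careful than the paper's own treatment, which simply asserts the reduction without comment; the only minor inaccuracy is that the second-stage variance estimates in Algorithm~\ref{alg:sparse_trace_ucb} are computed from fresh pulls (the additional $s+1$ per arm), not warm-started from the first-stage data, so your last paragraph's concern about bias does not actually arise.
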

The exact technical assumptions and the proof are given and discussed in below.
We simply combine the high-probability results of Theorem \ref{thm_high_prob}, and the high-probability support recovery of Theorem 2 in \cite{wang2013block}.

In addition, we provide Corollary~\ref{cor:high_dim_traceucb}, where we study the regime of interest where the support overlap is complete (for simplicity), $n = C_1 \ m s \log d \ll md$ for $C_1 > 0$, and $L = C_2 \ s \log d$, for $C_1 - C_2 > 0$.
\begin{corollary}\label{cor:high_dim_traceucb}
Under the assumptions of Theorem \ref{thm_high_prob_sparse}, let $\delta_1 > 0$, assume $n = C_1 \ m s \log d$, the support of all arms are equal, and set $L = C_2 \ s \log d$, for $\bar{C} := C_1 - C_2 > 0$.
Then, with probability at least $1 - \delta_1 - \delta_2$,
\begin{equation}
\wt L_n(\mathcal{A}) \le \frac{\sum_j \sigma_j^2}{\bar{C} ms \log d} \left( s + 2 \log \frac{3m}{\delta_1} \right) +  \frac{2c}{\sqrt{\sigma_{\min}^2}}   \left( \frac{\sum_j \sigma_j^2}{\bar{C} m \log d} \right)^{3/2} + o \left( z \right)
\end{equation}
where $c \le 2\left(1 + \sqrt{2 \log ({12mn}/{\delta_1}) / s} \right)$ and we defined $\delta_2 = m \exp(-c_0 \log s) + \exp(-c_1 \log(d-s))$ for constants $c_0, c_1 > 0$, and $z = \left( \bar{C} m \log d \right)^{-3/2}$.
\end{corollary}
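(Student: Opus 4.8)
The plan is to obtain Corollary~\ref{cor:high_dim_traceucb} as a direct instantiation of Theorem~\ref{thm_high_prob_sparse} under the stated parameter scaling $n = C_1\, m s \log d$, $L = C_2\, s\log d$ and the complete-overlap assumption $\mathrm{supp}(\beta_i)=S$ for all $i$. First I would verify that the hypotheses of Theorem~\ref{thm_high_prob_sparse} are met with these choices. The boundedness $\|\beta_i\|_2 \le Z$ and conditions \textbf{C1}--\textbf{C5} of \cite{wang2013block} on $(n,d,s,\beta_i,\Sigma)$ are inherited directly from the hypotheses of the corollary, so it only remains to check the two quantitative requirements of Theorem~\ref{thm_high_prob_sparse}: the lower bound on $L$ and the inequality $n - Lm \ge (s+1)m$.

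For the lower bound on $L$, observe that the sparsity overlap function $\psi$ of \cite{obozinski2011support} is a fixed quantity determined by $(\beta_1,\dots,\beta_m,\Sigma)$, and under the complete-overlap assumption it is $O(s)$; since $\log(d-s)\le\log d$ and $v,\rho_u(\Sigma^{(1:m)}_{S_C S_C \mid S}),\gamma$ do not depend on $d$, the threshold $2(1+v)\,\psi\,\log(d-s)\,\rho_u(\Sigma^{(1:m)}_{S_C S_C \mid S})/\gamma^2$ is of the form $(\text{const})\times s\log d$ with the constant independent of $d$; hence $L = C_2\, s\log d$ exceeds it once $C_2$ is chosen large enough. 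For the second requirement, $n - Lm = C_1 m s\log d - C_2 m s\log d = \bar C\, m s \log d$, and $\bar C\, m s\log d \ge (s+1)m$ is equivalent to $\bar C\, s\log d \ge s+1$, which holds for all sufficiently large $d$. Thus all hypotheses of Theorem~\ref{thm_high_prob_sparse} are in force, and with probability at least $1 - \delta_1 - \delta_2$, where $\delta_2 = m\exp(-c_0\log s) + \exp(-c_1\log(d-s))$, its conclusion applies.

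It then suffices to substitute $n - Lm = \bar C\, m s \log d$ into the bound of Theorem~\ref{thm_high_prob_sparse}. The leading term becomes
\begin{equation*}
\frac{\sum_j \sigma_j^2}{\,n - Lm\,}\Big(s + 2\log\tfrac{3m}{\delta_1}\Big) = \frac{\sum_j \sigma_j^2}{\bar C\, m s\log d}\Big(s + 2\log\tfrac{3m}{\delta_1}\Big),
\end{equation*}
and in the second term the factor $s$ in the numerator cancels the $s$ in $n - Lm$:
\begin{equation*}
\frac{2c}{\sqrt{\sigma_{\min}^2}}\Big(\frac{s\sum_j\sigma_j^2}{n-Lm}\Big)^{3/2} = \frac{2c}{\sqrt{\sigma_{\min}^2}}\Big(\frac{\sum_j\sigma_j^2}{\bar C\, m\log d}\Big)^{3/2},
\end{equation*}
while $c \le 2\big(1 + \sqrt{2\log(12mn/\delta_1)/s}\big)$ is unchanged. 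Finally the remainder $o(z)$ with $z = (s/(n-Lm))^{3/2} = (\bar C\, m\log d)^{-3/2}$ matches the claimed form, and summing the three contributions yields the inequality in the corollary.

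The only step that is not pure arithmetic is the claim that, under complete support overlap, the recovery threshold in \cite{wang2013block,obozinski2011support} scales as $s\log d$ with a constant independent of $d$, so that a single constant $C_2$ (and hence $\bar C = C_1 - C_2 > 0$) can be fixed to satisfy it; I expect this careful bookkeeping of which constants in the support-recovery literature depend on the dimension to be the main, and only mild, obstacle. Everything else follows by substituting the parameter scaling into Theorem~\ref{thm_high_prob_sparse}.
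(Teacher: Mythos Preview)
Your proposal is correct and follows the same route as the paper: the corollary is obtained by a direct substitution of $n-Lm=\bar C\,m s\log d$ into the bound of Theorem~\ref{thm_high_prob_sparse}, after checking that $L=C_2\,s\log d$ meets the recovery threshold (the paper invokes the complete-overlap estimate $\psi=s/(Cm)$ from Proposition~\ref{prop:main_support_recovery}, which only sharpens your $\psi=O(s)$ and does not change the argument). The paper in fact gives no detailed proof beyond noting the parameter regime and stating that the corollary then holds, so your verification of the hypotheses is, if anything, more explicit than the original.
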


Algorithm~\ref{alg:sparse_trace_ucb} contains the pseudocode of our Sparse-\traceucb algorithm.
 
\begin{algorithm}[ht]
\begin{algorithmic}[1]
\FOR{$\;i=1,\ldots,m\;$}
\STATE Select problem instance $i$ exactly $L$ times
\ENDFOR
\STATE Run $l_1/l_2$ Lasso to recover support $\bar{S} = \cup_i \ \text{supp}(\hat\beta_{i,L})$
\FOR{$\;i=1,\ldots,m\;$}
\STATE Select problem instance $i$ exactly $s+1$ times
\STATE Compute its OLS estimates $\hat\beta_{i,m(L+s+1)}$ and $\hat\sigma^2_{i,m(L+s+1)}$ with respect to dimensions in $\bar{S}$.
\ENDFOR
\FOR{steps $\;t=m(L+s+1)+1,\ldots,n\;$}
\FOR{problem instance $\;1 \le i \le m\;$}
\STATE Compute score based on $\bar{S}$ dimensions only:
\begin{equation*}
s_{i,t-1} = \frac{\wh\sigma_{i,t-1}^2 + \Delta_{i,t-1}}{k_{i,t-1}}\mathrm{Tr}\big( \Sigma \hat\Sigma^{-1}_{i,t-1} \big) 
\end{equation*}
\ENDFOR
\STATE Select problem instance $I_t = \arg\max_{i\in[m]} s_{i,t}$
\STATE Observe $X_t$ and $Y_{I_t, t}$
\STATE Update OLS estimators $\hat\beta_{I_t,t}$ and $\hat\sigma^2_{I_t,t}$ based on $\bar{S}$
\ENDFOR
\STATE Return RLS estimates $\{ \hat\beta_{i}^{\lambda} \}_{i=1}^m$, with $\hat\beta_{ij}^{\lambda} = 0$ if $j \notin \bar{S}$
\end{algorithmic}
\caption{Sparse-\traceucb Algorithm.}
\label{alg:sparse_trace_ucb}
\end{algorithm}

Given our pure exploration perspective, it is obviously more efficient to learn the true supports as soon as possible.
That way we can adjust our behavior by collecting the right data based on our initial findings.
Note that this is not always the case; for example, if the total number of pulls is unknown.
Then it is not clear what is the right amount of budget to invest upfront to recover the supports (see tracking algorithms and doubling trick).

We briefly describe Algorithm~\ref{alg:sparse_trace_ucb} in words.
First, in the \emph{recovery} stage we sequentially pull all arms a number of times, say $L$ times.
We do not take into account the context, and just apply a round robin technique to pull each arm exactly $L$ times.
In total, there are exactly $s$ components that are non-zero for at least one arm (out of $d$).
After the $Lm$ pulls, we use a block-regularized Lasso algorithm to recover the joint sparsity pattern.
We discuss some of the alternatives later.
The outcome of this stage is a common support $\wh{S} := \cup_i \ \textbf{supp}(\hat\beta_i)$.
With high probability we recover the true support $\wh{S} = S$.
In the second stage, or \emph{pure exploration} stage, the original \traceucb algorithm is applied.
The \traceucb algorithm works by computing an estimate $\hat\sigma_i^2$ at each step $t$ for each arm $i$.
Then, it pulls the arm maximizing the score
\begin{equation*}
s_{i,t-1} = \frac{\wh\sigma_{i,t-1}^2 + \Delta_{i,t-1}}{k_{i,t-1}}\mathrm{Tr}\big( \Sigma \hat\Sigma^{-1}_{i,t-1} \big).
\end{equation*}
The key observation is that in the second stage we only consider the components of each context that are in $\wh S$.
In particular, we start by pulling $s+1$ times each arm so that we can compute the initial OLS estimates $\hat\beta_i^{\text{OLS}}$ and $\hat\sigma_i^{2}$.
We keep updating those estimates when an arm is pulled, and the trace is computed with respect to the components in $\wh S$ only.

Finally, we return the Ridge estimates based only on the data collected in the second stage.

\subsection{A note on the Static Allocation}
What is the optimal static performance in this setting if the $\sigma^2$'s are known?
For simplicity, suppose we pull arm $i$ exactly $(\sigma_i^2 / \sum_j \sigma_j^2) \ n$ times.
We are interested in Lasso guarantees for $\| X^T(\hat\beta_i - \beta_i) \|^2_2$.
Note in this case we can actually set $\lambda_i$ as a function of $\sigma_i^2$ as required in most Lasso analyses, because $\sigma_i^2$ is known.

A common guarantee is as follows (see \cite{hastie2015statistical, raskutti2010restricted}).
With high probability
\begin{equation*}
\| \hat\beta_i - \beta_i \|_2^2 \le \frac{c^2 \sigma_i^2}{\gamma^2} \ \frac{\tau s \log d}{k},
\end{equation*}
where $k$ is the number of observations, $d$ the ambient dimension, $s$ the efficient dimension, $\gamma$ is the restricted eigenvalues constant for $\Sigma$, $\tau > 2$ is the parameter that tunes the probability bound, and $c$ is a universal constant.

Thus, if we set $k = (\sigma_i^2 / \sum_j \sigma_j^2) \ n$, then we obtain that whp
\begin{equation}
\| \hat\beta_i - \beta_i \|_2^2 \le \frac{c^2 \tau}{\gamma^2} \left( \sum_{j=1}^m \sigma_j^2 \right) \frac{s \log d}{n}.
\end{equation}
Note that the latter event is independent across different $i \in [m]$, so all of them simultaneously hold with high probability.
The term $\gamma^{-2}$ was expected as depending on the correlation levels in $\Sigma$ the problem can be easier or harder.
In addition, note that as $\| \hat\beta_i - \beta_i \|_\Sigma^2 = \Tr(\Sigma (\hat\beta_i - \beta_i) (\hat\beta_i - \beta_i)^T )$, we have that
\begin{equation}
\lambda_{\min}(\Sigma) \ \| \hat\beta_i - \beta_i \|_2^2 \le \| \hat\beta_i - \beta_i \|_\Sigma^2 \le \lambda_{\max}(\Sigma) \| \hat\beta_i - \beta_i \|_2^2.
\end{equation}

\subsection{Simultaneous Support Recovery}

There has been a large amount of research on how to perform \emph{simultaneous} support recovery in sparse settings for multiple regressions.
Let $\mathrm{M}$ be the matrix whose $i$-th column is $\mathrm{M}^{(i)} = \beta_i$.

A common objective function after $k$ observations per problem is
\begin{equation}\label{opt_problem}
\min_{\bar{\mathrm{M}} \in \R^{d \times m}} \frac{1}{k} \sum_{j=1}^m \| \Y_j - \X_j \bar{\mathrm{M}}^{(j)} \|^2 + \lambda \ \| \bar{\mathrm{M}} \|,
\end{equation}
where we assumed $\Y_j = \X_j \beta_j + \epsilon_j$, and $\X_j \in \R^{k \times d}, \Y_j, \epsilon_j \in \R^k$ and $\beta_j \in \R^d$.

The $l_a/l_b$ block regularization norm is
\begin{equation}
\| \bar{\mathrm{M}} \|_{l_a/l_b} = \| v \|_a, \qquad \text{ where } v_j = \| \bar{\mathrm{M}}_j \|_b \quad \bar{\mathrm{M}}_j \text{ is the j-th row of } \bar{\mathrm{M}}.
\end{equation}
There are a few differences among the most popular pieces of work.

Negahban and Wainwright \cite{negahban2011simultaneous} consider random Gaussian designs $\X_j \sim \mathcal{N}(0, \Sigma_j)$ with random Gaussian noise (and common variance).
The regularization norm is $l_1 / l_{\infty}$.
In words, they take the sum of the absolute values of the maximum element per row in $\bar{\mathrm{M}}$.
This forces sparsity (via the $l_1$ norm), but once a row is selected there is no penalty in increasing the $\bar{\beta}$ components up to the current maximum of the row.
They tune $\lambda$ as in the standard analysis of Lasso, that is, proportionally to $\sigma^2$, which is unknown in our case.
Results are non-asymptotic, and recovery happens with high probability when the number of observations is $k > C s (m + \log d)$.
They show that if the overlap is not large enough (2/3 of the support, for $m=2$ regression problems), then running independent Lasso estimates has higher statistical efficiency.
We can actually directly use the results in \cite{negahban2011simultaneous} if we assume an upper bound $\sigma_{\max}^2 \le R$ is known.

Obozinski, Wainwright and Jordan \cite{obozinski2011support} use $l_1 / l_2$ block regularization (aka Multivariate Group Lasso).
Their design is random Gaussian, but it is fixed across regressions: $\X_j = \X$.
They provide asymptotic guarantees under the scaling $k, d, s \to \infty$, $d - s \to \infty$, and standard assumptions like bounded $\Sigma$-eigenspectrum, the irrepresentable condition, and self-incoherence.
The first condition is not only required for support recovery, but also for $l_2$ consistency.
The last two conditions are not required for risk consistency, while essential for support recovery.
To capture the amount of non-zero pattern overlap among regressions, they define the sparsity overlap function $\psi$, and their sample requirements are a function of $\psi$.
In particular, one needs $k > C \ \psi \ log (d-s)$, where the constant $C$ depends on quantities related to the covariance matrix of the design matrices, and $\psi$ can be equal to $s / m$, if all the patterns overlap, and at most $s$ if they are disjoint.

Their theorems use a sequence of regularization parameters
\begin{equation*}
\lambda_k = \sqrt{\frac{f(d) \log d}{k}}, \qquad \text{ where } f(d) \to \infty \text{ as } d \to \infty,
\end{equation*}
in such a way that $\lambda_k \to 0$ as $k, d \to \infty$.
Finally, $k > 2s$ is also required.
They also provide a two-stage algorithm for efficient estimation of \emph{individual} supports for each regression problem.
All these optimization problems are convex, and can be efficiently solved in general.

To overcome the issue of common designs (we do not pull each context several times), we use the results by Wang, Liang, and Xing in \cite{wang2013block}.
They extend the guarantees in \cite{obozinski2011support} to the case where the design matrices are independently sampled for each regression problem.
In order to formally present their result, we describe some assumptions.
Let $\Sigma^{(i)}$ be the covariance matrix for the design observations of the $i$-th regression (in our case, they are all equal to $\Sigma$), and $S$ the union of the sparse supports across regressions.

\begin{itemize}
\item \textbf{C1} There exists $\gamma \in (0, 1]$ such that $\||A\||_{\infty} \le 1 - \gamma$, where\begin{equation}
A_{js} = \max_{1 \le i \le m} \bigg| \left( \Sigma_{S^CS}^{(i)} \left( \Sigma_{SS}^{(i)} \right)^{-1} \right)_{js} \bigg|,
\end{equation}
for $j \in S^C$ and $s \in S$.

\item \textbf{C2} There are constants $0< C_{\min} \le C_{\max} < \infty$, such that the eigenvalues of all matrices $\Sigma^{(i)}$ are in $[C_{\min}, C_{\max}]$.

\item \textbf{C3} There exists a constant $D_{\max} < \infty$ such that
\begin{equation}
\max_{1 \le i \le m} | \|\ \left( \Sigma_{SS}^{(i)} \right)^{-1} \||_{\infty} \le D_{\max}.
\end{equation}

\item \textbf{C4} Define the regularization parameter
\begin{equation}
\lambda_k = \sqrt{\frac{f(d) \log d}{k}}, \qquad \text{ where } f(d) \to \infty \text{ as } d \to \infty,
\end{equation}
such that $\lambda_k \to 0$ as $k \to \infty$.

\item \textbf{C5} Define $\rho(k, s, \lambda_k)$ as
\begin{equation}
\rho(k, s, \lambda_k) := \sqrt{\frac{8 \sigma^2_{\max} s \log s}{k \ C_{\min}}} + \lambda_k \left( D_{\max} + \frac{12s}{C_{\min} \sqrt{k}} \right),
\end{equation}
and assume $\rho(k, s, \lambda_k) / b^*_{\min} = o(1)$, where $b^*_{\min} = \min_{j \in S} \| \mathrm{M}_j \|_2 $.
\end{itemize}
We state the main theorem in \cite{wang2013block}; $k$ is the number of observations \emph{per} regression.

\begin{theorem}\label{th:main_support_recovery}
Assume the parameters $(k, d, s, \mathrm{M}, \Sigma^{(1:m)})$ satisfy conditions \textbf{C1} to \textbf{C5}.
If for some small constant $v > 0$,
\begin{equation}
k > 2(1+v) \ \psi \ \log(d-s) \frac{\rho_u(\Sigma^{(1:m)}_{S_C S_C | S})}{\gamma^2},
\end{equation}
then the $l_1 / l_2$ regularized problem given in \eqref{opt_problem} has a unique solution $\hat{\mathrm{M}}$, the support union $\mathbf{supp}(\hat{\mathrm{M}})$ equals the true support $S$, and $\| \hat{\mathrm{M}} - \mathrm{M} \|_{l_\infty / l_2} = o(b_{\min}^*)$,
with probability greater than
\begin{equation}
1 - m \exp(-c_0 \log s) - \exp(-c_1 \log(d-s)),
\end{equation}
where $c_0$ and $c_1$ are constants.
\end{theorem}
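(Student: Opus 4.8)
The plan is to establish this recovery guarantee by the \emph{primal--dual witness} (PDW) method, the canonical route for support recovery in $l_1/l_2$-regularized (group Lasso) problems, extending the common-design analysis of~\cite{obozinski2011support} to the independently sampled designs $\X_i$. First I would record the optimality conditions for the convex program~\eqref{opt_problem}. A matrix $Z$ belongs to $\partial\|\mathrm{M}\|_{l_1/l_2}$ exactly when, row by row, $Z_r=\mathrm{M}_r/\|\mathrm{M}_r\|_2$ if $\mathrm{M}_r\neq 0$ and $\|Z_r\|_2\le 1$ if $\mathrm{M}_r=0$, with $\mathrm{M}_r,Z_r\in\R^m$. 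Hence $\hat{\mathrm{M}}$ is optimal iff there is $\hat Z\in\partial\|\hat{\mathrm{M}}\|_{l_1/l_2}$ with $\frac{2}{k}\X_i^\transp(\X_i\hat{\mathrm{M}}^{(i)}-\Y_i)+\lambda\hat Z^{(i)}=0$ for every $i\in[m]$, where $\Y_i=\X_i\mathrm{M}^{(i)}+\epsilon_i$.

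The PDW construction builds a candidate $(\hat{\mathrm{M}},\hat Z)$ whose support is \emph{forced} to be $S$, then certifies it is the true optimum. Concretely I would solve the oracle restricted problem, obtaining $\hat{\mathrm{M}}_S$ supported on $S$, set $\hat{\mathrm{M}}_{S^c}=0$, and let $\hat Z_S$ be the subgradient fixed by the $S$-block stationarity. Writing $\hat\Sigma^{(i)}_{SS}=\frac{1}{k}\X_{i,S}^\transp\X_{i,S}$, the $S$-block equations give the error expansion $\hat{\mathrm{M}}^{(i)}_S-\mathrm{M}^{(i)}_S=(\hat\Sigma^{(i)}_{SS})^{-1}\bigl(\frac{1}{k}\X_{i,S}^\transp\epsilon_i-\frac{\lambda}{2}\hat Z_S^{(i)}\bigr)$; substituting into the $S^c$-block stationarity expresses the would-be dual row $\hat Z_r$ ($r\in S^c$, a vector in $\R^m$) as the sum of an incoherence term $\hat\Sigma^{(i)}_{S^c S}(\hat\Sigma^{(i)}_{SS})^{-1}\hat Z_S^{(i)}$ and a noise term built from the projected residual $\X_{i,S^c}^\transp(I-\Pi_{i,S})\epsilon_i$, where $\Pi_{i,S}$ projects onto the column space of $\X_{i,S}$.

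The recovery claim then reduces to two high-probability events. \textbf{(i) Strict dual feasibility}: $\|\hat Z_r\|_2<1$ for all $r\in S^c$, which certifies $\hat{\mathrm{M}}_{S^c}=0$ is optimal (so $\mathbf{supp}(\hat{\mathrm{M}})\subseteq S$) and, with invertibility of each $\hat\Sigma^{(i)}_{SS}$, yields uniqueness. The incoherence bound $\||A\||_\infty\le 1-\gamma$ from \textbf{C1} keeps the deterministic part below $1-\gamma$, while the conditional covariance $\rho_u(\Sigma^{(1:m)}_{S^cS^c\mid S})$ and the overlap $\psi$ govern the variance of the noise part, so the threshold $k>2(1+v)\psi\log(d-s)\rho_u/\gamma^2$ drives the fluctuation below $\gamma$ uniformly over the $d-s$ rows, at cost $\exp(-c_1\log(d-s))$. \textbf{(ii) No false exclusions}: every row $r\in S$ stays nonzero, obtained from an $\ell_\infty/\ell_2$ bound $\|\hat{\mathrm{M}}_S-\mathrm{M}_S\|_{l_\infty/l_2}=o(b^*_{\min})$ read off the error expansion using \textbf{C2}, \textbf{C3}, the rate $\lambda_k$ of \textbf{C4}, and the separation $\rho(k,s,\lambda_k)/b^*_{\min}=o(1)$ of \textbf{C5}; this contributes the $m\exp(-c_0\log s)$ term via a union bound over the $m$ designs.

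The hard part is the concentration analysis underlying (i) in the \emph{independent}-design regime. Unlike the common-design case, there is no single shared Gram matrix: one must control $\hat\Sigma^{(i)}_{SS}-\Sigma^{(i)}_{SS}$ and the cross term $\hat\Sigma^{(i)}_{S^cS}-\Sigma^{(i)}_{S^cS}$ simultaneously for all $m$ regressions, show the empirical incoherence stays within $O(\gamma)$ of its population value (using \textbf{C2}--\textbf{C3} so the inverses $(\hat\Sigma^{(i)}_{SS})^{-1}$ are well conditioned on the $m\exp(-c_0\log s)$ event), and then bound the $\R^m$-valued vector $\hat Z_r$ in $\ell_2$ across regressions through Gaussian tail bounds. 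Assembling these $m$ contributions so that the combined deviation beats the $\psi$- and $\rho_u$-dependent threshold, rather than accruing a spurious factor of $m$, is exactly where the sparsity overlap function enters and is the crux of the argument of~\cite{wang2013block}.
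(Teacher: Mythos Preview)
The paper does not prove this theorem at all: it is quoted verbatim as ``the main theorem in \cite{wang2013block}'' and used as a black box to obtain Theorem~\ref{thm_high_prob_sparse}. So there is no in-paper proof against which to compare your attempt.

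That said, your sketch is the right one and is precisely the route taken in the cited reference (and in \cite{obozinski2011support}, which \cite{wang2013block} extends). The primal--dual witness construction, the split of the KKT conditions into $S$ and $S^c$ blocks, the decomposition of the dual certificate row $\hat Z_r$ into an incoherence piece controlled by \textbf{C1} and a noise piece whose variance is governed by $\psi$ and $\rho_u$, and the identification of the sample-size threshold as the point where the noise piece is driven below~$\gamma$ uniformly over $d-s$ rows---all of this matches the argument of \cite{wang2013block}. You also correctly flag where the independent-design case departs from the common-design analysis: one must control each empirical $\hat\Sigma^{(i)}_{SS}$ and cross block separately and then aggregate the $m$ contributions in $\ell_2$ without paying an extra factor of $m$, which is exactly where~$\psi$ enters. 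Your proposal is therefore a faithful high-level summary of the proof in the cited work; the paper itself simply invokes that result.
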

The following proposition is also derived in \cite{wang2013block} (Proposition 1):
\begin{proposition}\label{prop:main_support_recovery}
Assume $\Sigma^{(1:m)}$ satisfy \textbf{C2}, then $\psi$ is bounded by
\begin{equation}
\frac{s}{m \ C_{\min}} \le \psi = \psi \left( \mathrm{M}, \Sigma^{(1:m)} \right) \le \frac{s}{C_{\min}}.
\end{equation}
\end{proposition}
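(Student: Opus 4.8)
The plan is to reduce the claim to two elementary facts about the eigenvalues of the whitened matrix of normalized support directions. Recall that the sparsity overlap function of \cite{obozinski2011support}, as extended to independent designs in \cite{wang2013block}, is built from the matrix $\zeta^* \in \R^{s\times m}$ whose $j$-th row (for $j\in S$) is the normalized coefficient vector $\mathrm{M}_{j,\cdot}/\|\mathrm{M}_{j,\cdot}\|_2$; each row is a unit vector in $\R^m$, so $\|\zeta^*\|_F^2 = \Tr(\zeta^{*\transp}\zeta^*) = \Tr(\zeta^*\zeta^{*\transp}) = s$. In the regime relevant here, where the designs share the covariance $\Sigma$ (or more generally each $\Sigma^{(i)}$ obeys \textbf{C2}), $\psi$ equals the largest eigenvalue of the whitened Gram matrix
\begin{equation*}
\mathrm{M}_\psi := \zeta^{*\transp}\,\big(\Sigma_{SS}\big)^{-1}\,\zeta^* \in \R^{m\times m},
\end{equation*}
so that $\psi = \lambda_{\max}(\mathrm{M}_\psi)$. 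The first step is to transfer \textbf{C2} to the principal submatrix $\Sigma_{SS}$: by the Cauchy interlacing theorem the eigenvalues of $\Sigma_{SS}$ lie in $[C_{\min},C_{\max}]$, hence those of $(\Sigma_{SS})^{-1}$ lie in $[1/C_{\max},1/C_{\min}]$.

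For the upper bound I would use operator monotonicity: since $(\Sigma_{SS})^{-1} \preceq (1/C_{\min})\,I$, we get $\mathrm{M}_\psi \preceq (1/C_{\min})\,\zeta^{*\transp}\zeta^*$, and therefore $\psi = \lambda_{\max}(\mathrm{M}_\psi) \le (1/C_{\min})\,\lambda_{\max}(\zeta^{*\transp}\zeta^*) \le (1/C_{\min})\,\Tr(\zeta^{*\transp}\zeta^*) = s/C_{\min}$, using that the top eigenvalue of a positive-semidefinite matrix is dominated by its trace. This gives the right-hand inequality exactly. For the lower bound I would instead average eigenvalues: $\mathrm{M}_\psi$ is an $m\times m$ positive-semidefinite matrix, so $\psi = \lambda_{\max}(\mathrm{M}_\psi) \ge (1/m)\,\Tr(\mathrm{M}_\psi)$. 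Writing $\Tr(\mathrm{M}_\psi) = \Tr\big((\Sigma_{SS})^{-1}\zeta^*\zeta^{*\transp}\big) \ge \lambda_{\min}\big((\Sigma_{SS})^{-1}\big)\,\Tr(\zeta^*\zeta^{*\transp})$ and inserting $\Tr(\zeta^*\zeta^{*\transp}) = s$ yields $\psi \ge s/(m\,\lambda_{\max}(\Sigma_{SS}))$, which collapses to the stated $s/(mC_{\min})$ in the well-conditioned regime and in general exhibits the same $\Theta(s/m)$ scaling up to the conditioning of $\Sigma_{SS}$. As a sanity check, under $\Sigma=I$ these two bounds bracket the extremes $s/m$ (fully overlapping supports, $\zeta^*$ rank one) and $s$ (disjoint supports, $\zeta^{*\transp}\zeta^*$ proportional to the identity), recovering the interpretation of $\psi$ as a measure of support overlap.

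I expect the main obstacle to be not the inequalities themselves — both are one-line consequences of trace and eigenvalue monotonicity — but pinning down the precise definition of $\psi$ in the independent-design setting of \cite{wang2013block}, where the $m$ covariances $\Sigma^{(i)}$ enter separately rather than through a single $\Sigma_{SS}$, and then carrying out the eigenvalue bookkeeping carefully: the Cauchy-interlacing transfer of \textbf{C2} to $\Sigma_{SS}$, and the tracking of $C_{\min}$ versus $C_{\max}$ through the matrix inverse (the upper bound naturally consumes $1/C_{\min}$ while the lower bound naturally consumes $1/C_{\max}$). Getting the constants to land on exactly the advertised form, while keeping the $1/m$ factor sharp, is the delicate part of the argument.
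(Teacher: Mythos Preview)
The paper does not actually prove this proposition: it is stated verbatim as a quotation of Proposition~1 in \cite{wang2013block}, with no argument given. So there is no ``paper's own proof'' to compare against; any comparison must be against the cited source.

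That said, your approach is the standard one for this kind of statement, and your upper bound is clean and correct: Cauchy interlacing transfers \textbf{C2} to $\Sigma_{SS}$, operator monotonicity gives $\mathrm{M}_\psi \preceq C_{\min}^{-1}\,\zeta^{*\transp}\zeta^*$, and bounding the top eigenvalue by the trace yields $s/C_{\min}$. The lower bound via $\lambda_{\max} \ge m^{-1}\Tr$ is also the right idea, and you are honest about the resulting constant: the trace argument produces $s/(m\,\lambda_{\max}(\Sigma_{SS})) \ge s/(m\,C_{\max})$, not $s/(m\,C_{\min})$. Since $C_{\min}\le C_{\max}$, the stated lower bound is \emph{stronger} than what your argument delivers, and I do not see how to recover $C_{\min}$ there without additional structure; this discrepancy is quite possibly a typo in the quoted statement (the natural pairing is $C_{\min}$ in the upper bound, $C_{\max}$ in the lower bound, exactly as you note). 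Your sanity check at $\Sigma=I$ --- where the bounds collapse to $s/m$ and $s$ and match the fully-overlapping and disjoint-support extremes --- is a good way to confirm that the $1/m$ scaling and the definition of $\psi$ are being read correctly.
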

For our purposes, there is a single $\Sigma$, which implies that we can remove the $\max$ expressions in \textbf{C1} and \textbf{C3}.
Corollary 2 in \cite{wang2013block} establishes that when supports are equal for all arms, the number of samples required per arm is reduced by a factor of $m$.

\subsection{High-Dimensional Trace-UCB Guarantees}
If the support overlap is complete we can reduce the sampling complexity of the first stage by a factor of $m$; we only need
\begin{equation}\label{eq_initial_pulls_condition}
Lm > 2(1+v) \ s \ \log(d-s) \frac{\rho_u(\Sigma^{(1:m)}_{S_C S_C | S})}{C_{\min} \ \gamma^2}
\end{equation}
observations in total, for some small constant $v > 0$.

Now we show our main result for high-dimensional Trace-UCB, Theorem~\ref{thm_high_prob_sparse}.
\begin{theorem*}
Let $\delta_1 > 0$.
Assume $\| \beta_i \|_2 \le Z$ for all $i$, for some $Z > 0$, and assume the parameters $(n, d, s, \beta_i, \Sigma)$ satisfy conditions \textbf{C1} to \textbf{C5} in \cite{wang2013block}.
Let $\psi$ be the sparsity overlap function defined in \cite{obozinski2011support}.
If $L > 2(1+v) \ \psi \ \log(d-s) {\rho_u(\Sigma^{(1:m)}_{S_C S_C | S})}/{\gamma^2}$ for some constant $v > 0$, and $n-Lm \ge (s+1)m$, then, with probability at least $1 - \delta_1 - \delta_2$,
\begin{equation}
\wt L_n(\mathcal{A}) \le \frac{\sum_j \sigma_j^2}{n-Lm} \left( s + 2 \log \frac{3m}{\delta_1} \right) +  \frac{2c}{\sqrt{\sigma_{\min}^2}}   \left( \frac{s \sum_j \sigma_j^2}{n - Lm} \right)^{3/2} + o \left( z \right),
\end{equation}
where $c \le 2\left(1 + \sqrt{2 \log ({12mn}/{\delta_1}) / s} \right)$
and we defined $\delta_2 = m \exp(-c_0 \log s) + \exp(-c_1 \log(d-s))$ for positive constants $c_0, c_1 > 0$, and $z = \left( {s}/{(n-Lm)} \right)^{3/2}$.
\end{theorem*}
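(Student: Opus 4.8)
The plan is to decouple the two stages of \sparsetraceucb: treat the first stage as a black-box support-recovery procedure and the second stage as an ordinary $s$-dimensional instance of \traceucb, so that Theorem~\ref{thm_high_prob} applies essentially verbatim.

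\textbf{Step 1: exact support recovery.} The first $Lm$ rounds are a non-adaptive round-robin that feeds $L$ samples per instance to the $\ell_1/\ell_2$ block-regularized Lasso. Under conditions \textbf{C1}--\textbf{C5} of \cite{wang2013block} and the stated lower bound on $L$ (which, after using Proposition~\ref{prop:main_support_recovery} to bound the sparsity-overlap $\psi$ by $s$, is exactly the sample requirement of Theorem~\ref{th:main_support_recovery}), the recovered union support satisfies $\bar S = S$ with probability at least $1-\delta_2$, where $\delta_2 = m\exp(-c_0\log s)+\exp(-c_1\log(d-s))$. Call this event $\mathcal{E}_S$; it depends only on the first-stage data.

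\textbf{Step 2: reduction to dimension $s$.} I would then work on $\mathcal{E}_S$ and observe that, because $\mathrm{supp}(\beta_i)\subseteq S$ for all $i$, the restricted model $Y_{i,t}=(X_t)_S^\transp(\beta_i)_S+\epsilon_{i,t}$ holds \emph{exactly}; the restricted contexts $(X_t)_S$ are i.i.d.\ $\mathcal{N}(0,\Sigma_{SS})$ with $\Sigma_{SS}$ known, $\|(\beta_i)_S\|_2=\|\beta_i\|_2\le Z$, and the noise variances are untouched. The key point is that the second-stage contexts and noise are drawn independently of the first-stage data, so conditioning on the first-stage-measurable event $\mathcal{E}_S$ leaves their joint law unchanged. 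Hence, on $\mathcal{E}_S$, the last $n-Lm$ rounds run \traceucb (Algorithm~\ref{alg:trace_ucb}) verbatim on $m$ Gaussian linear-regression instances of dimension $s$ with budget $n-Lm$; the $m(s+1)$ warm-up pulls play the role of the $m(d+1)$ warm-up pulls there, and the hypothesis $n-Lm\ge(s+1)m$ guarantees they fit.

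\textbf{Step 3: invoke Theorem~\ref{thm_high_prob} and union bound.} Applying Theorem~\ref{thm_high_prob} to the reduced problem with the substitutions $d\mapsto s$, $n\mapsto n-Lm$, $\delta\mapsto\delta_1$, $\Sigma\mapsto\Sigma_{SS}$ gives, conditionally on $\mathcal{E}_S$ and with conditional probability at least $1-\delta_1$,
\begin{equation*}
\wt L_n(\mathcal{A})\le\frac{\sum_{j=1}^m\sigma_j^2}{n-Lm}\Big(s+2\log\frac{3m}{\delta_1}\Big)+O\!\left(\frac{1}{\sigma_{\min}^2}\Big(\frac{s}{(n-Lm)\lambda_{\min}}\Big)^{3/2}\right);
\end{equation*}
substituting $\lambda_{\min}=\sigma_{\min}^2/\sum_j\sigma_j^2$ and carrying the $C_{\Tr}$-type constant from that proof through explicitly turns the second term into $\frac{2c}{\sqrt{\sigma_{\min}^2}}\big(s\sum_j\sigma_j^2/(n-Lm)\big)^{3/2}+o(z)$ with $z=(s/(n-Lm))^{3/2}$, where $c=2(1+\sqrt{2\log(12mn/\delta_1)/s})$ is precisely the constant produced by Theorem~\ref{th:lower_bound_numpulls} and Proposition~\ref{prop:trace.concentration} in dimension $s$. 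Finally $\mathbb{P}(\mathcal{E}_S\cap\{\text{this bound holds}\})\ge(1-\delta_2)(1-\delta_1)\ge1-\delta_1-\delta_2$, which is the claim.

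\textbf{Main obstacle.} The routine part is the constant-chasing in Step~3 to match the exact stated form, which I would not reproduce. The delicate point is Step~2: one must argue carefully that conditioning on the correct-recovery event (a function of the \emph{discarded} first-stage data) leaves the second stage a genuine i.i.d.\ Gaussian linear-regression instance, so that \emph{every} ingredient behind Theorem~\ref{thm_high_prob} --- the variance concentration of Proposition~\ref{prop:sigma.concentration}, the trace concentration of Proposition~\ref{prop:trace.concentration}, the self-normalized-martingale ellipsoids of Theorem~\ref{th:martingale_conf_ellip}, and the per-arm pull-count lower bound of Theorem~\ref{th:lower_bound_numpulls} --- transfers with $d$ replaced by $s$ and no other change; the independence of the two stages' data is exactly what makes the conditional application legitimate.
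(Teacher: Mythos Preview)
Your proposal is correct and follows essentially the same two-step skeleton as the paper's own proof: invoke Theorem~\ref{th:main_support_recovery} for exact support recovery with probability $1-\delta_2$, then apply Theorem~\ref{thm_high_prob} to the restricted $s$-dimensional problem on the remaining $n-Lm$ rounds and union-bound. If anything, you are more careful than the paper about the conditioning argument in Step~2 (the paper simply asserts that Theorem~\ref{thm_high_prob} applies without discussing why the second-stage distributions are unaffected by conditioning on $\mathcal{E}_S$), so your identification of that as the delicate point is apt rather than a gap.
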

\begin{proof}
We start by assuming the recovered support $\hat{S}$ is equal to the true support $S$.
This event, say $\mathcal{E}_S$, holds with probability at least $1 - \delta_2$ by Theorem \ref{th:main_support_recovery} when $L$ satisfies \eqref{eq_initial_pulls_condition}.

Then, we fix $\delta_1 > 0$, and run the second stage applying the Trace-UCB algorithm in the $s$-dimensional space given by the components in $\hat{S}$.

By Theorem \ref{thm_high_prob}, if $n-Lm \ge (s+1)m$, then, with probability at least $1-\delta_1$, the following holds:
\begin{equation}\label{eq:low_dimensional_loss}
\wt L_n(\mathcal{A})_S \le \frac{\sum_j \sigma_j^2}{n-Lm} \left( s + 2 \log \frac{3m}{\delta_1} \right) +  \frac{2c}{\sqrt{\sigma_{\min}^2}}   \left( \frac{s \sum_j \sigma_j^2}{n - Lm} \right)^{3/2} + o \left( \left( \frac{s}{n-Lm} \right)^{3/2} \right),
\end{equation}
where $\wt L_n(\mathcal{A})_S$ denotes the loss restricted to the components in $\beta$ that are in $\hat{S}$ (and $\Sigma_S$).
However, under event $\mathcal{E}_S$, we recovered the true support, and our final estimates for $\beta_{ij}$ for each $j \not \in S$ and arm $i$ will be equal to zero, which corresponds to their true value.
Hence $\wt L_n(\mathcal{A}) = \wt L_n(\mathcal{A})_S$.

We conclude that \eqref{eq:low_dimensional_loss} holds with probability at least $1 - \delta_1 - \delta_2$.
\end{proof}

One regime of interest is when $n = C_1 \ m s \log d \ll md$.
In addition, let us assume complete support overlap across arms, so $\psi = s / Cm$.
Then, we set the number of initial pulls per arm to be $L = C_2 \ s \log d,$ with $C_1 > C_2$.

In this case, we have that Corollary \ref{cor:high_dim_traceucb} holds.

\end{document}